\theoremstyle{plain}
\newtheorem{theorem}{Theorem}[section]
\newtheorem{proposition}[theorem]{Proposition}
\theoremstyle{definition}
\newtheorem{assumption}[theorem]{Assumption}
\theoremstyle{remark}
\newtheorem{remark}[theorem]{Remark}
\newcommand{\remove}[1]{}
\newtheorem{claim}{Claim}
\newtheorem*{proof*}{Proof}
\newcommand{\prob}[1]{\mathbb{P}\left(#1\right)}
\begin{document}
\twocolumn[
\icmltitle{Near Optimal Best Arm Identification for Clustered Bandits}

\icmlsetsymbol{equal}{*}

\begin{icmlauthorlist}
\icmlauthor{Yash}{equal,cminds}
\icmlauthor{Nikhil Karamchandani}{equal,ee}
\icmlauthor{Avishek Ghosh}{equal,cse}
\end{icmlauthorlist}

% Affiliation list
\icmlaffiliation{cminds}{C-MInDS, IIT Bombay, India}
\icmlaffiliation{ee}{Department of Electrical Engineering, IIT Bombay, India}
\icmlaffiliation{cse}{Department of Computer Science and Engineering, IIT Bombay, India}

% Corresponding author info
\icmlcorrespondingauthor{Yash}{yashiitb2020@gmail.com}
\icmlcorrespondingauthor{Nikhil Karamchandani}{nikhilk@ee.itb.ac.in}
\icmlcorrespondingauthor{Avishek Ghosh}{avishek\_ghosh@iitb.ac.in}

% You may provide any keywords that you
% find helpful for describing your paper; these are used to populate
% the "keywords" metadata in the PDF but will not be shown in the document
\icmlkeywords{Machine Learning, ICML}

\vskip 0.3in
]

% \title{Federated Learning in Clustered Multi-Arm Bandits\\
% }

% \author{\IEEEauthorblockN{1\textsuperscript{st} Given Name Surname}
% \IEEEauthorblockA{\textit{dept. name of organization (of Aff.)} \\
% \textit{name of organization (of Aff.)}\\
% City, Country \\
% email address or ORCID}
% \and
% \IEEEauthorblockN{2\textsuperscript{nd} Given Name Surname}
% \IEEEauthorblockA{\textit{dept. name of organization (of Aff.)} \\
% \textit{name of organization (of Aff.)}\\
% City, Country \\
% email address or ORCID}
% \and
% \IEEEauthorblockN{3\textsuperscript{rd} Given Name Surname}
% \IEEEauthorblockA{\textit{dept. name of organization (of Aff.)} \\
% \textit{name of organization (of Aff.)}\\
% City, Country \\
% email address or ORCID}
% \and
% \IEEEauthorblockN{4\textsuperscript{th} Given Name Surname}
% \IEEEauthorblockA{\textit{dept. name of organization (of Aff.)} \\
% \textit{name of organization (of Aff.)}\\
% City, Country \\
% email address or ORCID}
% \and
% \IEEEauthorblockN{5\textsuperscript{th} Given Name Surname}
% \IEEEauthorblockA{\textit{dept. name of organization (of Aff.)} \\
% \textit{name of organization (of Aff.)}\\
% City, Country \\
% email address or ORCID}
% \and
% \IEEEauthorblockN{6\textsuperscript{th} Given Name Surname}
% \IEEEauthorblockA{\textit{dept. name of organization (of Aff.)} \\
% \textit{name of organization (of Aff.)}\\
% City, Country \\
% email address or ORCID}
% }

%\maketitle

\printAffiliationsAndNotice{\icmlEqualContribution} % otherwise use the standard text.

\vspace{-5mm}
\begin{abstract}
This work investigates the problem of best arm identification for multi-agent multi-armed bandits. We consider $N$ agents grouped into $M$ clusters, where each cluster solves a stochastic bandit problem. The mapping between agents and bandits is \textit{a priori} unknown. Each bandit is associated with $K$ arms, and the goal is to identify the best arm for each agent under a $\delta$-probably correct ($\delta$-PC) framework, while minimizing sample complexity and communication overhead. We propose two novel algorithms: \emph{Clustering then Best Arm Identification} (\texttt{Cl-BAI}) and \emph{Best Arm Identification then Clustering} (\texttt{BAI-Cl}). \texttt{Cl-BAI} employs a two-phase approach that first clusters agents based on the bandit problems they are learning, followed by identifying the best arm for each cluster. \texttt{BAI-Cl} reverses the sequence by identifying the best arms first and then clustering agents accordingly. Both algorithms exploit the successive elimination framework to ensure computational efficiency and high accuracy. Theoretical analysis establishes $\delta$-PC guarantees for both methods, derives bounds on their sample complexity, and provides a lower bound for the problem class. Moreover, when $M$ is small (a constant), we show that the sample complexity of (a variant of) \texttt{BAI-Cl} is (order-wise) minimax optimal. Experiments on synthetic and real-world (Movie Lens, Yelp) data demonstrates the superior performance of the proposed algorithms in terms of sample and communication efficiency, particularly in settings where $M \ll N$.
\end{abstract}
\vspace{-6mm}
\section{Introduction}
\label{sec:intro}
\vspace{-1.5mm}
Multi armed bandits (MAB) \cite{lattimore2020bandit} has become a classical framework for modeling sequential learning as it carefully captures the exploration-exploitation dilemma. It has shown great success in applications like advertisement (Ad) placement, clinical trials, and recommendation system 
(see \cite{lattimore2020bandit,lai1985asymptotically}). In the past decade, there has been an enormous increase in the amount of processed data to the extent that it has become pivotal to distribute the learning process and leverage collaboration among multiple agents. 

In lieu of this, \cite{shi2021federated} introduced Federated Multi Armed Bandit (F-MAB), where we have $N$ agents and a central learner; the agents can only talk to one another through the central learner. This framework is particularly interesting when the (sequential) data is observed in a distributed fashion and the action is also taken by the agents individually. This is a highly decentralized paradigm with lots of challenges (see \cite{shi2021federatedone,li2022privacy,saday2022federated}). In this paper, we propose and analyze learning algorithms that aim to address one of the major challenges in F-MAB--heterogeneity across agents.

In F-MAB, the problem of heterogeneity naturally arises since the preferences of different agents may not be identical. In the movie recommendation example,  different agents prefer different genres of movies like comedy, drama, action etc. Hence, the recommendation platform needs to identify agents based on their preferences and suggest movies accordingly. A similar situation pops up in Ad placement, where the type of Ads  shown to different people might be based on their taste. Moreover, in social recommendation platforms like Yelp, this heterogeneity effect needs to be addressed for better restaurant recommendation.

In this work, we model the heterogeneity of the agents through clustering. Note that clustering is a canonical way to group \emph{similar} agents for better collaboration. We consider a multi-agent multi-armed bandit problem with $N$ agents, grouped into $M$ clusters (which are \emph{a priori} unknown). All the agents have access to a common collection of $K$ arms, i.e., the set of arms is common for all $N$ agents across the $M$ clusters. Each cluster $m \in [M]$\footnote{For a positive $r$, we denote $[r] = \{1,2,\ldots,r \}$} is trying to learn a stochastic bandit problem with best arm $k^*_m$. Hence, all the agents belonging to cluster $m$ share the (unique) best arm $k^*_m$, and agents belonging to different clusters will have different best arms, i.e., if agents $i_1$ and $i_2$ belong in clusters $m_1$ and $m_2$ (with $m_1 \neq m_2$), we have $k^*_{m_1} \neq k^*_{m_2}$.

Federated Bandits (F-MAB) has received a lot of interest in the past few years. In \cite{hillel2013distributed, tao2019collaborative,karpov2020collaborative}, the authors consider distributed pure exploration with $N$ agents learning the same bandit problem, with some communication allowed amongst them and study the tradeoff between the number of arm pulls needed per agent and the number of rounds of communication. Moreover, in \cite{shi2021federated, reda2022near, chen2023federated} the \textit{federated pure exploration} setting is studied where multiple agents are learning different bandit problems (i.e., each agent has its own associated mean reward vector for the arms) and need to communicate with a central server to learn the arm with the highest sum of mean rewards across the agents. The setting of \cite{reddy2022almost} is similar to above works (i.e., unstructured with no clustering), where the goal is to find not just the global best arm, but also the local best arms for each agent in a communication efficient manner. Furthermore, \cite{shi2021federated, zhu2021federated} consider the federated bandits setup within a regret minimization framework.

In this paper, we address the problem of \emph{best arm identification} (BAI) for all $N$ agents in F-MAB in a heterogeneous (clustered) setup. We propose and analyze \emph{Successive Elimination} based learning algorithms for this task. Our algorithms are efficient in terms of sample complexity (the number of pulls) as well as communication cost between the agents and the central leaner (which is desirable in F-MAB, see \cite{reddy2022almost}).

Clustering in F-MAB also has a rich literature. In \cite{gentile2014online, korda2016distributed, cherkaoui2023clustered, ban2021local} the authors study  regret minimization in a clustered linear bandits framework. Also \cite{pal2023optimal} studies regret minimization where agents are divided into clusters and agents in the same cluster have the same mean rewards vector. The paper employs techniques from online matrix completion under an incoherence condition. Another line of works (see \cite{yang2024optimal, yavas2025general, thuot2024active}) assumes that each arm pull generates a vector feedback and arms in the same cluster have the same mean reward vector.

Perhaps the work closest to us is \cite{chawla2023collaborative}. Here, agents are grouped into roughly equal sized clusters and all agents in one cluster are solving the same bandit, with the goal being to minimize an appropriately defined cumulative group regret. It is assumed that agents form a graph and can talk to one another through a \emph{gossip} style protocol. Although the cluster structure here is similar to ours, \cite{chawla2023collaborative} studies group regret whereas we focus on the sample complexity for best arm identification. Moreover, \cite{chawla2023collaborative} allows gossip style communication protocol which is prohibited in our F-MAB setup. Finally, \cite{mitra2021exploiting} uses a similar setting and identifies the best arm for a single bandit instance by several agents, each of which can only access a subset of the arms, thus necessitating communication.  
\vspace{-2mm}
\subsection{Our Contributions}
\emph{Algorithm Design:} We propose and analyze two novel algorithms; (i) \emph{Clustering then Best Arm Identification} (\texttt{Cl-BAI}) and (ii) \emph{Best Arm Identification then Clustering} (\texttt{BAI-Cl}). 
%Both of these algorithms are two-phase with \emph{clustering} and \emph{best arm identification} (BAI). 
We use \emph{successive elimination} for clustering and BAI primarily because of its simplicity and easy-tuning ability. We remark that other algorithms may also be used for these in our framework. Both the algorithms judiciously pull arms so that both clustering and BAI can be done in a sample efficient manner. Our algorithms are also efficient in terms of communication cost (to be defined shortly) between the agents and the central learner.

\emph{Theoretical Guarantees:} For a fixed confidence $\delta \in (0,1)$, we obtain the sample complexity for \texttt{CL-BAI} and \texttt{BAI-CL} for identifying the best arm for all $N$ agents. Leveraging a separability condition (for identifiability) across clusters, we analyze \texttt{CL-BAI}. On the other hand, for \texttt{BAI-CL}, using a probabilistic argument, similar to the classical coupon collector problem, we first identify representatives from each cluster, and then judiciously construct a subset of candidate best arms to reduce the number of arm-pulls. We characterize the benefits of \texttt{BAI-CL} over \texttt{CL-BAI} rigorously. We also study a variation of \texttt{BAI-CL}, namely \texttt{BAI-CL++}.

\emph{Lower Bound and Optimality:} Considering a large class of problem instances and using change-of-measure style arguments, we obtain a minimax lower bound over the class of \emph{all} learning algorithms. An interesting feature of our lower bound is that it is the maximum of two terms, each corresponding to a different sub-task
which any feasible scheme should be able to complete; one
being identifying the set of best arms across the $M$ bandits
and the second being identifying for each agent the index
of the bandit problem that it is learning. Finally, we show that if the number of clusters, $M$, is small (constant), the algorithm  \texttt{BAI-CL++} is order-wise minimax optimal in terms of sample complexity.

\emph{Experiments:} We validate the theoretical findings through extensive experiments, both on synthetic and real-world datasets. We find that our proposed schemes are able to efficiently cluster and significantly reduce the overall sample complexity. For example, in a movie recommendation application with 100 users, clustered into 6 different age groups, each with different preferences derived from the MovieLens-1M dataset, \texttt{BAI-Cl++} is able to provide a {72}\% improvement in the sample complexity over a naive cluster-oblivious scheme. A  65\% improvement is observed in a similar experiment conducted with the Yelp dataset.

\vspace{-3mm}
\section{Problem Setup}
\vspace{-1mm}
% . We will assume that $\mathcal{M}$ is a fixed but unknown mapping.

% \textbf{AG: I think we assume that $\mathcal{M}$ is fixed but unknown; the randomization only comes in lower bound } 
% We will assume that $\mathcal{M}$ is a {\color{red}random mapping}, i.e., each agent $i$ picks its associated bandit problem $\mathcal{M}(i)$ independently and uniformly at random. 
We have $N$ agents, each of which is trying to learn one out of $M$ stochastic bandit problems. Let $\mathcal{M}: [N] \rightarrow [M]$ denote the mapping from the set of agents to the set of bandits which is not known apriori. Each of the $M$ bandit problems  is associated with a common collection of $K$ arms. For each $m \in [M], k\in [K]$, arm $k$ in bandit $m$ is associated with a reward distribution $\Pi_{m,k}$, which we will assume to be $1$-subGaussian\footnote{A random variable $X$ is $\sigma$-subGaussian if, for any $t > 0$, $\mathbb{P} \left(|X - \mathbb{E}[X]|>t\right) \leq 2\exp \left(-t^2/2\sigma^2 \right).$} with a priori unknown mean $\mu_{m,k} \in \mathbb{R}$. Each pull of an arm results in a random reward, drawn independently from the corresponding reward distribution. Furthermore, we define the best arm $k^\ast_m$ for bandit $m$
as the arm with the largest mean amongst the arms of bandit $m$, i.e., $k^\ast_m := \displaystyle \arg\max_k \, \mu_{m,k};$ we assume that $k^\ast_m$ is unique for each $m$. Finally, for each bandit $m$, we will denote the gap between the mean rewards of the best arm $k^\ast_m$ and another arm $j$ as 
$\Delta_{m,j} = \mu_{m,k^\ast_m} - \mu_{m,j}$; and let $\Delta_{m,k^\ast_m} = \min_{j \neq k^\ast_m} \Delta_{m,j}$. 

We will make the following assumption throughout regarding the underlying reward distributions of the $M$ bandits.  
\begin{assumption}
\label{keyassumption1}
$\exists \ \eta > 0$ such that for any two different bandits $a$,$b$, the best arm $k^\ast_a$ for bandit $a$ performs at least $\eta$ worse under bandit $b$ than the corresponding best arm $k^\ast_b$, i.e, $\mu_{b,k^\ast_b} - \mu_{b,k^\ast_a} \geq \eta , \, \forall a,b \in [M], \,\,a \neq b$.
\end{assumption}
%
%\textbf{AG: I think the separation should have a different notation. Gaps within Bandits are $\Delta$. The separation should be something different, like $\eta,\alpha$?}

The assumption above is natural for several settings and encodes a certain form of `separability' amongst the different bandit problems; in particular, the above assumption implies that each of the $M$ bandit instances have a different best arm and hence $K \ge M$. As we will see later, this assumption enables us to efficiently match agents with the bandit problem they are solving.

Thus, an instance of the our problem is defined by $\mathcal{I} = \left([N], [M], [K], \mathcal{M},\Pi\right)$, where $\Pi = (\Pi_{m,k}, m \in [M],k\in [K])$. There is a learner whose goal is to identify the best arm $k^\ast_{\mathcal{M}(i)}$ for each agent $i$. To accomplish this, the learner can use an online algorithm, say $\mathcal{A}$, which at each time can either choose an agent and an arm to sample based on past observations; or decide to stop and output an estimated collection of best arms given by $(O_1, O_2, \ldots, O_N)$. Given an error threshold $\delta \in (0,1),$ we say that the algorithm $\mathcal{A}$ is $\delta$-probably correct ($\delta$-PC) if, for any underlying problem instance $\mathcal{I}$, the probability that algorithm output is incorrect is at most~$\delta$. More formally, denoting the total number of pulls (random) before stopping time of algorithm~$\mathcal{A}$ on instance $\mathcal{I}$ by $T^{\mathcal{I}}_{\delta}(\mathcal{A}),$ $\mathcal{A}$ is $\delta$-PC if, for any instance~$\mathcal{I}$,
\begin{align*}
    \prob{T^{\mathcal{I}}_{\delta}(\mathcal{A}) < \infty,\ \exists \ i \in [N] \mbox{ s.t. } O_i \neq k^\ast_{\mathcal{M}(i)} } \leq \delta.
\end{align*} 

We will measure the performance of a $\delta$-PC algorithm $\mathcal{A}$ by its sample complexity $T^{\mathcal{I}}_{\delta}(\mathcal{A})$. Our goal in this paper is to design $\delta$-PC schemes for our problem whose sample complexity $T^{\mathcal{I}}_{\delta}(\mathcal{A})$ is as small as possible. Note that $T^{\mathcal{I}}_{\delta}(\mathcal{A})$ is itself a random quantity, and our results will be in terms of expectation or high probability bounds. 

Note that any online algorithm involves communication from the (central) learner to the different agents, as well as vice-versa. In addition to the sample complexity, we  also measure the communication complexity of the schemes we propose\footnote{This is important in F-MAB since it may be directly related to internet bandwidth of the agents which is resource constraint.}. To do so, we will assume a cost of $c_r$ units for communicating a real number and $c_b$ units for each bit corresponding to a discrete quantity (i.e., to communicate $x \in \mathcal{X}$ incurs a total cost of $c_b . \lceil \log |\mathcal{X} |\rceil$ units). In general, one would expect $c_r$ to be significantly larger than $c_b$. For example, if in a system each real number is represented using 32 bits, then we will have $c_r = 32c_b$.

\remove{
In this section, we lay down the notations used throughout the paper, and specify the problem setup. We consider
a federated multi-armed bandit with a central server and $N$
agents. Each agent is associated with one of $M$ multi-armed bandit
with $K$ arms. We write $[K] := \{1, 2, . . . , K\}$ to denote the set of arms, and assume that $[K]$ is the same for all the clients. Also, we write $[M]$ to denote the set of multi-armed bandits and $[N]$ to denote the set of agents.

There are $M$ multi-armed bandits, each agent being associated with one of them. $m_i$ denotes the bandit associated with agent $i$.
let $X_{m,k}(n)$ denote the reward generated from arm $k$ of bandit $m$ at time $n$. For each $(k, m)$ pair, $\{X_{m,k}(n) : n \geq 1\}$ is an i.i.d. process following a Gaussian distribution with an unknown mean $\mu_{m,k} \in \mathcal{R}$ and known variance $\sigma^2$. For simplicity, we set $\sigma^2 = 1$.

We define the best arm $k^\ast_m$ of bandit $m$
as the arm with the largest mean among the arms of bandit $m$, i.e., $k^\ast_m := arg \, max_k \, \mu_{m,k};$ we assume that $k^\ast_m$ is unique for each $m$. 

\subsection{Problem Instance}
A problem instance is identified by the matrix $\mu = [\mu_{k,m} :
k \in [K], m \in [M]] \in R^{K\times M}$ of the means of the arms of all the agents. And by mapping $\mathcal{M}$ from the set of agents $[N]$ to the set of bandits $[M]$ . defined as

\[ \mathcal{M}: [N] \to [M] \]

For each agent \( i\in [N] \), there exists a unique bandit \( j \in [M] \) such that \( \mathcal{M}(i) = j \).
We will assume that $\mathcal{M}$ is a random mapping i.e every $i \in [N]$ is uniformly randomly mapped to any $j \in [M]$.
}

%{\color{red}What all information is needed to run the algorithms? $\eta, K, N, m$?} 

\vspace{-3mm}
\section{Related Work}
\label{Sec:RelWork}
\vspace{-2mm}
\emph{Best arm identification:} In MAB literature, finding the best arm with probability at least $1-\delta$ (for a $\delta \in (0,1)$), also known as the pure exploration problem is well studied; for example see \cite{jamieson2014best, audibert2010best, kaufmann2014complexity, kalyanakrishnan2012pac, kaufmann2016complexity, karnin2013almost} and the references therein. Moreover \cite{degenne2019pure, locatelli2016optimal, chaudhuri2017pac, chaudhuri2019pac, karpov2020batched, gharat2024representative} study various variants of pure exploration, such as identifying $k$ out of top $m$ arms; identifying arms with mean rewards above a threshold etc.

\emph{Federated Bandits (F-MAB):} More recently, distributed learning in MAB, also known as Federated Bandits has received a lot of attention as alluded in Section~\ref{sec:intro}. In \cite{hillel2013distributed, tao2019collaborative, karpov2020collaborative}, the authors study the distributed pure exploration problems with some allowed communication among them. On the other hand, \cite{shi2021federated, reda2022near, chen2023federated, karpov2023communication, karpov2024parallel, wang2023pure, reddy2022almost} study the federated pure exploration problem with heterogeneous reward structure across agents. Moreover, \cite{shi2021federated, zhu2021federated, zhu2023distributed} address the F-MAB problem in a regret minimization framework.

\emph{Clustered Federated bandits:} In F-MAB, one of the major challenges is heterogeneity across agents and hence \emph{Clustered F-MAB} is a popular area of research. In the (simple parametric) linear bandit setup, the clustering problem is studied by \cite{gentile2014online, korda2016distributed, cherkaoui2023clustered, ban2021local,ghosh2021collaborative}. Moreover \cite{chawla2023collaborative,mitra2021exploiting,pal2023optimal,yang2024optimal, yavas2025general} study clustered F-MAB without linear structure, and a detailed discussion as well as comparison of these works with our work are presented in Section~\ref{sec:intro}.

\vspace{-3mm}
\section{Algorithm I: \texttt{Cl-BAI}}
\remove{
\subsection{Algorithm-Specific Notations}
$\hat{\mu}_{i,k}$ denotes and estimated mean of kth arm of ith user at current time step,$\hat{\mu}_{i,k,t}$ denote the estimated mean of $k^{th}$ arm of $i^{th}$ agent at the $t^{th}$ round of Successive Elimination. $i^\ast$ denotes the best arm of $i^{th}$ user. $\Delta_{i,j} , \hat{\Delta_{i,j}}$ denote $  \mu_{i,i^\ast} - \mu_{i,j} , \hat{\mu_{i,i^\ast}} - \hat{\mu_{i,j}}$ respectively. We denote $\hat{\mu}_i$ as the current estimate mean vector of all the arms of agent i.
}

%\subsection{Algorithm Description}
Throughout the algorithm, we will often calculate the average of the reward samples observed from the arm thus far. For a generic arm $k$, we will refer to it by $\hat{\mu}_k$. When considering the $k$-th arm of agent $i$, we will refer to it by $\hat{\mu}^{i}_{k}$. Note that the true mean reward for this arm is given by $\mu_{\mathcal{M}(i), k}$.

Our first algorithm, Clustering then Best Arm Identification (\texttt{Cl-BAI}), is presented in Algorithm~\ref{ClBAI} and it consists of two phases. The goal of the first phase is to `cluster' the agents based on the bandit problem that they are learning, so that for each bandit problem $j\in [M]$, there is one cluster consisting of all the agents  learning bandit $j$. In the second phase, the learner chooses one representative agent from each cluster, finds the best arm for that agent, and then declares that arm as the best arm for all the agents in the corresponding cluster.

%The algorithm takes as input (line $1$) the target error probability $\delta$ and the `cluster separation' parameter $\eta$ as defined in Assumption~\ref{keyassumption1}. Line $2$ initializes the $N$-length array $Best\_Arm$ which will be used to store the estimated best arms for each of the agents. 
Lines $3$-$15$ describe the first phase where agents sample arms so that at the end of the phase, the learner can accurately map each agent to the bandit problem it is learning. To do this efficiently, each agent uses the successive elimination procedure $SE$ \cite{lecture2019, even2002pac} described in Algorithm~\ref{SE}.   

\begin{algorithm}[t!]
\caption{Cl-BAI}
\begin{algorithmic}[1]

\STATE \textbf{Input}: $\delta$, $\eta$;  \textbf{Initialize:} $Best\_Arm \leftarrow 0_N$
 \STATE \textbf{First phase:}
 \FOR{$i \in [N] $}
 \STATE  Agent $i$ runs Successive Elimination: \\
 $S_i, \hat{\mu}^i = SE([K],\gamma = (\frac{\delta}{12NK})^{4/3} , R = {\log(17/\eta)}) $
 \STATE Agent $i$ communicates $S_i, \hat{\mu}^i$ to learner
 \IF {$|S_i| = 1$}
 \STATE $Best\_Arm[i] = S_i, [N] \rightarrow [N]\backslash i$
 \ENDIF
 
 \ENDFOR
 
\STATE Learner constructs graph $\mathcal{G}$ with $[N]$ as set of vertices.
 \FOR {$ i,j \in [N] $}
 \STATE Create edge between vertices $i,j$ if 
 $|\hat{\mu}^i_{k} - \hat{\mu}^j_{k}|  \leq \eta / 2$, $\forall \ k \in S_i \cup S_j$%$D(\hat{\mu}^i,\hat{\mu}^j) \leq\ \frac{\eta}{2}$
   \ENDFOR
   \STATE Label connected components as $C_1, C_2, \ldots, C_{m}$
   
   \STATE \textbf{Second phase:}

   \FOR {$i \in [m]$}
  \STATE Learner selects one agent $a_i$ from $C_i$ and instructs $a_i$ to run Successive Elimination
  \ENDFOR

  \FOR {$i \in [M]$}
 \STATE Agent $a_i$ runs Successive Elimination: 
  \STATE ${B_{i}, \hat{\mu}^{a_i}} = SE(S_{a_i},\gamma = \delta/(2M),R = \infty)$
  \STATE Agent $a_i$ communicates $B_{i}$ to learner
 \ENDFOR

 \FOR {$i \in [M]$}
  \FOR {$j \in C_i$}
  \STATE Learner sets $Best\_Arm[j] = B_i$
  \ENDFOR
\ENDFOR
  \STATE \textbf{Return} $Best\_Arm$

  \end{algorithmic} 
  \label{ClBAI}
\end{algorithm}

\begin{algorithm}\label{SE}
\caption{SE ($\mathcal{A},\gamma,R $)}
\begin{algorithmic}[1] % The number tells LaTeX to number each line
\STATE \textbf{Input:} $\mathcal{A},\gamma,R $;  \textbf{Initialize:} $\mathcal{A}_0 \leftarrow \mathcal{A}, r \leftarrow 0, {\hat{\mu} \leftarrow 0_K}$
\WHILE{$|\mathcal{A}_r|>1$ \textbf{and} $r < R$}
\STATE $r \leftarrow r+1, \epsilon_r = 2^{-r}$
\STATE Pull each arm in $\mathcal{A}_{r-1}$ for $\frac{8\log(4|\mathcal{A}|r^2/\gamma)}{\epsilon_r^2}$ times
\STATE Estimate $\hat{\mu}_{k}$ for all $k \in \mathcal{A}_{r-1}$ from these samples
\STATE Set $\mathcal{A}_r \leftarrow \{ i \in \mathcal{A}_{r-1} :  \hat{\mu}_{i} \geq \max\limits_{j \in \mathcal{A}_{r-1}} \hat{\mu}_{j}- \epsilon_r \}$
\ENDWHILE
\STATE \textbf{return} $\mathcal{A}_R$, {$\hat{\mu}$}
\end{algorithmic}
\label{SE}
\end{algorithm}

%$SE(\mathcal{A}, \gamma, R)$ takes as input a set of arms $\mathcal{A}$, an error probability parameter $\gamma$, and a total number of rounds $R$. The procedure operates in rounds and maintains a set of active arms, which is initialized to $\mathcal{A}$ in the beginning. In each round $r$, all the arms in the current active set $\mathcal{A}_r$ are sampled a certain prescribed number of times. The empirical mean rewards for all the arms in $\mathcal{A}_r$ are computed using these samples and any arm whose empirical mean reward is lower than a certain threshold is removed from the active set. At the end of the $R$ rounds, the SE procedure returns the set of surviving active arms along with the (latest) empirical estimates for the $K$ arm mean rewards.

For agent $i$, we will use $S_i$ and $\hat{\mu}^i = (\hat{\mu}^{i}_{1}, \hat{\mu}^{i}_{2}, \ldots, \hat{\mu}^{i}_{K})$ to denote the  set of surviving active arms and the vector of updated empirical mean rewards as returned by the SE procedure, respectively.  We will prove in Appendix {\ref{proof_of_SE}} that the SE procedure in the first phase, when run with suitable choices for $\gamma$ and $R$, guarantees the following with high probability: (i) For any two agents $i,j$ learning the same bandit, $\forall \ k \in S_i \cup S_j$, we have $|\hat{\mu}^i_{k} - \hat{\mu}^j_{k}| \le \eta/2$; (ii) For any two agents $i,j$ learning different bandits, $ \exists \ k \in S_i \cup S_j$ s.t. $|\hat{\mu}^i_{k} - \hat{\mu}^j_{k}| > \eta/2$.
%
%\textbf{Avishek: SE subroutine has $\gamma$ and $\epsilon_r$. We need both? Also, in line 5, should we have $|\mathcal{A}|$ or $|\mathcal{A}_{r-1}|$ }
Next, each agent $i$ communicates the quantities $S_i, \hat{\mu}^i$ to the learner (line $6$), who uses this information to cluster the agents as described in lines $11$-$15$. 
%To construct the clusters, the learner constructs a graph $\mathcal{G}$ with $[N]$ as the set of vertices. For every pair of vertices $i,j \in [N]$, there is an edge between them if $|\hat{\mu}^i_{k} - \hat{\mu}^j_{k}|  \leq \eta / 2$, $\forall \ k \in S_i \cup S_j$. The connected components of $\mathcal{G}$, denoted by $C_1, C_2, \ldots, C_m$, represent the clusters. 
We will show that the above properties of the SE procedure ensure that with high probability, for each identified cluster, all its member agents are associated with the same bandit. 
%the graph $\mathcal{G}$ will have $m = M$ connected components such that for any component
\remove{
at the end of the first phase, the learner can separate the agents into clusters $C_1, C_2, \ldots, C_m$ such that for any cluster, all its member agents are associated with the same bandit.  Using the above properties guaranteed with high probability at the end of the SE procedure, it follows that the graph $\mathcal{G}$ will have {\color{red}$m = M$} connected components, which will represent the clusters $C_1, C_2, \ldots, C_m$. 
}

Lines $16-30$ describe the second phase of our scheme \texttt{Cl-BAI}. The learner selects one representative agent $a_i$ from every cluster $i$, and then instructs it to again call the successive elimination procedure SE, with input parameters $\mathcal{A} = S_i$, $\gamma = \delta/(2M)$, and $R = \infty$. This implies that for each representative agent, the SE procedure is run till there is only one arm left in the active set, which is then declared as the best arm estimate for the representative arm as well as all the other agents in the same cluster.  %We will show in Appendix {\color{red}[]} that with high probability, the above procedure correctly identifies the best arms for all agents. 
\begin{remark}[Successive Elimination]
    We use successive elimination for its simplicity and \emph{easy-to-tune} capability. We comment that in general other BAI (like track and stop \cite{garivier2016optimal}) and clustering (\cite{pal2023optimal}) algorithms may also be used in this framework. 
\end{remark}
\begin{remark}[Knowledge of separation $\eta$]
    We emphasize that the (exact) knowledge of separation $\eta$ may not be required. Any lower bound on $\eta$ is sufficient for theoretical results.
\end{remark}
The following results demonstrate the correctness and sample complexity of  \texttt{Cl-BAI}. 
%The proof can be found in {Appendix~\ref{proof of CLBAI}}.
%
\begin{theorem}\label{theorem1}
    Suppose Assumption~\ref{keyassumption1} holds. Given any $\delta \in (0, 1)$, the \texttt{Cl-BAI} scheme (see Algorithm~\ref{ClBAI}) is $\delta$-PC.
\end{theorem}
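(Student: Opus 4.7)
The plan is to decompose the overall failure event of \texttt{Cl-BAI} into a Phase~1 clustering error and a Phase~2 best-arm-identification error, and to allocate a budget of $\delta/2$ to each via the choice of $\gamma$ in the two calls to \texttt{SE}.

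For Phase~1 I would take as given the two high-probability properties of the first-round \texttt{SE} calls that the excerpt promises (and that are to be proved in Appendix~\ref{proof_of_SE}), together with the standard ``best arm is never eliminated'' property of \texttt{SE}. Conditioning on all three events holding simultaneously, the graph $\mathcal{G}$ built in lines~11--14 has an edge between $i$ and $j$ if and only if $\mathcal{M}(i) = \mathcal{M}(j)$; consequently its connected components $C_1,\ldots,C_m$ coincide with the true clusters ($m=M$), and each representative $a_i$ carries forward an active set $S_{a_i}$ still containing the true best arm $k^*_{\mathcal{M}(a_i)}$. The setting $\gamma = (\delta/(12NK))^{4/3}$ and $R = \log(17/\eta)$ is tuned so that a union bound over the $N$ agents (and over the arms within each active set) pushes the total probability of any of these events failing below $\delta/2$; Assumption~\ref{keyassumption1} enters precisely at the step that separates empirical means by more than $\eta/2$ on at least one surviving arm across clusters, since it guarantees a true $\eta$-gap on the best arm of one bandit when evaluated under another.

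For Phase~2, I would condition on the Phase~1 success event and analyse each representative independently. Because the second \texttt{SE} call is invoked with $R = \infty$, it terminates only when a singleton remains, so the standard successive-elimination guarantee applies directly: the output equals $k^*_{\mathcal{M}(a_i)}$ with probability at least $1 - \delta/(2M)$. A union bound over the at most $M$ representatives bounds the Phase~2 failure probability by $\delta/2$, and every other agent in $C_i$ simply copies $B_i$, which is the correct best arm since they share the same underlying bandit. Combining the two bounds yields $\Pr(\exists\, i : O_i \neq k^*_{\mathcal{M}(i)}) \le \delta$, the $\delta$-PC conclusion.

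The main obstacle is Phase~1: because the first-phase \texttt{SE} is truncated at $R = \log(17/\eta)$ rounds (rather than run to a singleton), one cannot just quote the textbook \texttt{SE} statement. The concentration argument must simultaneously (i) keep every agent's best arm from being eliminated, (ii) certify $\eta/2$-agreement across agents in the same cluster on every co-surviving arm, and (iii) force $\eta/2$-disagreement on at least one co-surviving arm for agents in different clusters. Balancing these three events over rounds $1$ to $R$ and taking a union bound over $N$ agents and $K$ arms is what dictates the somewhat unusual $(\delta/(12NK))^{4/3}$ exponent; everything else in the proof is routine bookkeeping, so I would defer those concentration details to the appendix and let the theorem proof itself read as the two-line combination above.
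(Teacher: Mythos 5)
Your decomposition is exactly the paper's: the appendix proof bounds the Phase~1 clustering failure by $\delta/2$ via three events (a uniform concentration event $e_0$ over all agents, arms, and rounds; a same-cluster-split event $e_1$; and a different-cluster-merge event $e_2$, the last two being precisely your properties (ii) and (iii) and analyzed under $\bar{e_0}$, with the $\gamma^{3/4}$ scaling you flag driving the $(\delta/(12NK))^{4/3}$ choice), and then bounds the Phase~2 failure by $M\cdot\delta/(2M)=\delta/2$ via the standard successive-elimination guarantee. Your sketch is correct and matches the paper's argument in both structure and the identification of where Assumption~\ref{keyassumption1} and the truncation at $R=\log(17/\eta)$ enter.
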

\begin{theorem}
\label{Cl-BAI}
        With probability at least $1-\delta$, the sample complexity of \texttt{Cl-BAI} for an instance $\mathcal{I}$, denoted by $T^{\mathcal{I}}_{\delta}(\mbox{Cl-BAI})$ satisfies $T^{\mathcal{I}}_{\delta}(\text{CL-BAI})  \le T_1 + T_2$, where\footnote{We use $a\lesssim b$ to denote $a \leq C b$, where $C$ is a positive  constant.}
        \small
        \begin{align*}
    T_1 &{\lesssim} \sum_{j \in [N]} \sum_{i=1}^{K} \max\{\Delta_{\mathcal{M}(j),i}, \eta\}^{-2} \left( \log K + \log N \right. \\
    & \quad \left. +\log\log\left(\max\{\Delta_{\mathcal{M}(j),i}, \eta\}^{-1}\right)  + \log\left(1 / \delta\right) \right), \\
    T_2 &\lesssim \sum_{j \in [M]} \sum_{i=1}^{K} \Delta_{j,i}^{-2} \left( \log K + \log M + \log\log\left({\Delta_{j,i}^{-1}}\right) + \log\left(1 / \delta\right) \right).
 \end{align*}
\normalsize
\end{theorem}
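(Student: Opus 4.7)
The plan is to bound the two phases separately, so the total sample count decomposes naturally as $T_1 + T_2$. Both bounds will follow from a single concentration-based sample-complexity guarantee for Algorithm~\ref{SE} (proved in Appendix~\ref{proof_of_SE}): with probability at least $1-2\gamma$, inside any call to SE$(\mathcal{A}, \gamma, R)$, an arm with true suboptimality gap $\Delta$ is eliminated within $r^\star = \lceil \log_2(4/\Delta) \rceil$ rounds, and the pulls it accumulates total $\lesssim \log(|\mathcal{A}|/\gamma) \cdot \max\{\Delta, \epsilon_R\}^{-2}$ up to the $\log\log$ factor produced by the $r^2$ inside the sample-size expression on line~4 of Algorithm~\ref{SE}.

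For phase one, each agent runs SE$([K], \gamma_1, R_1)$ with $\gamma_1 = (\delta/(12NK))^{4/3}$ and $R_1 = \log(17/\eta)$, so $\epsilon_{R_1} = \eta/17 \asymp \eta$. Applying the lemma, agent $j$'s total pulls are bounded by
\[
\lesssim \sum_{i=1}^K \frac{\log(K/\gamma_1) + \log\log\max\{\Delta_{\mathcal{M}(j),i},\eta\}^{-1}}{\max\{\Delta_{\mathcal{M}(j),i},\eta\}^{2}}.
\]
Expanding $\log(1/\gamma_1) \lesssim \log K + \log N + \log(1/\delta)$ and summing over $j \in [N]$ reproduces $T_1$. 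A union bound over the $N$ agents absorbs the failure probability into a small multiple of $N\gamma_1$, which is negligible compared to $\delta$ thanks to the $4/3$ exponent in $\gamma_1$.

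For phase two, the cluster representatives call SE$(S_{a_i}, \gamma_2, \infty)$ with $\gamma_2 = \delta/(2M)$. Conditioned on the phase-one good event (the same event underlying Theorem~\ref{theorem1}), the clustering is correct, so the number of representatives equals $M$ and each $S_{a_i}$ contains the true best arm $k^\ast_i$. Because $R = \infty$, the $\max\{\Delta, \epsilon_R\}$ in the lemma collapses to $\Delta$, and the representative of bandit $j$ accumulates
\[
\lesssim \sum_{i=1}^K \frac{\log(K/\gamma_2) + \log\log(\Delta_{j,i}^{-1})}{\Delta_{j,i}^{2}}
\]
pulls; substituting $\log(1/\gamma_2) \lesssim \log M + \log(1/\delta)$ and summing over $j \in [M]$ matches $T_2$. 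Union-bounding over the $M$ representatives together with the phase-one bound controls the total failure by $\delta$, which gives the claimed high-probability guarantee.

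The main technical content lies in phase one: the truncation $R_1 = \log(17/\eta)$ is exactly what converts the classical $\Delta^{-2}$ into the $\max\{\Delta,\eta\}^{-2}$ floor, and one must verify that (i) the geometric-in-$r$ sum of per-round pulls is dominated by its last term, so the cap acts as a precision floor at $\eta$, and (ii) the truncation does not disturb the clustering certificate that Theorem~\ref{theorem1} depends on (which is what licenses our use of ``the clustering is correct'' above). The remaining steps — expanding the logarithms, absorbing the $\log\log$ contributions, and assembling the two union bounds — are routine bookkeeping.
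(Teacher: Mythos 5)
Your proposal is correct and follows essentially the same route as the paper: bound each phase via the SE sample-complexity estimate (the geometric sum over rounds, truncated at $R=\log(17/\eta)$ in phase one so the gap is floored at $\eta$, and untruncated with $\gamma=\delta/(2M)$ in phase two), then expand the confidence parameters and union-bound over agents and representatives. Your explicit conditioning on the phase-one clustering event to justify having exactly $M$ representatives in phase two is a detail the paper leaves implicit, but the decomposition and key lemma are identical.
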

%
%\textbf{Avishek: Some issue with notational consistency. We should follow one rule: $i$ for agent, $j$ for bandit and $k$ for arms. Right now it is all over the place. }
Note that $T_1$ and $T_2$ represent upper bounds on the total number of arm pulls in the first and second phases, respectively. In particular, recall that the second phase involves solving the standard best arm identification problem for $M$ bandits at one representative agent each and with target error probability of $\delta/(2M)$; this problem has been studied extensively and the expression directly follows from the literature, see for example \cite{even2002pac}.

\begin{remark}[Comparison with a naive algorithm]
\label{Comp:NaiveClBaI}
We can compare the sample complexity of \texttt{Cl-BAI} with a naive single-phase scheme where the learner instructs each agent to independently identify their best arm using successive elimination, and then communicate the result back to the agent. Note that this scheme completely ignores the underlying mapping of agents to bandits. It follows immediately from  \cite{jamieson2014best} that for any agent learning bandit $j$, the sample complexity for the naive scheme  is of the order of $\sum_{i=1}^{K}\Delta_{j,i}^{-2}$.
%
%Assuming that the mapping $\mathcal{M}$ from the agents to bandit instances is uniformly random, the expected number of agents learning each of the $M$ bandits is $N/M$. \textbf{Avishek: Here also we assume cluster sizes are same, However, here it is easy to extend this to orderwise condition} 
Let us also assume balanced clusters, i.e., the cluster sizes are\footnote{We say $x = \Theta(y)$ if there exists positive constants $C_1$ and $C_2$ such that $C_1 y \leq x \leq C_2 y$.} $\Theta(N/M)$. Thus the overall average (normalized) sample complexity of the naive scheme across all agents is given by \small $ NK.\frac{1}{MK} \cdot \sum_{j\in [M]}\sum_{i=1}^{K}\Delta_{j,i}^{-2}:= NK \cdot \bar{\Delta}^{-2}$ \normalsize,
% \begin{align}
% %\frac{N}{M} \sum_{j\in [M]}\sum_{i=1}^{K}\Delta_{j,i}^{-2} = 
% NK \cdot \frac{1}{MK} \cdot \sum_{j\in [M]}\sum_{i=1}^{K}\Delta_{j,i}^{-2}:= NK \cdot \bar{\Delta}^{-2}
% \label{Eqn:Naive}
% \end{align}
where $\bar{\Delta}$ can be thought of as representing the average problem complexity across the $M$ bandits. 

On the other hand, from Theorem~\ref{Cl-BAI}, the dominant terms in the (normalized) sample complexity of Cl-BAI are given by \small $NK \cdot \frac{1}{MK} \cdot  \sum_{j \in [M]} \sum_{i=1}^{K} \max\{\Delta_{j,i}, \eta\}^{-2}   + MK\cdot \bar{\Delta}^{-2}$ \normalsize.
%
% \begin{align*}
% %&\sum_{j \in [N]} \sum_{i=1}^{K} \max\{\Delta_{\mathcal{M}(j),i}, \eta\}^{-2} + \sum_{j \in [M]} \sum_{i=1}^{K} \Delta_{j,i}^{-2}\\
% %\hspace{-.2in}=
% & NK \cdot \frac{1}{MK} \cdot  \sum_{j \in [M]} \sum_{i=1}^{K} \max\{\Delta_{j,i}, \eta\}^{-2}   + MK\cdot \bar{\Delta}^{-2}.
% \end{align*}
%
Comparing, we can see that the first term may be smaller than that of the naive algorithm since it involves terms of the form $\max\{\Delta_{j,i}, \eta\}^{-2}$, which 
 is at most $\Delta_{j,i}^{-2}$ that appears in the naive scheme. In fact, it can be much smaller depending on the value of the `separability' parameter $\eta$ from Assumption~\ref{keyassumption1} vis-a-vis the bandit gaps; {in particular when $\eta \gg \bar{\Delta}$}. The second term in the above expression of the sample complexity of \texttt{Cl-BAI} will be much smaller  whenever $M \ll N$, i.e., the number of bandits $M$ is much smaller than the number of agents $N$. We anticipate these conditions to be true in many scenarios of interest and thus expect our proposed scheme \texttt{Cl-BAI} to outperform the naive strategy.
Our numerical experiments in Section~\ref{Sec:Numerics} validate this intuition. 
 \end{remark}
\remove{
To compare the complexity of the Cl-BAI algorithm and a naive algorithm which solves the problem of finding best arm independently using successive elimination, without communicating between the agents.
}
\remove{
Let $\bar{\Delta}$ denotes the average complexity of the problem defined as:
\begin{equation*}
    M.K.\bar{\Delta}^{-2} = \sum_{j \in [M]}\sum_{i=1}^{K}\Delta_{\mathcal{M}(j),i}^{-2}.\log(\delta^{-1})
\end{equation*}

\begin{align}
    T_{Cl-BAI} &= \Tilde O(\sum_{j \in [N]} \sum_{i=1}^{K} \max\{\Delta_{\mathcal{M}(j),i}, \eta\}^{-2} + M.K.\bar{\Delta}^{-2}). log(\delta^{-1}))\label{cost_clbai}\\
    T_{naive} &= \Tilde O(N.K.\bar{\Delta}^{-2}.log(\delta^{-1}))\label{cost_naive}
\end{align}
Eq. \ref{cost_clbai} and \ref{cost_naive} highlight the major terms in the complexity of both algorithms hiding the insignificant logarithmic terms, from this we can conclude that complexity of Cl-BAI will always be less than a constant times the complexity of the naive algorithm also a sufficient condition for which Cl-BAI Algorithm1 performs better than the naive algorithm is: $\eta \geq \bar{\Delta}$. 
}
%\end{remark}
\begin{remark} [Communication Cost]
\label{rem:Cl-comm}
Next, we consider the communication complexity of \texttt{Cl-BAI}. Starting with communication from the agents to the central learner, it happens once in the first phase (line $6$) where every agent communicates the active set and the empirical reward vector to the learner, thus resulting in a total cost of at most {$O(N.(c_b.K + c_r . K))$} units; and then once in the second phase (line $23$) where the selected representative from each cluster communicates the identity of its best arm to the learner, requiring a total cost of $O(c_b.M.\log K)$ units. Communication from the learner to agents happens only once in the second phase (line $18$) when the learner selects a representative agent from each cluster and instructs it to run Successive Elimination. This incurs a cost of $O(c_b.M)$ units.

Summing up and using $M \le N$, $c_b \le c_r$, the total communication cost required by \texttt{Cl-BAI} is at most ${O\left(N.K.c_r\right)}$ units. In comparison, the communication cost of a naive scheme, where each agent independently identifies their best arm and then communicates the result to the learner, is at most $O(N.\log K . c_b)$ units. Thus, \texttt{Cl-BAI} helps reduce the sample complexity by introducing interaction between the learner and the agents, which naturally induces a higher communication cost. 
%{\color{red}What's a lower bound on the communication cost? $N.\log M . c_b$?}
\remove{
{
\color{red}
\begin{theorem}
    With probability $1-\delta$ total communication cost will be:
    \begin{equation*}
    \color{red}
       \mathcal{C}_{Cl-BAI} = (N.K).C + (M.\lceil log(K) \rceil+M)c
    \end{equation*}
    Where C and c are the cost of communicating a real number and a single bit respectively.
\end{theorem}
}
}
\end{remark}
\remove{ 
\begin{remark}[Remark]
    In the first phase our aim is to sample the arms so that we can do clustering within some error probability with minimizing total no. of samples. Successive Elimination runs in multiple rounds where after round t it eliminates all the arms whose estimated mean is $2^{-t}$ less than the estimated mean of the best arm.  Hence, after the first phase, for every agent we have a good estimate of mean of the arms which are performing well.
\end{remark}

 \begin{remark}[Remark]
    From our assumption our hope for clustering is based on the mean of the best arm, as the best arm of one bandit performs at-least $\eta$ worse in all other bandits.
    The reason for applying successive elimination on every agent in first phase is that for clustering firstly we need a good estimate of the best arm for every agent second we would need a good enough estimate of the best arm of one bandit in all other bandits so that we can cluster.\\ 
    \end{remark}
    }
    \remove{
    $Successive\_Elimination([K],\delta,t)$ guarantees the following with probability $1-\delta$(Proof in appendix):
    \begin{itemize}
        \item After $t^{th}$ round of Successive Elimination for any arm k in the active set $|\hat{\mu}_k - \mu_k| \leq 2^{-(t+1)}$ 
        
        \item The best arm always remains in the active set.

        \item After the $t^{th}$ round of Successive Elimination all the arms with mean less than $\mu^\ast - 2.2^{-t}$ will be eliminated. Where $\mu^\ast$ represents the mean of the best arm.

        \item Given the mean of an arm is more than $\mu^\ast - C$ probability of it getting rejected after round i is less than $ e^{-4\frac{\log(\frac{3Ki^2}{\delta})(\frac{\epsilon_i}{2} - C)^2}{\epsilon_i^2}}. $

    \end{itemize}

The algorithm takes two inputs the error threshold $\delta$ and  the clustering parameter $\eta$. We initialize the vector $Best\_arm$ which will store the best arm for every agent. In the first phase(line 3-5 of the pseudo code), for every agent, we apply Successive Elimination with parameter $\delta' = (\frac{\delta}{12NK^{0.25}})^{4/3}$, for ${\log(17/\eta)}$ rounds to get the set of active arms $S_i$, and the estimated mean of the arms $\hat{\mu}_i$\\

After the first phase with probability at-least $1-\delta/2$ we can guarantee the following:

\begin{itemize}
    \item for any two agents $i,j$ learning the same bandit $\forall k \in S_i \cup S_j , |\hat{\mu}_{i,k} - \hat{\mu}_{j,k}| \leq \eta/2$

     \item for two agents $i,j$ learning different bandits there will exist an arm $k \in S_i \cup S_j$ s.t. $|\hat{\mu}_{i,k} - \hat{\mu}_{j,k}| \geq \eta/2$
\end{itemize}

  After the first phase every agent communicates their estimated mean vector to the learner with a communication cost of C units per bit. 
  }
  \remove{
  In the second phase the learner considers a graph $G$ with $[N]$ as the set of nodes. For every pair $i,j \in [N]$  it will create an edge between user i and j if following holds:
 \begin{center}
   $ G_{i\sim j} <=> D(\hat{\mu}_i,\hat{\mu}_j) \leq\ \frac{\eta}{2}$
\end{center}
Where $D(\hat{\mu}_i,\hat{\mu}_j) = \max_{l \in S_{i} \cup S_j} |\hat{\mu}_{i,l} - \hat{\mu}_{j,l}|$\\

Let $\{C_1,C_2.....C_m\}$ be the $m$ connected components got after clustering. We choose one agent from every connected component and apply Successive Elimination on that agent with parameter $\delta = p/(2M)$. We declare the best arm of every agent in that connected component to be the best arm got by running Successive Elimination on the sampled agent.
}

% \begin{algorithm}
% \caption{}\label{alg:cap}
% \begin{algorithmic}
% \Input correctness probability $p$, Clustering Parameter $\Delta$
% %\for $\forall i \in N$ 
% \State  $successive elimination((\frac{p}{12NK^{0.25}})^{4/3} , {\log(17/\Delta)}) $
% \State $X \gets x$
% \State $N \gets n$
% \While{$N \neq 0$}
% \If{$N$ is even}
%     \State $X \gets X \times X$
%     \State $N \gets \frac{N}{2}$  \Comment{This is a comment}
% \ElsIf{$N$ is odd}
%     \State $y \gets y \times X$
%     \State $N \gets N - 1$
% \EndIf
% \EndWhile
% \end{algorithmic}
% \end{algorithm}

\remove{
\subsection{Performance Analysis of Algorithm1}
\begin{theorem}
    Given any $\delta \in (0, 1)$ , $Algorithm1$ identifies the best arm for each agent correctly with probability at least $1 - \delta$.

\end{theorem}
}
\remove{
In the proof(presented in appendix) we create three events as follows:
\begin{align*}
    e_0 &=: \exists n,k,t, s.t., |(\mu_{n,k,t}) - (\hat{\mu}_{n,k,t})| \geq\epsilon_t/2\\
    e_1 &=: \exists i,j \in [N] s.t., m_i = m_j,D(\hat{\mu}_i,\hat{\mu}_j) \geq\ \frac{\eta}{2}\\
    e_2 &=: \exists i,j \in [N] s.t., m_i \neq m_j,D(\hat{\mu}_i,\hat{\mu}_j) \leq\ \frac{\eta}{2}\\
\end{align*}

We bound $e_0 \cup e_1 \cup e_2 \leq \delta/2$ which guarantees correctness in first phase with probability $1 - \delta/2$. We bound the probability of error in second phase by $\delta/2$\\
}
\remove{
    \begin{theorem}
        With Probability $1-\delta$ total Number of pulls for the algorithm will be of
        \begin{align*}
            \mathcal{O}((\sum_{j \in [N]}\sum_{i=1}^{K} \max(\Delta_{j,i},\eta)^{-2} + \sum_{j \in [C]}\sum_{i=1}^{K}\Delta_{j,i}^{-2}).log(\delta^{-1}))\\ \leq \mathcal{O}((N.K.(\eta)^{-2} + \sum_{j \in [C]}\sum_{i=1}^{K}\Delta_{j,i}^{-2}).log(\delta^{-1})) \\.
        \end{align*}

    \end{theorem}

}
\remove{
{\color{blue}Total No. of pulls in first phase:
\begin{equation}
    \leq C'.N.K.(\eta^{-2})(log(\delta^{-1})+log(N)+log(K))
    \label{eq:1}
\end{equation}

From Theorem1 of \cite{lecture2019}, total no. of pulls in second phase:
\begin{align}
    \leq \sum_{j \in [C]}\sum_{i=2}^{K} \Delta_{j,i}^{-2} \cdot & \left(\log(K) + \log\log\left(\frac{1}{\Delta_{j,i}}\right) \right. \label{eq:2}\\
    &\quad + \left. \log(M) + \log\left(\frac{1}{\delta}\right)\right) \nonumber
    \label{eq:2}
\end{align}
}
%Theorem 2 follows immediately from eq. \ref{eq:1} and \ref{eq:2}.
}

 \remove{
\begin{theorem}
    Assuming the communication cost of C units per bit, total communication cost will be:
    \begin{equation}
       Communication cost = (\sum\limits_{i \in [N]}|S_i|).B.C 
    \end{equation}
    Where B is the number of bits we communicate for mean of each arm.
\end{theorem}
}

\vspace{-3mm}
\section{Algorithm II: \texttt{BAI-Cl}}
\label{sec:bai-cl}
\vspace{-1mm}
%\subsection{Algorithm Description}
Our second algorithm, Best Arm Identification then Clustering (\texttt{BAI-Cl}) is presented in Algorithm~\ref{BAICl} and it also consists of two phases. In the first phase, the goal is to identify the set of best arms, i.e., $\{k^\ast_m: m \in [M]\}$. This is done by sampling agents randomly and finding their best arm, till we have identified $M$ different best arms. In the second phase, we aim to `cluster' the remaining agents which were not sampled in the first phase and find the best arm corresponding to each of them. For each such agent, we do so by applying successive elimination only on the set of best arms identified in the first phase.

%The algorithm takes as input (line $1$) the target error probability $\delta$ and the `cluster separation' parameter $\eta$ as defined in Assumption~\ref{keyassumption1}.
%In line $2$, the algorithm designates $A$ to be the set of all agents, and initializes a set $S$ and an $N$-length array $Best\_Arm$ to store the collection of $M$ best arms and the individual best arms for every agent respectively.

In the first phase (described in lines $3$-$17$) the learner samples an agent $i$ uniformly at random from the set $A$, and then communicates the current set of best arms $S$ to agent $i$. The agent then proceeds to apply successive elimination (as prescribed in Algorithm~\ref{SE}) on the set of arms $[K]$ so that at the end of the SE procedure, we are confident that a) the returned set $S_i$ contains the best arm for agent $i$; and b) it does not contain the best arm corresponding to any of the other bandit instances.

%with an error probability parameter $\gamma = \frac{\delta.\log(\frac{M}{M-1})}{\log(\frac{3.M}{\delta})}$, and for a total no. of rounds $R =(\log(1/\eta) + 1)$
The agent considers the intersection $S \cap S_i$. If it is non-empty, then it implies that another agent with the same best arm as agent $i$ had been sampled previously. In fact, the intersection will then have exactly one arm with high probability, corresponding to the best arm for agent $i$, and hence its index is communicated to the learner. 
%if this is not null, with high probability it will have only one arm and that will be the best arm for the agent $i$, because the set $S$ contains the best arm of bandits and from assumption\ref{keyassumption1} we {\color{red} can prove that} $S_i$ will not contain best arm of any of the other bandit. Hence, we will communicate the best arm $ a_i^\ast = S \cap S_i$ to the learner.
On the other hand, if $S \cap S_i = \phi$ it means that the bandit that current agent is learning hasn't been explored yet. Hence, agent $i$ continues to run successive elimination on the set of arms $S_i$ (line $12$) till only one arm remains, which is guaranteed to be the best arm for the agent with high probability and hence its index is communicated to the learner. The sets $A$ and $S$, as well as the array $Best\_Arm$ are updated appropriately.

%Otherwise if $S \cap S_i = \phi$ it means that the bandit that current agent is learning hasn't been explored yet therefore we will continue running the $a_i^\ast$. The agent will then communicate $a_i^\ast$ to the learner.

At the end of the first phase, the set $S$ contains the indices of the $M$ best arms corresponding to the different bandit instances. What remains is to identify for each remaining agent in $A$, its corresponding best arm from within the set $S$.

%After the first phase with a high probability we can guarantee that, for all the agents not in the set $A$, their best arm has been detected correctly, the set $S$ will only contain the best arm of every bandit.

Lines $18-25$ describe the second phase of our algorithm, where the learner communicates the set $S$ to the remaining agents in the set $A$. Each such agent applies successive elimination on the set $S$ with target error probability $\gamma = \delta/3N$,  till the best arm is identified; and then communicates the arm index to the learner.
\begin{remark}[Coupon Collector]
    The first phase ends when we see at least one agent from all $M$ bandits. This is related to the classical coupon collector problem, and we use those results to ensure that $\mathcal{O}(M\log M)$ agents will be sampled in this phase woth high probability.
\end{remark}
\begin{algorithm}[t!]
\caption{BAI-Cl}
\label{BAICl}
\begin{algorithmic}[1] % The number tells LaTeX to number each line
\STATE \textbf{Input:} $\eta,\delta$
\STATE \textbf{Initialize:} $ A \leftarrow [N], S \leftarrow \phi, Best\_Arm \leftarrow 0_N$
\STATE \textbf{First Phase:}
\WHILE{$|S| < M$}
\STATE Learner samples agent $i$ from $A$ uniformly at random; communicates set $S$ to $i$
\STATE Agent $i$ runs Successive Elimination:
\STATE  $S_{i}, \hat{\mu}^i  = SE( [K],\gamma = \frac{\delta.\log(\frac{M}{M-1})}{\log(\frac{3.M}{\delta})} ,  R = \log(1/\eta)+1)$
\IF{$S \cap S_{i} \neq \phi$}
\STATE Agent $i$ sends arm $a_i^\ast \in S \cap S_{i}$ to learner
\ELSE
\STATE Agent $i$ further runs  Successive Elimination:
\STATE  $a^\ast_i, \hat{\mu}^i   = SE( S_i, \gamma = \frac{\delta.\log(\frac{M}{M-1})}{\log(\frac{3.M}{\delta})} , R = \infty)$
\STATE Agent $i$ sends arm $a_i^\ast$ to the learner
%{\color{red}\STATE $\bar{\mu_S} = \bar{\mu_S} \cup \hat{\mu_{a^\ast_i}}$}
\ENDIF
\STATE Update at learner:
    \STATE $A = A \setminus \{i\}$, $Best\_Arm[i]$ = $a^\ast_i$, $S = S \cup a^\ast_i$
    %\STATE 
    %\STATE 
\ENDWHILE
\STATE \textbf{Second Phase:}
\FOR{$i \in A$}
\STATE Learner communicates the set $S$ with agent $i$
\STATE $a_i^\ast, \hat{\mu}^i = SE(S,\delta/3N, R = \infty)$
\STATE Agent $i$ communicates arm $a_i^\ast$ to the learner
\STATE Learner sets $Best\_Arm[i] = a_i^\ast$
\ENDFOR
  \STATE \textbf{Return} $Best\_Arm$
\end{algorithmic}
\end{algorithm}
%\subsection{Performance Analysis of Algorithm2}
 The following result demonstrates the correctness of our proposed scheme BAI-Cl. The proof can be found in Appendix {\ref{appendinx}}.
\begin{theorem}\label{theorem3}
    Given any $\delta \in (0, 1)$, the BAI-Cl scheme (see Algorithm~\ref{BAICl}) is $\delta$-PC.
\end{theorem}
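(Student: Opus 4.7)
Proof Proposal.

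The plan is to identify a small collection of good events whose joint occurrence guarantees that $Best\_Arm$ is filled correctly for every agent, and then bound the total failure probability by a union bound.

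\emph{Phase 1, single-call guarantee.} Fix an agent $i$ sampled in Phase 1 and let $m_i = \mathcal{M}(i)$. I would prove that with probability at least $1 - \gamma$ the set $S_i$ returned by the first SE call on $[K]$ (with target $\gamma$ and $R = \log(1/\eta)+1$ rounds) satisfies both (a) $k^\ast_{m_i} \in S_i$ and (b) $k^\ast_{m'} \notin S_i$ for every $m' \neq m_i$. Property (a) is the standard invariant that the best arm survives SE on a good concentration event. For (b), the per-round concentration bound inside SE gives that any arm surviving after round $R$ has true mean within $2\epsilon_R = \eta$ of $\mu_{m_i,k^\ast_{m_i}}$; Assumption~\ref{keyassumption1} forces $\mu_{m_i, k^\ast_{m'}} \le \mu_{m_i, k^\ast_{m_i}} - \eta$ for every $m' \neq m_i$, ruling out every foreign best arm (a harmless constant slack in $R$ absorbs the strict-versus-nonstrict gap).

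\emph{Phase 1, inductive consistency.} Conditional on (a) and (b) holding for every sampled agent, I would show by induction on the sampling order that after processing agent $i$ the learner's set $S$ equals $\{k^\ast_m : m \in \mathcal{B}\}$, where $\mathcal{B}\subseteq [M]$ is the set of bandits already represented. If $S \cap S_i \neq \emptyset$, property (b) forces the intersection to equal $\{k^\ast_{m_i}\}$, so the agent correctly reports $k^\ast_{m_i}$ and $\mathcal{B}$ is unchanged. If $S \cap S_i = \emptyset$, the agent continues SE on $S_i$ to completion with error budget $\gamma$; since $S_i$ contains $k^\ast_{m_i}$ and no other $k^\ast_{m'}$, the standard SE-to-singleton guarantee returns $k^\ast_{m_i}$ with probability at least $1 - \gamma$, and $\mathcal{B}$ grows by one element.

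\emph{Union bound and coupon collector.} Let $T$ denote the random number of agents sampled in Phase 1; then Phase 1 uses at most $T + M$ SE calls. A coupon-collector argument applied to the sequence of bandit labels encountered (the stopping condition $|S|<M$ is the waiting time to see every label) gives $T \le c\,M\log(3M/\delta)$ with probability at least $1 - \delta/3$ for an absolute constant $c$. Combined with the algorithmic choice $\gamma = \delta\log(M/(M-1))/\log(3M/\delta)$ and the inequality $\log(M/(M-1)) \ge 1/M$, this bounds the total Phase-1 failure probability by $2\delta/3$.

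\emph{Phase 2 and conclusion.} On the event that Phase 1 succeeds, $S = \{k^\ast_m : m \in [M]\}$. For each remaining agent $i \in A$, the restriction of bandit $m_i$ to $S$ still has $k^\ast_{m_i}$ as its unique best arm, so SE on $S$ with target $\delta/(3N)$ outputs $k^\ast_{m_i}$ with probability at least $1 - \delta/(3N)$. A union bound over at most $N$ Phase-2 agents contributes another $\delta/3$, giving a total failure probability of at most $\delta$ and establishing the $\delta$-PC guarantee.

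\emph{Main obstacle.} The trickiest point is the joint choice of $R$ and $\gamma$: $R$ must be large enough that a single SE call can invoke Assumption~\ref{keyassumption1} to eliminate every foreign best arm, while the cumulative per-call failure probability, summed over the random number $T$ of Phase-1 calls and controlled by the coupon-collector tail, still fits inside the slack implicit in the specific $\gamma$ hard-coded in \texttt{BAI-Cl}. Reconciling the per-round concentration budget inside SE with this external $\gamma$, and neatly handling the randomness of $T$, is where the bookkeeping is heaviest.
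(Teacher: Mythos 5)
Your proposal is correct and follows essentially the same route as the paper's proof: a coupon-collector bound showing that only $O\bigl(\log(3M/\delta)/\log(M/(M-1))\bigr)$ agents are sampled in Phase 1 (failure probability $\delta/3$), a union bound over those agents' SE calls with the hard-coded $\gamma$ giving another $\delta/3$, and a union bound over the at most $N$ Phase-2 agents run at level $\delta/(3N)$ giving the final $\delta/3$. The only difference is that you spell out more explicitly the single-call guarantee (that $S_i$ retains $k^\ast_{\mathcal{M}(i)}$ and, via Assumption~\ref{keyassumption1} and the choice $R=\log(1/\eta)+1$, excludes every foreign best arm) and the resulting invariant on $S$, which the paper's Proposition~\ref{prop6} treats implicitly by appealing to the generic SE guarantee.
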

%
%\begin{theorem}\label{theorem3}
 %   Given any $\delta \in (0, 1)$ , $Algorithm2$ identifies the best arm for each agent correctly with probability at least $1 - \delta$
%\end{theorem}
\remove{
In the proof(presented in appendix) we prove that with probability $1 - 2\delta/3$ at the end of first phase we will correctly detect:
\begin{itemize}
    \item The best arm for all the agents not in set A
    \item The best arm of all the M bandits
    \item Mean of the best arms within the confidence interval of $\eta/4$.    
\end{itemize}
We also prove that for second phase probability of error is $\leq \delta/3$, and we will detect the best arm for all agents with probability at least $1 - \delta$.
}
\begin{theorem}\label{theorem4}
    Suppose each agent belongs to one of the $M$ clusters uniformly at random. Then,  with probability at least $1-\delta$, the sample complexity $T^{\mathcal{I}}_{\delta}(\mbox{BAI-Cl})$ satisfies $T^{\mathcal{I}}_{\delta}(\text{BAI-Cl})  \le T_1 + T_2,$ where,
        \small
        \begin{align*}\vspace{-4mm}
   T_{1} &\lesssim [\log K + \log \gamma + \log\log \Delta_{m,i}^{-1}] \lbrace\sum\limits_{m=1}^M\sum\limits_{i=1}^{K} \Delta_{m,i}^{-2}  \\
   +  &M.\log(\frac{3.M}{\delta}).\max_{m \in [M]}\big\{\sum\limits_{{i=1}}^{K} \max(\eta,\Delta_{m,i})^{-2}\rbrace\\
T_2 &\lesssim N.M.\eta^{-2}(\log M + \log \delta^{-1} + \log N + \log\log \eta^{-1})
\end{align*}
\normalsize
\end{theorem}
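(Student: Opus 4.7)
The total sample complexity splits naturally as $T^{\mathcal{I}}_\delta(\text{BAI-Cl}) \le T_1 + T_2$, and I would bound each phase separately on a high-probability ``good event'' under which every call to \texttt{SE} behaves as predicted by its standard analysis (see \cite{even2002pac,jamieson2014best}). Two properties of \texttt{SE} with confidence parameter $\gamma$ drive the argument: (a) after round $r$ every surviving arm has gap at most $2\cdot 2^{-r}$ to the true best arm of its bandit, so truncating the recursion at $R=\log(1/\eta)+1$ leaves only arms within gap $O(\eta)$ of the best; and (b) an \texttt{SE} call run to completion in bandit $m$ pulls arm $k$ at most $O(\Delta_{m,k}^{-2}(\log K + \log \gamma^{-1} + \log\log \Delta_{m,k}^{-1}))$ times. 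A union bound over the at most $N+M$ calls to \texttt{SE}, together with the choices $\gamma = \Theta(\delta/(M\log(M/\delta)))$ in Phase 1 and $\gamma = \delta/(3N)$ in Phase 2, yields the good event with probability at least $1-2\delta/3$ and reuses the analysis that underlies Theorem~\ref{theorem3}.

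For $T_1$, I would first control the number of agents sampled before $|S|=M$ is reached. Because each agent is uniformly mapped to a cluster, this stopping time is dominated by the coupon-collector time for $M$ coupons, and a standard tail bound gives that no more than $M\log(3M/\delta)$ agents are sampled in Phase 1 with probability at least $1-\delta/3$. For each such sampled agent in cluster $m$, the first call to \texttt{SE} is truncated at $R=\log(1/\eta)+1$ rounds; by (a) the pulls of arm $k$ are bounded by $O(\max(\eta,\Delta_{m,k})^{-2})$ times the common log prefactor, giving per-agent cost at most $\max_{m'}\sum_i \max(\eta,\Delta_{m',i})^{-2}$ and thus, after multiplying by the coupon-collector bound, the second summand of $T_1$. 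An extra contribution comes only from the exactly $M$ agents that trigger the $S\cap S_i=\emptyset$ branch (one per cluster) and continue \texttt{SE} to completion on $S_i\subseteq [K]$; by (b) this adds at most $\sum_{m,i}\Delta_{m,i}^{-2}$ pulls, matching the first summand.

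For $T_2$, the key observation is that under Assumption~\ref{keyassumption1}, when an agent in bandit $m$ runs \texttt{SE} on the $M$-arm set $S$, every gap between its own best arm $k^\ast_m\in S$ and any other $k^\ast_{m'}\in S$ is at least $\eta$. Applying (b) with $\gamma=\delta/(3N)$ then bounds the per-agent sample complexity by $O(M\eta^{-2}(\log M + \log(N/\delta) + \log\log \eta^{-1}))$, and summing over the at most $N$ agents in Phase 2 gives the stated bound on $T_2$.

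The main technical obstacle is the coupon-collector tail bound: one needs the Phase-1 length to concentrate near $M\log M$ rather than be inflated by a slow-to-appear cluster, which I would establish via a union bound over the $M$ clusters of the event that some cluster's first-appearance time exceeds $M\log(3M/\delta)$. A secondary bookkeeping issue is ensuring that $R = \log(1/\eta)+1$ is large enough that the surviving set $S_i$ not only retains the true best arm of agent $i$'s bandit but also discards every arm whose gap is much larger than $\eta$; combined with Assumption~\ref{keyassumption1}, this guarantees $|S\cap S_i|=1$ exactly when agent $i$ belongs to a cluster already represented in $S$, which is what makes the accounting in Phase 1 self-consistent and produces the two distinct summands in $T_1$.
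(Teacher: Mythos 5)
Your proposal is correct and follows essentially the same route as the paper's proof: a coupon-collector tail bound showing at most $\frac{\log(3M/\delta)}{\log(M/(M-1))} \approx M\log(3M/\delta)$ agents are sampled in Phase 1, the truncated-\texttt{SE} bound $\max(\eta,\Delta_{m,i})^{-2}$ per arm for those agents plus the full-\texttt{SE} bound $\sum_{m,i}\Delta_{m,i}^{-2}$ for the $M$ agents that open new clusters, and the Assumption~\ref{keyassumption1}-driven gap of $\eta$ within $S$ giving the $N\cdot M\cdot\eta^{-2}$ Phase-2 term. Your write-up is in fact somewhat more explicit than the paper's (which simply invokes Equation~\eqref{eqth0} and the claim inside Proposition~\ref{prop6}), but the decomposition, the key lemmas, and the probability bookkeeping are the same.
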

$T_1$ and $T_2$ denote the no. of pulls in first phase and second phase respectively. $T_1$ involves the pulls assigned for finding the best $M$ arms which is given by (ignoring log factors), \small $\sum\limits_{m=1}^M\sum\limits_{i=1}^{K} \Delta_{m,i}^{-2}$ \normalsize
along with that we will have at-most $M.\log(3M/\delta)$ agents, learning a bandit already been explored earlier. %Hence the number of pulls is bounded by $$M.\log(\frac{3.M}{\delta}).\max_{m \in [M]}\big\{\sum\limits_{{i=1}}^{K} \max(\eta,\Delta_{m,i})^{-2}(\log K + \log \gamma $$ $$+ \log\log \max(\eta,\Delta_{m,i})^{-1})\big\}$$
$T_2$ includes the pulls from applying SE on the set $S$ for at-most $N$ agents with $\delta/3N$ error probability.

\begin{remark}[Comparison between \texttt{BAI-Cl} and \texttt{Cl-BAI}]
\label{Rem:BAICl}
%Next, we compare the complexity of the two schemes proposed thus far: Cl-BAI (Algorithm~\ref{Cl-BAI}) and BAI-Cl (Algorithm~\ref{BAICl}).
\remove{
As before, let $\bar{\Delta}$ denote the average complexity of the problem:
\begin{equation*}
    M.K.\bar{\Delta}^{-2} = \sum_{j \in [M]}\sum_{i=1}^{K}\Delta_{j,i}^{-2}
\end{equation*}
}
We saw previously in Remark~\ref{Comp:NaiveClBaI} that  the dominant terms in the sample complexity of \texttt{Cl-BAI} are given by \small $\sum_{j \in [N]} \sum_{i=1}^{K} \max\{\Delta_{\mathcal{M}(j),i}, \eta\}^{-2} + M.K.\bar{\Delta}^{-2}$\normalsize.
On the other hand, from Theorem~\ref{theorem4}, we have that the dominant terms in the sample complexity of \texttt{BAI-Cl} are given by \small
$M.K.\bar{\Delta}^{-2} + M. \max_{m \in [M]}\big\{\sum\limits_{{i=1}}^{K} \max(\eta,\Delta_{m,i})^{-2}\big\} + N.M.\eta^{-2}    
$\normalsize.
If the underlying instance is such that $\eta$ is large enough; in particular say that in each bandit there is at least a sizeable fraction of the arms whose mean reward is within $\eta$ of the corresponding best arm. Then we have, $\sum_{i=1}^{K} \max\{\Delta_{\mathcal{M}(j),i}, \eta\}^{-2} \sim \Theta(K \eta^{-2})$, so that the respective complexities of \texttt{Cl-BAI} and \texttt{BAI-Cl} become
$
N.K.\eta^{-2} + M.K.\bar{\Delta}^{-2}
$
and 
$
N.M.\eta^{-2} + MK\eta^{-2} + M.K.\bar{\Delta}^{-2}   
$
respectively. Clearly, the main difference is between $N.K.\eta^{-2}$ for \texttt{Cl-BAI} and $(N+ K).M.\eta^{-2}$ for \texttt{BAI-Cl}. Thus, \texttt{BAI-Cl} will perform much better whenever $M \ll N, K$, i.e., the number of bandits is much smaller than the number of agents and arms, which is a natural scenario. Another point to note is that while we expect \texttt{BAI-Cl} to perform better than \texttt{Cl-BAI} in most cases, the latter has the advantage that the agent pulls in the first phase happen in parallel which can sometimes be advantageous.
\end{remark}
\remove{
Eq. \ref{cost_clbai2} and \ref{cost_baicl} highlight the major terms in the complexity of both algorithms hiding the insignificant logarithmic terms.
\begin{align}
    T_{Cl-BAI} &= \Tilde O((\sum_{j \in [N]} \sum_{i=1}^{K} \max\{\Delta_{\mathcal{M}(j),i}, \eta\}^{-2} + M.K.\bar{\Delta}^{-2}). log(\delta^{-1}))\label{cost_clbai2}\\
    T_{BAI-Cl} &= \Tilde O((M.\log(\frac{3.M}{\delta}).\max_{m \in [M]}\big\{\sum\limits_{i=2}^{K} \max(\eta,\Delta_{m,i})^{-2}\big\} \nonumber\\ & + M.K.\bar{\Delta}^{-2} + N.M.\eta^{-2}).log(\delta^{-1}))\label{cost_baicl}
\end{align}
 If we further assume that $\eta$ is large enough i.e. a fraction of arms have mean within $\eta$ of the best arm we can reduce the above expressions to 

 \begin{align*}
    T_{Cl-BAI} &= \Tilde O(N.K.\eta^{-2} + M.K.\bar{\Delta}^{-2}). log(\delta^{-1}))\label{cost_clbai2}\\
    T_{BAI-Cl} &= \Tilde O((M.\log(\frac{3.M}{\delta}).\eta^{-2} \\ & + M.K.\bar{\Delta}^{-2} + N.M.\eta^{-2}).log(\delta^{-1}))\label{cost_baicl}
\end{align*}
 
The major difference between the complexity is that instead of first term of eq \ref{cost_clbai2} which is of $N.K.\eta^{-2}$  BAI-Cl takes  $(N.M+M.\log(\frac{3.M}{\delta}))\eta^{-2}$ pulls. Thus BAI-Cl performs better than Cl-BAI, but we should consider the fact that Cl-BAI allows you to sample from the agents in parallel, which is not the case with BAI-Cl. Hence if we have that every time-step is also associated with some cost Cl-BAI might perform better.
}
\begin{remark} [Communication Cost]
\label{Rem:CommBAI-Cl}
Communication from the learner to the agents happen  in the first phase (line $5$) where learner sequentially samples an agent and communicates the current set $S \subset [K]$ (of size at most $M$) to the 
agent; and then in the second phase where the learner communicates the final set $S$ (of size $M$) to all the remaining agents resulting a total cost of $O(c_b.N.\log (\sum_{i=0}^{M}{K\choose i})) = O(c_b.N.M.\log K)$ units. 
The communication from agents to the learner happens in the first phase (line $9$ or $13$) where each sampled agent incurs a cost of $c_b.\lceil \log K \rceil$. Since $O(M \log M)$ agents are sampled in the first phase with high probability, the total cost incurred is $O(c_b.M \log M . \log K)$. In the second phase (line $24$), each remaining agent indicates one amongst the $M$ arms in $S$ as their best arm, requiring a total cost of $O(c_b . N . \log M)$. Using $M \le N, K$, the total communication cost is at most $O\left(N.M.\log K. c_b \right)$ units. Comparing this with \texttt{Cl-BAI} which incurs a communication cost of $O(N.K.c_r)$ units (see Remark~\ref{rem:Cl-comm}), we note that \texttt{BAI-Cl} is more communication-efficient (in addition to being better in terms of sample complexity) whenever $M\log K \le K$.  

\begin{remark}[Non-uniform Clusters]
    We assume that the each agent belongs to one of $M$ clusters uniformly. This can be easily generalized as this is equivalent to solving a coupon collector problem with unequal probabilities (see \cite{coupon-unequal}).
\end{remark}

\remove{

{\color{red}
\begin{theorem}
    With probability $1-\delta$ total communication cost will be:
    \begin{equation*}
       \mathcal{C}_{Cl-BAI} \leq N(\lceil log(K) \rceil + M.\lceil log(K) \rceil)c
    \end{equation*}
    Where c is the cost of communicating a single bit.
\end{theorem}
}
}
\end{remark}

\vspace{-3mm}
\section{Improved \texttt{BAI-Cl}: \texttt{BAI-Cl++}}
\vspace{-1mm}
Recall Assumption~\ref{keyassumption1} that requires any admissible instance to satisfy a `separability' constraint. In this section, we present a variant of BAI-Cl which requires an additional assumption other than Assumption~\ref{keyassumption1}, but can provide significant savings in terms of sample complexity. 
\begin{assumption}
\label{keyassumption}
$\exists$ $\eta_1 \geq 0$ such that for any two bandits $i,j$, the performance of the best arm $k_i^\ast$ of bandit $i$, {differs by} at least  $\eta_1$ under bandit $j$, i.e, $|\mu_{i,k^\ast_i} - \mu_{j,k^\ast_i}| \geq \eta_1 , \, \forall i,j \in [M], \,\,i \neq j$.
\end{assumption}
%
%{\color{red}We can do significantly better(of order $M$) in phase 2 if $\eta_1 \geq \Delta$, by following $alg1$ instead of $successive\_elimination$.}
%\subsection{Alg1}
\texttt{BAI-Cl++} is identical to BAI-Cl  except that (i) {in the first phase, after running $SE$ procedure(line $12$) agent $i$ will pull arm $a_i^\ast$, {$\frac{32\log(12M/\delta)}{\eta_1^2}$} times and communicates to the learner the estimated mean reward associated with $a_i^\ast$}; and (ii) in the second phase ({line~$21$} of Algorithm~\ref{BAICl}), it uses  $\widehat{SE}$ (Algorithm~\ref{SE++}) instead of the SE procedure. We will assume that at the end of the first phase, the learner stores  the identified $M$ best arms and their estimated mean rewards in $S$ and $\overline{\mu}_S$ respectively. The $\widehat{SE}$ procedure uses Assumption~\ref{keyassumption} to provide a more efficient scheme for identifying the best arm amongst the set $S$ for each agent. 

We defer the formal guarantees of \texttt{BAI-Cl++} to Appendix~\ref{bai-cl++}. Similar to \texttt{BAI-Cl}, \texttt{BAI-Cl++} is $\delta$-PC. Regarding sample complexity, in the first phase, both \texttt{BAI-Cl++} and 
\texttt{BAI-Cl} require similar pulls (in fact \texttt{BAI-Cl++} requires $M\eta_1^{-2}$ more pulls than \texttt{BAI-Cl}). However, in the second phase, instead of, \small $ N.M.\Delta^{-2}(\log M + \log \delta^{-1} + \log N + \log\log \Delta^{-1}) $ \normalsize in \texttt{BAI-Cl} we have, \small $N.M.\Delta^{-2}(\log M  + \log\log \Delta^{-1}) + N.\eta_1^{-2}( \log \delta^{-1} + \log N )$ \normalsize. Hence, we will gain in no. of pulls using BAI-Cl++ over BAI-Cl as long as $N.M.\Delta^{-2}  \geq (N+M) {\eta_1^{-2}} $.
%The $\widehat{SE}$ procedure  
%takes as input the set of identified best arms $S$ and their estimated mean rewards $\overline{\mu}_S$, the parameter $\eta_1$, and the target error probability {\color{red}$\gamma$}. $\widehat{SE}$ runs in phases where in each phase we call the SE procedure (Algorithm~\ref{SE}) with $\delta_k$ as confidence parameter, the arm returned by this algorithm is our potential candidate of the best arm, we then pull this arm, $\hat{a}, \, 32 \frac{log(4.k^2/\gamma)}{\eta_1^2}$ times, and if the arm satisfies the condition $|\hat{\mu}_{\hat{a}}  - \overline{\mu}_{\hat{a}}|  <  \eta_1/2$, we return that as best arm. Otherwise repeat with $k = k+1$.

\begin{algorithm}[t!]
\caption{$\widehat{SE}(S, \overline{\mu}_S, \gamma,\eta, \eta_1)$}
\label{SE++}
\begin{algorithmic}[1] % The number tells LaTeX to number each line
\STATE \textbf{Input:} {$S,\overline{\mu}_S, \gamma, \eta,\eta_1$}; \textbf{Initialize:} $k \leftarrow 1$
\WHILE{True}
\STATE $\delta_k = 10^{-k}$, $\hat{a}, \hat{\mu}$ = $SE( {S,\delta_k, R = \log(1/\eta)+1})$
\STATE Pull $\hat{a}$ for  $32 \frac{\log(4.k^2/\gamma)}{\eta_1^2}$ times
\IF{${|\hat{\mu}_a  - \overline{\mu}_S^a|  <  \eta_1/2}$}
\STATE \textbf{return} $\hat{a}$ as the best arm
\ELSE
\STATE $k \leftarrow k+1$
\ENDIF
\ENDWHILE
\end{algorithmic}
\end{algorithm}

% \begin{theorem}
%     Given that $|\bar{\mu_j}-\mu_j| \leq \eta_1/4 , \forall j \in S $, $\widehat{SE}(S,\bar{\mu_S},\gamma,\eta_1)$ identifies the best arm with at least $1 - \gamma$ probability. Proof of this can be find in subsection \ref{sec_proof_of_th7}
% \end{theorem}

% \begin{remark}
%     Comparing the sample complexity of BAI-Cl++ to BAI-Cl. In the first phase we have an additional $M\left(\frac{\log(\delta^{-1}) + \log(M)}{\eta_1^2} \right)$ pulls since we want estimated mean of the best arm to be within $\eta_1$ of the true mean to get correct result from $\widehat{SE}$ procedure. In the second phase instead of, $$ N.M.\Delta^{-2}(\log M + \log \delta^{-1} + \log N + \log\log \Delta^{-1}) $$ we have, $$N.M.\Delta^{-2}(\log M  + \log\log \Delta^{-1}) + N.\eta_1^{-2}( \log \delta^{-1} + \log N )$$

% Hence, we will gain in no. of pulls using BAI-Cl++ over BAI-Cl as long as $N.M.\Delta^{-2}  \geq (N+M){\eta_1^{-2}} $ 
 
% \end{remark}

% With Probability at least $1-\delta$ total Number of pulls for the algorithm will be less than:

% \begin{equation*}
%     T_{Alg2} \leq T_{Alg2_1} + T_{Alg2_2}
% \end{equation*}

% \begin{align*}
%    T_{Alg2_1} &\leq O((M.{log(\frac{3.M}{\delta})}.K.\Delta^{-2} + \sum_{j \in [M]}\sum_{k=2}^{K}\Delta_{j,k}^{-2})log(\frac{1}{\delta'})) \\
%    & \delta' = \delta.\frac{log(\frac{M}{M-1})}{log(\frac{3.M}{\delta})}
% \end{align*}

% \begin{equation*}
%   T_{Alg2_2} \leq  O(N.\Delta_1^{-2}.log(\delta^{-1}) + N.M.\Delta_1^{-2}) 
% \end{equation*}

% % &+  O(N.\Delta^{-2}.log(\delta^{-1})) 

\vspace{-2mm}
\section{Lower Bound}
% \textbf{Proposition:}
% For any algorithm $Alg$ which is $\delta$ correct for all instance $\nu \in S$, the expected no. of total pulls $ \mathbb{E}_{\nu,alg}[T]$ is lower bounded by,
% \begin{align}
%     \mathbb{E}_{\nu,alg}[T] \geq \sum_{j=1}^{M}\sum_{i=2}^{K}\Delta_{i,j}^{-2}.log(p^{-1}) - M^2/(2.\Delta^2)  \\ +
%     (N-M)((2\Delta)^{-2}).log(p/(N-M)^{-1})
% \end{align}

% \textbf{Proof:}
% Assume we already know M agents such that each of them is pulling from a different bandit.
% For the first bandit we have no restriction on mean of arms hence we will need atleast $\sum_{i=1}^{K}\Delta_{i,1}^{-2}.log((p)^{-1})$, simillarly for the jth bandit we know the previous j-1 best arms are not candidate for the best arm and their mean is atleast $\Delta$ worse then the best arm. hence,\\
% \begin{equation}
%     \mathbb{E}_{\nu,alg}[T]_1 \geq \sum_{j=1}^{M}\sum_{i=2}^{K}(\Delta_{i,j}^{-2} - (j-1).\Delta^{-2}). log(p)^{-1}
% \end{equation}

% Now assume we know mean of all of the arms of all bandits, our task remains to assign each agent to a bandit. Also this task is strictly easier than finding the best arm since from our assumption each bandit must have different best arm.

% \begin{equation}
%     \mathbb{E}_{\nu,alg}[T]_2 \geq (N-M)((2\Delta)^{-2}).log(p/(N-M)^{-1})
% \end{equation}

% \begin{equation}
%     \mathbb{E}_{\nu,alg}[T] \leq \mathbb{E}_{\nu,alg}[T]_1 + \mathbb{E}_{\nu,alg}[T]_2
% \end{equation}

% \begin{itemize}
%       \item $\mu_{i,i^\ast} \geq \mu_{i,j} + \Delta, \, \forall i \in [N], \forall j \neq i^\ast$
% \end{itemize}
\vspace{-2mm}
In this section, consider the class of problem instances $\mathcal{I}$ which in addition to Assumption~\ref{keyassumption1}, also satisfy the following condition on the mean reward gaps for each bandit $m \in [M]$: 
%
%\begin{align*}
%& (i) \ k^\ast_{m} \neq k^\ast_{m'}, \forall m \neq m' , \\
$ \mu_{m,k^\ast_{m}} \geq \mu_{m,k} + \Delta, \, \forall k \neq k^\ast_{m}$.
%\end{align*}
%
%Note that all instances in $\mathcal{I}$ satisfy Assumption~\ref{keyassumption1}.
Also, we will restrict attention to unit variance Gaussian rewards for simplicity, although this can be readily generalized. 

We have the following lower bound on the expected sample complexity of any $\delta$-PC scheme  over the class of instances $\mathcal{I}$. 
%Consider the class of instance $\mathcal{I}$ which satisfies the following condition,
%\begin{equation*}
 %   \mu_{i,i^\ast} \geq \mu_{i,j} + \Delta, \, \forall i \in [N], \forall j \neq i^\ast
%\end{equation*}

%Let $\nu \in \mathcal{I}$ be the instance when the mean vector of the bandit i satisfies, $\mu_{i,i} = \mu + \Delta ,\,\mu_{i,j} = \mu \, \forall j \neq i $. We can propose the following lower bound on the instance $\nu$\\

\begin{theorem}\label{theorem8}
For any $\delta$-PC algorithm $\mathcal{A}$, there exists a problem instance $\nu \in \mathcal{I}$ such that the expected sample complexity $\mathbb{E}[T_\delta^\nu(\mathcal{A})]$ satisfies
%
 %   For any algorithm that correctly identifies the best arm for all the agents with probability $1-\delta$ will take at least following many pulls in expectation:
\begin{equation}\label{lowerbound}
    \mathbb{E}[T_\delta^\nu(\mathcal{A})] \gtrsim \max\left\{M \cdot(K-M),N\right\} \frac{\log(1/\delta)}{\Delta^2}.
\end{equation}
\end{theorem}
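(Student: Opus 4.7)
The plan is to derive two independent lower bounds, each corresponding to one of the two sub-tasks any admissible algorithm must complete, and then take their maximum. Both will follow from the standard change-of-measure transportation inequality \cite{kaufmann2016complexity}: for any $\delta$-PC algorithm and any pair of instances $\nu, \nu'$ under which the correct output for some agent differs,
\begin{equation*}
\sum_{i \in [N], \, k \in [K]} \mathbb{E}_\nu[N_{i,k}] \cdot KL\bigl(\nu_{\mathcal{M}(i),k} \,\|\, \nu'_{\mathcal{M}'(i),k}\bigr) \gtrsim \log(1/\delta),
\end{equation*}
where $N_{i,k}$ is the number of pulls of arm $k$ by agent $i$. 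I will fix one base instance $\nu$, exhibit two families of perturbations, and read off both lower bounds.

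\textbf{Base instance and the $M(K{-}M)$ term.} Take $\nu$ to be the Gaussian instance where bandit $m \in [M]$ has mean $\Delta$ on arm $m$ and mean $0$ on every other arm; this lies in $\mathcal{I}$ and satisfies Assumption~\ref{keyassumption1} with $\eta = \Delta$. For each pair $(m,k)$ with $m \in [M]$ and $k \in [K]\setminus[M]$, construct $\nu^{(m,k)}$ from $\nu$ by raising the mean of arm $k$ in bandit $m$ from $0$ to $2\Delta$; one verifies $\nu^{(m,k)} \in \mathcal{I}$, with $k$ now the unique best arm of bandit $m$, so the correct output for every cluster-$m$ agent changes. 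Since $\nu$ and $\nu^{(m,k)}$ differ only in arm $k$ of bandit $m$ (Gaussian KL $= 2\Delta^2$), the transportation inequality yields $\sum_{i:\mathcal{M}(i)=m}\mathbb{E}_\nu[N_{i,k}] \gtrsim \log(1/\delta)/\Delta^2$; summing over the $M(K{-}M)$ disjoint pairs $(m,k)$ gives $\mathbb{E}_\nu[T^\nu_\delta(\mathcal{A})] \gtrsim M(K{-}M)\log(1/\delta)/\Delta^2$.

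\textbf{The $N$ term.} For each agent $i$ with $\mathcal{M}(i)=m$, define $\nu^{(i)}$ by reassigning agent $i$ alone to some $m' \neq m$, leaving every reward distribution and every other agent's assignment untouched; the correct output for agent $i$ switches from arm $m$ to arm $m'$. Under these two instances, agent $i$'s reward distribution differs only on arms $m$ and $m'$, by $\Delta$ in each case, so specializing the transportation inequality to agent $i$ gives $\bigl(\mathbb{E}_\nu[N_{i,m}] + \mathbb{E}_\nu[N_{i,m'}]\bigr)\Delta^2/2 \gtrsim \log(1/\delta)$, whence agent $i$'s total pulls are $\gtrsim \log(1/\delta)/\Delta^2$. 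Summing over $i \in [N]$ (the counts for different agents are disjoint) yields $\mathbb{E}_\nu[T^\nu_\delta(\mathcal{A})] \gtrsim N\log(1/\delta)/\Delta^2$, and combining with the previous bound completes the proof.

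\textbf{Main obstacle.} The principal subtlety is the joint design of base and alternative instances so that (i) both sides remain in the admissible class $\mathcal{I}$, i.e., respect Assumption~\ref{keyassumption1} and the uniform gap $\Delta$, and (ii) the per-arm KL contributions entering the transportation inequality are tight at $\Theta(\Delta^2)$, so that both families deliver the sharp $1/\Delta^2$ scaling. A minor bookkeeping point is that the individual pairwise bounds must aggregate onto the grand total $\mathbb{E}[T^\nu_\delta(\mathcal{A})]$ without double-counting; this is fine because in each family the indexing $(m,k)$ or $i$ picks out disjoint subsets of the pull counts.
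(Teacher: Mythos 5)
Your proposal is correct and follows essentially the same route as the paper: the same base instance (mean $\Delta$ on arm $m$ for bandit $m$, zero elsewhere), the same two families of perturbations (raising a non-best arm $k\in[K]\setminus[M]$ to $2\Delta$ for the $M(K-M)$ term, and reassigning a single agent to a different cluster for the $N$ term), and the same change-of-measure lemma from \cite{kaufmann2016complexity} with disjoint aggregation of pull counts. No substantive differences to report.
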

\emph{Proof Sketch:} Let $\nu \in \mathcal{I}$ be an instance for which the mean reward vector corresponding to bandit $i$ satisfies, $\mu_{i,i} = \mu + \Delta ,\,\mu_{i,j} = \mu \, \forall j \neq i $. Note that the best arm for bandit $i$ is arm $i$ under instance $\nu$. With a perturbed instance and  \emph{change of measure} \citep[Lemma 1]{kaufmann2016complexity} argument we conclude the proof (details in Appendix~\ref{sec_proof_of_th8}).
%and the instance also satisfies Assumption~\ref{keyassumption}. 

% Our proof follows by showing two lower bounds on the expected sample complexity of any $\delta$-PC algorithm, one corresponding to each of the two terms in the expression above. These bounds correspond to two different sub-tasks which any feasible scheme should be able to complete; one being identifying the set of best arms across the $M$ bandits and the second being identifying for each agent the index of the bandit  problem that it is learning. The detailed proof can be found in Appendix~\ref{sec_proof_of_th8}.
\remove{
We start with the first bound. Consider an alternate instance $\nu' \in \mathcal{I}$ which is identical to $\nu$, except that for some $m \in [M]$ and $k > M$, the mean reward for arm $k$ in bandit $m$ is changed to $\mu + 2\Delta$. Note that the set of best arms under $\nu$ and $\nu'$ are distinct; in particular, they are $[M]$ and $([M] \setminus \{m\}) \cup \{k\}$ respectively. Hence, any feasible algorithm should be able to reliably infer if the underlying instance is $\nu$ or $\nu'$. Then from \citep[Lemma 1]{kaufmann2016complexity} based on a `change of measure' technique, we have the following lower bound on the expected number of total pulls of arm $k$ by agents learning bandit $m$ under instance $\nu$:
\begin{equation*}
        \mathbb{E}[T_{m,k}^{\nu}(\mathcal{A})] \geq \frac{\log(1/2.4\delta)}{D(\mu,\mu + 2\Delta)} = \frac{\log(1/2.4\delta)}{4\Delta^2}
\end{equation*}
where $D(a,b)$ denotes the Kullback-Leibler divergence between two Gaussian distributions with means $a$ and $b$ respectively, and is equal to $(a-b)^2$. Summing over all possible alternate best arms $k$ and bandits $m$, we get the first lower bound
\begin{equation}
\label{LB:Eq1}
\mathbb{E}[T_\delta^\nu(\mathcal{A})] \ge M\cdot(K-M) \cdot \frac{\log(1/4\delta)}{4\Delta^2} .
\end{equation}
\remove{
create an alternate instance $\nu'$ where mean of the $k^{th}$ arm of $i^{th}$ bandit has been changed to $\mu + 2.\Delta$. This is a valid alternate instance as the best arm for this case is different, and this instance will be in our class of valid instances if $k>M$.
we can impose the following constraint on expected no. of pulls of the $k^{th}$ arm for any $\delta$ correct algorithm from change of measure technique as follows:
\begin{equation*}
        E(T_{i,k})_{\nu,\pi} \geq \frac{log(1/2.4\delta)}{D(\nu,\nu')} = \frac{log(1/4\delta)}{4\Delta^2}
\end{equation*}

This puts a bound on no. of pulls of $k^{th}$ arm of $i^{th}$ bandit, we can say the same for all the arms of a bandit which aren't a best arm of any other bandit, that is K-M arms and we can say for the same for all M bandits. Hence,
\begin{equation*}
    E( T_{\nu,\pi}) \geq (K-M).M. \frac{log(1/4\delta)}{4\Delta^2}
\end{equation*}
}

We now demonstrate the second lower bound in the expression of the theorem. Again, consider the instance $\nu \in \mathcal{I}$ as defined before. Here, for each agent $i$, we lower bound the total number of samples required from that agent to reliably infer which of the $M$ bandits it is learning. Assume that agent $i$ is learning bandit $m$ (with best arm $m$) under the instance $\nu$; and consider an alternate instance $\nu'$ where it is mapped to a different bandit $m'$ (which by definition has a different best arm $m'$). Note that under either mapping, the mean rewards of all the arms remains the same except arms $m$ and $m'$ for which the mean reward is switched from $\mu + \Delta$ to $\mu$ and vice-versa.  Clearly, any feasible algorithm should be able to reliably distinguish between the original and alternate problem instances. 

Again, using \citep[Lemma 1]{kaufmann2016complexity}, we have the following lower bound on the expected number of pulls of arms $m$ and $m'$ by agent $i$ under instance $\nu$:
$$
\mathbb{E}[T^{\nu}_{i,m}(\mathcal{A}) + T^{\nu}_{i,m'}(\mathcal{A})] \ge \frac{\log(1/2.4\delta)}{D(\mu, \mu+ \Delta)} = \frac{\log(1/2.4\delta)}{\Delta^2}
$$
which also serves as a lower bound on the expected number of pulls by agent $i$. Since each agent is independently mapped to a bandit, the total number of pulls across all agents has to satisfy the following lower bound:
\begin{equation}
\label{LB:Eq2}
\mathbb{E}[T_\delta^\nu(\mathcal{A})] \ge N \cdot \frac{\log(1/2.4\delta)}{\Delta^2} .
\end{equation}
Combining \eqref{LB:Eq1} and \eqref{LB:Eq2} completes the proof.
}
\remove{
Now, assume that somehow we know mean of all arms of all bandits, we would still need $ N.\frac{1}{\Delta^2}.log(1/\delta)$ many pulls in expectation.
\begin{equation}
    E[T_{alg}] \geq N.\frac{1}{\Delta^2}.log(1/\delta)
\end{equation}
\textbf{Proof:} To detect the best arm of an agent we only need to find which bandit does it belong to.
now consider the instance $\nu'$ as when an agent $i$ is learning different bandit say $m'$ from what it was learning in instance $\nu$ say $m$.

Hence, 
\begin{align*}
    E_{\nu\pi}(T_{i,m} + T_{i,m'}) \geq log(1/\delta).\frac{1}{\Delta^2}\\
    \Rightarrow  E_{\nu\pi}(T_i) \geq log(1/\delta).\frac{1}{\Delta^2}\\
\Rightarrow E_{\nu\pi}(T) \geq N.log(1/\delta).\frac{1}{\Delta^2}
\end{align*}
}

\begin{remark}[Orderwise Optimaity of \texttt{BAI-Cl++}]
   The above described class of Instance $\mathcal{I}$ satisfies Assumptions \ref{keyassumption1} and \ref{keyassumption} with parameter $\Delta$ for both. For \texttt{BAI-Cl++}, the order-wise sample complexity is $M.K.\Delta^{-2}+ (M+1)N \Delta^{-2}$. We compare it with Equation~\ref{lowerbound}. Suppose $N \gg K,M$ and moreover $M$ is a constant (i.e., $M = \Theta(1)$). In that setup, the dominating term in the sample complexity of  \texttt{BAI-Cl++} is $N \Delta^{-2}$ which matches the lower bound (Equation~\ref{lowerbound}). Hence \texttt{BAI-Cl++} is order-wise optimal in this setting.
   % Hence using the procedure $BAI-Cl++$ we can solve the problem of finding best arm of each agent in following many pulls:
   % \begin{align}
   %     T \lesssim& MK\Delta^{-2}(\log K + \log \gamma + \log\log \Delta^{-1})\nonumber\\&+M.K.\log(\frac{3.M}{\delta}).\Delta^{-2}(\log K + \log \gamma + \log\log \Delta^{-1})\nonumber\\&+M\left(\frac{\log(\delta^{-1}) + \log(N)}{\Delta^2} \right)\nonumber\\&+N.M.\Delta^{-2}(\log M  + \log\log \Delta^{-1}) \nonumber\\ &+ N.\Delta^{-2}(\log M + \log \delta^{-1} + \log N + \log\log \Delta^{-1})\label{eq5}   
   % \end{align}
   % where $ \gamma = \delta.\frac{\log(\frac{M}{M-1})}{\log(\frac{3.M}{\delta})}$.
   % We can further modify eqn. \ref{eq5} to get, 
   % \begin{align}
   %     T \lesssim &M.K.\log(\frac{3.M}{\delta}).\Delta^{-2}(\log K + \log \gamma + \log\log \Delta^{-1})\nonumber\\&+N.M.\Delta^{-2}(\log M  + \log\log \Delta^{-1}) \nonumber\\ \quad&+ N.\Delta^{-2}( \log \delta^{-1} + \log N)\label{eq5}  
   % \end{align}
   % If we compare it to the lower bound in eqn.~\ref{lowerbound}, we have $M.K.\log(\frac{3.M}{\delta})$ instead of $M(K-m)$ in the first term and an additional $N.M.\Delta^{-2}(\log M  + \log\log \Delta^{-1})$ term 
   
\end{remark}
\begin{remark} [Instance Dependent Lower Bound]
The lower bound proposed in Theorem~\ref{theorem8} is a worst-case bound. Using similar ideas, we can also derive an instance-dependent lower bound which is more general but requires additional notation. Details can be found in Appendix~\ref{Sec:IDLB}.
\end{remark}

\begin{figure*}[h]
    \centering
    \subfloat[]{\includegraphics[width=0.15\textwidth]{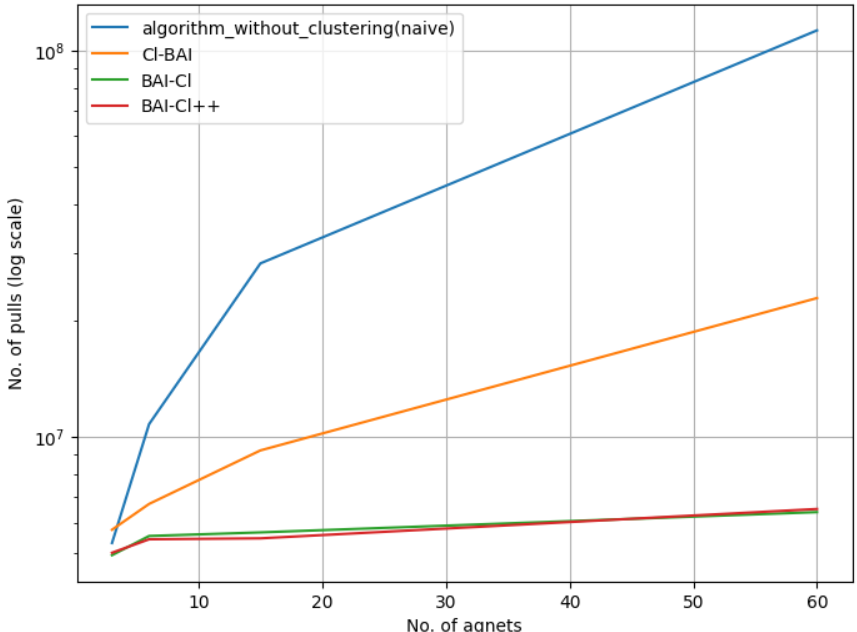}\label{subfig:small}}
    \hfill
    \subfloat[]{\includegraphics[width=0.15\textwidth]{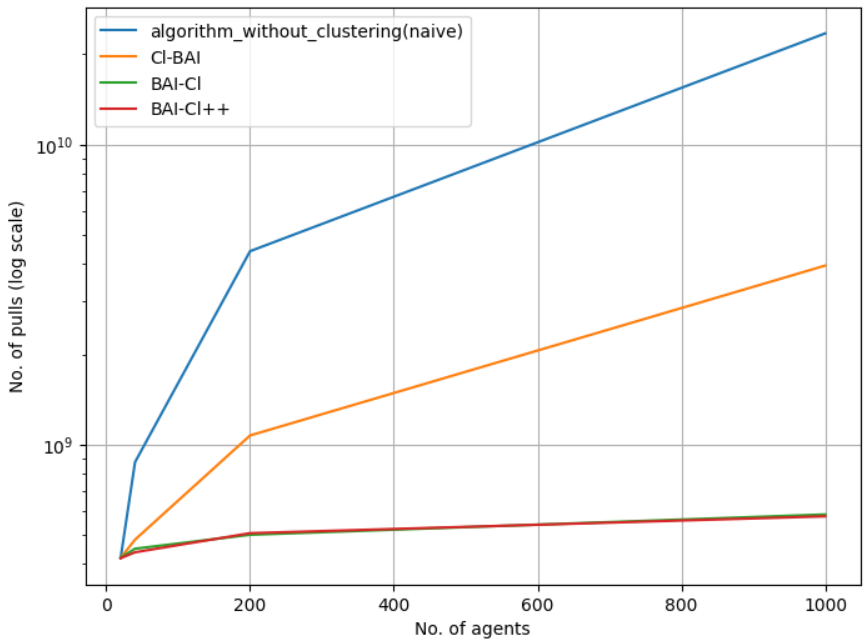}\label{subfig:sim2n}}
    \hfill
    \subfloat[]{\includegraphics[width=0.15\textwidth]{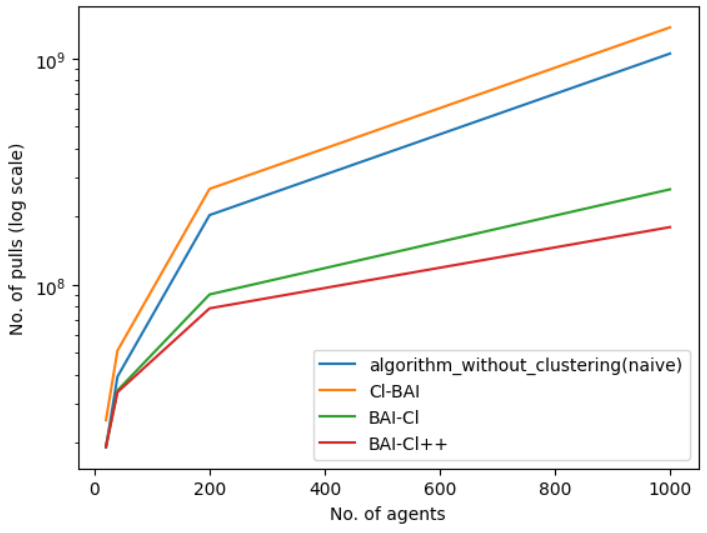}\label{subfig:sim1n}}
    \hfill
    \subfloat[]{\includegraphics[width=0.15\textwidth]{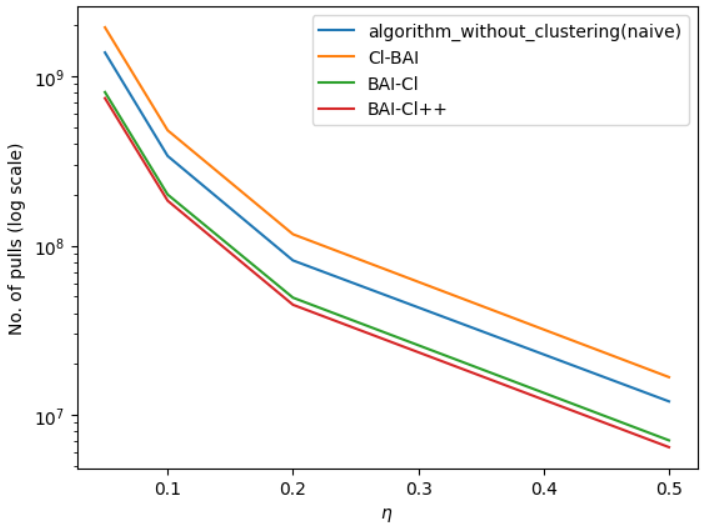}\label{subfig:sim1d}}
    \hfill
    \subfloat[]{\includegraphics[width=0.15\textwidth]{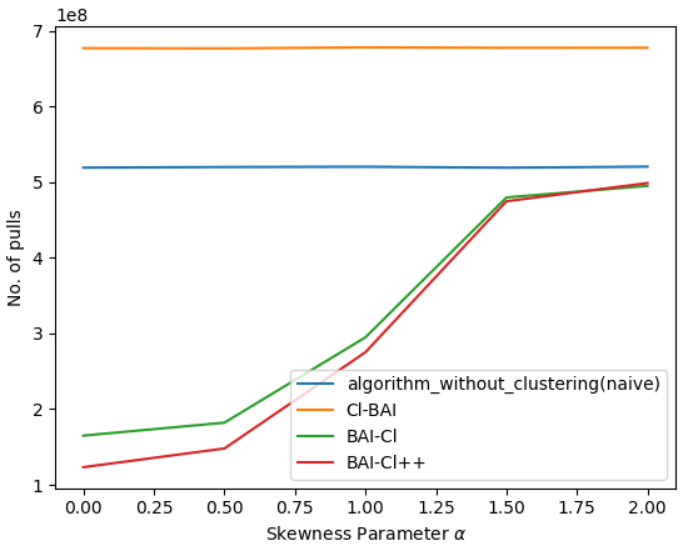}\label{subfig:sim1skew}}
    \hfill
    \subfloat[]{\includegraphics[width=0.15\textwidth]{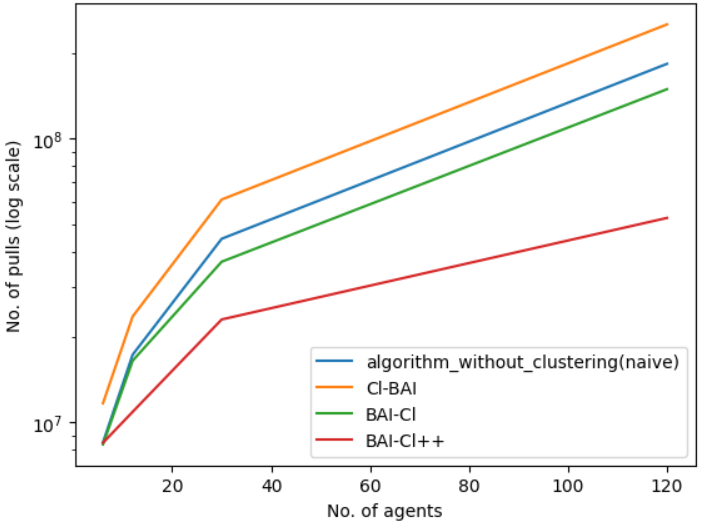}\label{subfig:performance}}
    \caption{Performance with varying number of agents $N$ for different experimental setups. (a) Small dataset. (b) Setup 2, varying $N$. (c) Setup 1, varying $N$. (d) Setup 1, varying $D$. (e) Setup 1, skewness. (f) MovieLens-derived setup.}
    \label{fig:results}
\end{figure*}
    \vspace{-1mm}
\section{Numerical Results}{
\label{Sec:Numerics}
\vspace{-2mm}
We conduct an empirical evaluation of our proposed algorithms using both synthetic and real-world datasets. {We set the error probability $\delta = 10^{-10}$ for our experiments and present sample complexity results which are averaged
over multiple independent runs of the corresponding algorithms.}

%{\color{red} 10 is too few for averaging. We can keep a higher $\delta$ if needed, say $10^{-2}$, but need more averaging}
\vspace{-2mm}
\subsection{Synthetic Datasets}
\vspace{-1mm}
%To evaluate the performance of our algorithms, we conducted experiments on synthetic datasets. 
We take the reward distribution for each arm to be unit-variance Gaussian. We consider three  problem instances.  

\textbf{First:} We consider a small instance with $M = 3$ bandits/clusters and $K = 10$ arms, with mean arm rewards for the three bandits given by $\pmb{\mu_1} =  [.09, .26, .49, .91, .56, .16, .31, .75, .76, .77]$; $\pmb{\mu_2} = [.02, .27, .36, .42, .47, .92, .32, .62, .82, .9]$; and $\pmb{\mu_3} = [.14, .46, .64, .44, .7, .03, .96, .72, .79, .95]$. The best arms for the bandits are arms $4$, $6$, and $7$ respectively, and both Assumptions~\ref{keyassumption1} and \ref{keyassumption} hold with  parameter \(\eta = \eta_1 = 0.3\). 

\textbf{Second:} Next, we consider a larger instance with $M = 20$ bandits/clusters and $K = 100$ arms. There is a unique best arm for each of the $M$ bandit problems. For each bandit, the mean reward for the assigned best arm is sampled from a uniform distribution \( U(1-\eta, 1) \). Next, the means of the $M-1$ arms that are best for other bandits are sampled from \( U(0, 1-2\eta) \). For the remaining $K - M$ arms, their mean rewards are sampled uniformly between $0$ and the mean reward of the best arm. It can be verified that  Assumptions~\ref{keyassumption1} and \ref{keyassumption} are satisfied with parameter $\eta$. We set $\eta = 0.15$.

\textbf{Third:} We again set $M = 20$, $K = 100$, with a unique best arm for each bandit having mean reward  \(1\), while all other arms have mean  \(1 - \eta\). Again, both Assumptions~\ref{keyassumption1} and \ref{keyassumption} hold with  parameter \(\eta\). We set $\eta = 0.15$.

Finally, there are $N$ agents, divided into $N/M$ sized  clusters. 
 %The experiments were performed under two setups:
%\subsubsection{Experiment 1}

\textbf{Variation with number of agents (\(N\)):} 
For the three datasets constructed above, we vary $N$ and plot the average number of pulls for the various schemes in Figures~\ref{fig:performance_plot}(a)(b)(c). We observe that \texttt{BAI-Cl} and \texttt{BAI-Cl++} perform the best for all the datasets. For datasets 1 and 2, \texttt{Cl-BAI} also provides a significant improvement over the naive single-phase scheme (with no clustering). This is in line with Remarks~\ref{Comp:NaiveClBaI} and \ref{Rem:BAICl} which suggest that \texttt{Cl-BAI} performs better than the naive  scheme when the clustering parameter $\eta$ is large as compared to the individual bandit arm reward gaps. For example, for dataset 1, $\eta$ is $.3$, while the minimum arm mean reward gaps for the three bandits are $.14, .02, .01$ respectively. On the other hand, for dataset 3 both the naive scheme  and \texttt{Cl-BAI} have poor performance. Again, this is consistent with Remark~\ref{Comp:NaiveClBaI} since both the clustering parameter and the individual bandit arm reward gaps are $\eta$ in this case.
%We varied $N$ over the set \([20, 40, 200, 1000]\), while keeping other parameters constant as follows: $K = 100$, $M = 20$, and $\eta = 0.15$. The results are presented in Figure~\ref{fig:results_varying_agents_setup1}. We observe that \texttt{BAI-Cl} and \texttt{BAI-Cl++} perform the best. On the other hand, both the naive single-phase scheme (with no clustering) and \texttt{Cl-BAI} have poorer performance. Note that this behaviour is consistent with the intuition presented in Remarks~\ref{Comp:NaiveClBaI} and \ref{Rem:BAICl} which suggests that for this setup,  the dominant term in the sample complexity for the naive scheme and \texttt{Cl-BAI} is $N.K.\eta^{-2}$ (since both the clustering parameter and the individual bandit arm reward gaps are $\eta$), whereas it is $N.M.\eta^{-2}$ for \texttt{BAI-Cl} . %{\color{red}Why BAI-Cl++ not much better?}
%\begin{itemize}
 %   \item Number of arms (\(K\)): 200
  %  \item Probability threshold (\(p\)): \(10^{-15}\)
   % \item Mean gap parameter (\(\Delta\)): 0.15
%\end{itemize}

\textbf{Variation in clustering parameter (\(\eta\)):} 
For dataset 3, we varied $\eta$ over the set \([0.05, 0.1, 0.2, 0.5]\), while keeping other parameters constant as follows:  $K = 100$, $M = 20$, and $N = 100$. From Figure~\ref{fig:performance_plot}(d), we see that the sample complexity decays rapidly as the clustering parameter $\eta$ increases and thus the underlying problem instance becomes easier. 

\textbf{Variation in cluster sizes:} 
While the previous experiments assume that all clusters are of the same size, here we study the impact of non-uniformity of cluster sizes on the performance of the various algorithms. We consider dataset $3$ with $N = 500$ agents and $M = 100$ clusters, where the cluster sizes follow a power-law distribution. In particular, each agent is mapped to cluster $i$ with probability proportional to $i^\alpha$, where $\alpha$ governs the skewness of the cluster sizes. As $\alpha$ increases, the cluster sizes become more skewed. Figure~\ref{fig:performance_plot}(e) presents the average number of pulls for the various schemes as $\alpha$ is varied. While the sample complexity of the naive scheme and \texttt{Cl-BAI} is invariant to $\alpha$, the performance of \texttt{BAI-Cl} and \texttt{BAI-Cl++} worsens as $\alpha$ increases. This is because these algorithms are required to identify all the best arms in the first phase by randomly sampling agents; and this task becomes significantly harder when there are clusters with much fewer agents as compared to others.  
%\begin{itemize}
 %   \item Number of agents (\(M\)): 5
  %  \item Total number of pulls (\(N\)): 100
   % \item Number of arms (\(K\)): 100
    %\item Probability threshold (\(p\)): \(10^{-15}\)
%\end{itemize}
%

%For each combination of variables in both setups, we ran each algorithm 10 times. The average performance across these runs was computed and used for comparison.

%The results for varying \(N\) and \(\Delta\) are presented in Figure~\ref{fig:results_varying_agents_setup1} and Figure~\ref{fig:results_varying_delta_setup1} .

\remove{
\subsubsection{Experiment 2}

\paragraph{Variation with number of agents (\(N\))} 
We varied $N$ over the set \([20, 40, 200, 1000]\), while keeping other parameters constant as follows: $K = 100$, $M = 20$, and $\eta = 0.15$. The results are presented in Figure~\ref{fig:results_varying_agents_setup2}. As before, we observe that \texttt{BAI-Cl}
and \texttt{BAI-Cl++} perform the best. However, in this case, \texttt{Cl-BAI} also provides a significant improvement over the naive scheme. This is in line with Remark~\ref{Comp:NaiveClBaI} which indicated that \texttt{Cl-BAI} performs better than the naive scheme when the clustering parameter $\eta$ is large as compared to the individual bandit arm reward gaps.  

\paragraph{Variation in clustering parameter (\(\eta\))} 
We varied $\eta$ over the set \([0.05, 0.1, 0.2, 0.5]\), while keeping other parameters constant as follows:  $K = 100$, $M = 20$, and $N = 100$. Figure~\ref{fig:results_varying_delta_setup2} presents the results for this experiment.
}
\remove{
To analyze the performance of the algorithms under a more diverse setup, we conducted a second experiment which was similar to Experiment $1$ except that the mean rewards for the arms were randomly assigned ({\color{red}while ensuring that Assumptions~\ref{keyassumption1} and \ref{keyassumption} still hold true}). 
%This setup ensures that each bandit has a distinct best arm, adding complexity to the problem.

{\color{red} Why not combine both these experiments in one subsection?}

\paragraph{Setup Details:}
\begin{itemize}

   \item A \( N \times K \) array is initialized with zeros to store the mean rewards (expected rewards) for all \( N \) samples and \( K \) arms. \( M \) distinct arms are randomly selected from the \( K \) available arms to serve as the ``best'' arms, ensuring that each bandit is assigned one unique best arm.
    
    \item The mean reward of each bandit's ``best'' arm is sampled from a uniform distribution \( U(1-\Delta, 1) \), ensuring relatively high expected rewards. The means of arms that are ``best'' for other bandits are sampled from \( U(0, 1-2\Delta) \), satisfying Assumptions~\ref{keyassumption1} and \ref{keyassumption}. For arms that are not the ``best'' for any bandit, their mean rewards are sampled from \( U(0, \text{bandit\_means}[i][\text{best\_arms}[i]]) \), ensuring these arms have lower rewards than the corresponding best arm.
    
    \item The \( N \) samples are distributed across the \( M \) bandits by repeating the assigned means for each bandit proportionally (\( N/M \) rows per bandit), resulting in the final \( N \times K \) array.

\end{itemize}

\paragraph{Experiment Parameters:}
The experiment was conducted with the following configurations:
\begin{enumerate}
    \item Varying number of agents (\(M = 20\), \(K = 100\), \(p = 10^{-10}\), \(\Delta = 0.15\)):
    \begin{itemize}
        \item The total number of agents was varied as \(N = [20, 40, 200, 1000]\).
    \end{itemize}
    \item Varying mean gap parameter (\(M = 20\), \(K = 100\), \(p = 10^{-10}\), \(N = 100\)):
    \begin{itemize}
        \item The mean gap parameter was varied as \(\Delta \in [0.05, 0.1, 0.2, 0.5]\).
    \end{itemize}
\end{enumerate}

%\paragraph{Comparison and Averaging:}
%For each configuration, the algorithms were run 10 times, and the average results were used for comparison to minimize variability.

The results of the experiments are presented in Figure~\ref{fig:results_experiment2}. Subfigure~(a) shows the performance of the algorithms when the number of agents was varied, while Subfigure~(b) displays the performance under different values of the mean gap parameter \(\Delta\).
}
\vspace{-3mm}
\subsection{MovieLens Dataset}
\vspace{-1mm}
We perform experiments using the \textit{MovieLens-1M} dataset, which contains movie ratings from a large number of users. We group the users into six age categories: 18--24, 25--34, 35--44, 45--49, 50--55, and 56+. The 0--18 age group is excluded due to insufficient ratings for many movies. We restrict our study to movies that received at least 30 ratings in each of the six age groups, leaving $316$ movies.

We have $M = 6$ bandits, one corresponding to each age group. Each of the $K = 316$ movies represents an arm. For each bandit and arm pair, the reward distribution is taken to be the empirical average score distribution calculated from the reviews for the corresponding movie given by users in that age group, suitably normalized to make it $1$-subGaussian. 
%The ratings are normalized by dividing the raw ratings by 2, such that the new ratings lie in the range $[0.5, 2.5]$. This normalization ensures a 1-sub-Gaussian distribution, which is suitable for multi-armed bandit algorithms.

\remove{
 For each age group, the movie ratings are aggregated into a matrix, where each row corresponds to a unique movie and the columns represent the count of ratings for scores 0.5, 1, 1.5, 2, and 2.5. The sampling procedure involves selecting ratings for a particular movie based on the distribution of ratings within the corresponding age group. A discrete probability distribution is defined by the normalized rating counts, and random ratings are drawn from this distribution.
}

We find that each of the $6$ bandits (user age groups) has a distinct best arm (movie with highest average rating). For example, the highest rated movie for the 18--24 group is {``The Usual Suspects (1995)''}, while it is {``To Kill a Mockingbird (1962)''} for the 56+ age group. The dataset satisfies Assumptions~\ref{keyassumption1} and \ref{keyassumption} with clustering parameters $\eta = 0.0027$ and $\eta_1 = 0.026$ respectively.

As before, we assume that there are $N$ agents divided into $M$ equal-sized clusters. The goal of the learner is to identify the best arm (movie with highest expected score) for each agent. 

Figure~\ref{fig:performance_plot}(f) plots the average sample complexity for the various schemes as we vary $N$. Our results demonstrate that clustering-based methods, especially \texttt{BAI-Cl++}, significantly reduce the sample complexity compared to the naive scheme. \texttt{BAI-Cl} also achieves competitive performance but is less efficient than \texttt{BAI-Cl++}.  
\vspace{-3mm}
\subsection{Yelp Dataset}
 \vspace{-2mm}   
We conducted a similar experiment using the \emph{Yelp} dataset as well. Those results along with some additional numerics can be found in Appendix~\ref{Sec:Additional}.
}
\remove{
\subsection{Skewed Agent Mapping}

In this section, we describe the process of generating skewed agent distributions and evaluating multiple multi-armed bandit algorithms under such conditions. The aim is to simulate a scenario where agents (e.g., users) are assigned unevenly to different bandit instances (e.g., age groups) based on a power-law distribution. The setup involves a number of bandit instances and arms, with the agents assigned to bandits using skewed weights determined by a power transformation.

\subsubsection{Generating Power-Skewed Weights}

The first step in the process is to generate the skewed probability distribution for the assignment of agents to bandit instances. This is done using the function \texttt{generate\_power\_skewed\_weights}, which takes as input the number of bandit instances, $M$, and a power parameter, $power$. The probability distribution is computed as follows:

\begin{equation}
    \text{probabilities} = \frac{weights^{power}}{\sum_{i=1}^{M} weights^{power}},
\end{equation}
where \( weights = [1, 2, 3, \dots, M] \). This transformation assigns higher probabilities to higher-indexed bandits, leading to skewed agent distributions depending on the value of the power parameter. The larger the power, the more skewed the distribution becomes.

\subsubsection{Skewed Agent Mapping}

The \texttt{skewed\_agent\_mapping} function simulates the assignment of agents to bandit instances based on the generated skewed weights. Given the reward matrix \texttt{arr}, which contains the rewards for each bandit and arm, the function assigns each of the $N$ agents to one of the $M$ bandit instances based on the skewed probability distribution. The agents' experiences are represented in the matrix \texttt{arr1}, where each row corresponds to an agent and each column corresponds to the reward for one of the $K$ arms (movies).

The agent assignment is carried out by randomly selecting a bandit instance using the probabilities computed by the \texttt{generate\_power\_skewed\_weights} function, as shown below:

\begin{equation}
    \text{sampled\_bandit} \sim \text{Discrete}(\text{probabilities}),
\end{equation}
where \(\text{Discrete}(\text{probabilities})\) denotes the random selection of a bandit instance according to the skewed probabilities. The rewards of the sampled bandit instance are then assigned to the corresponding row in the matrix \texttt{arr1}.

\subsubsection{Experimental Setup and Algorithm Evaluation}

In our experiment, we set $M = 20$ (representing 20 bandit instances), $N = 500$ (500 agents), and $K = 100$ (100 arms or movies). We aim to evaluate the performance of four algorithms—Naive, Cl\_BAI, BAI\_Cl, and BAI\_Cl\_pp—under various levels of skew in the agent distribution, controlled by the power parameter. The power parameter values are set as \( \text{power\_arr} = \{0, 0.5, 1, 1.5, 2\} \), where a higher value leads to more skew in the agent distribution.

The experiment is repeated for $10$ trials, and for each power parameter, we compute the average number of pulls required by each algorithm to identify the optimal arm. This is done by averaging over 10 independent samples.

The results of these evaluations are in the following plots.

\centering
        \includegraphics[width=0.3 \textwidth]{Sim1Skew.png}
       \caption{Performance with varying skewness for Experiment 1.}
       \includegraphics[width=0.3 \textwidth]{Sim2Skew.png}
       \caption{Performance with varying skewness for Experiment 2.}

}

%By leveraging age-based clustering, BAI-Cl++ demonstrates superior performance in identifying the best movies for each group, highlighting the utility of collaborative multi-armed bandit approaches. 

\remove{
The best movie for each age group (highest mean rating) is identified as follows:  
\begin{itemize}
    \item Age 18--24: Movie 50  
    \item Age 25--34: Movie 318  
    \item Age 35--44: Movie 922  
    \item Age 45--49: Movie 527  
    \item Age 50--55: Movie 2019  
    \item Age 56+: Movie 1207  
\end{itemize}
}

\remove{
The mean ratings for the best movie in each age group, evaluated across all age groups, are summarized below. Each row corresponds to an age group (in order: 18--24, 25--34, 35--44, 45--49, 50--55, and 56+), while each column represents the mean rating of the best movie for the corresponding age group across the six age groups:

\[
\begin{bmatrix}
4.6808 & 4.6746 & 4.3636 & 4.4729 & 4.6049 & 4.1014 \\
4.5524 & 4.5877 & 4.5166 & 4.4853 & 4.5495 & 4.4190 \\
4.3909 & 4.4875 & 4.6116 & 4.5103 & 4.5329 & 4.5859 \\
4.2404 & 4.4813 & 4.4138 & 4.5915 & 4.5417 & 4.4950 \\
4.3429 & 4.3510 & 4.5102 & 4.5655 & 4.6571 & 4.5510 \\
4.4314 & 4.4257 & 4.3529 & 4.6204 & 4.5135 & 4.6232
\end{bmatrix}
\]

\paragraph{Calculation of $\Delta$ and $\Delta_1$}
To satisfy the assumptions (Assumptions~\ref{keyassumption1} and \ref{keyassumption}), we compute $\Delta = 0.0027$ and $\Delta_1 = 0.026$. These values capture the separation between the rewards of the best and suboptimal arms across age groups.

\paragraph{Learning Agents}
The users in each age group are modeled as separate learning agents. The total number of agents, $N$, is varied over the set $\{6, 12, 30, 120\}$. The number of agents per bandit instance is $N/M$, where $M = 6$ is the number of age groups. These agents are randomly assigned to the respective bandit instances.

\paragraph{Evaluation Metrics}
The performance of each algorithm is evaluated by its sample complexity, defined as the number of samples required to identify the best movie in each age group. To ensure reliability, we average the results over 50 independent runs for each algorithm.

\paragraph{Results}
The comparison of algorithms is presented in Figure~\ref{fig:performance_plot}. Our results demonstrate that clustering-based methods, especially BAI-Cl++, significantly reduce the sample complexity compared to the naive scheme. Cl-BAI also achieves competitive performance but is less efficient than BAI-Cl++.  

% \begin{table}[ht]
% \centering
% \caption{Comparison of Sample Complexity Across Algorithms}
% \begin{tabular}{|c|c|}
% \hline
% \textbf{Algorithm} & \textbf{Average Count} \\ \hline
% Naive Algorithm    & 1,177,468,206       \\ \hline
% BAI\_Cl            & 943,238,027         \\ \hline
% Cl\_BAI            & 1,925,527,453       \\ \hline
% \end{tabular}
% \label{tab:algorithm_comparison}
% \end{table}

\begin{figure}[h]
    \centering
    \includegraphics[width=0.5\textwidth]{perfomance_plot.png}
    \caption{Performance comparison of algorithms across different values of $N$.}
    \label{fig:performance_plot}
\end{figure} 

By leveraging age-based clustering, BAI-Cl++ demonstrates superior performance in identifying the best movies for each group, highlighting the utility of collaborative multi-armed bandit approaches. 
}

\remove{
We run our algorithm on the MovieLens-1M dataset, crafted to simulate heterogeneity among the bandits. Specifically, we leverage the demographic attributes available in the dataset, including user age brackets, occupations, and movie ratings. Users were grouped into the following age-based cohorts:

1: ``Under 18'' in
18: ``18--24''  
25: ``25--34''  
35: ``35--44''  
45: ``45--49''  
50: ``50--55''  
56: ``56+''  

Each bandit corresponds to one age group, and the arms represent movies. Although the set of arms remains the same across all bandits, the reward distribution for each arm varies, reflecting distinct user preferences across age groups. 

To ensure meaningful results, we eliminated the first group (under 18) due to the high number of unrated movies that are probably not suitable for children. Furthermore, we only considered movies with at least 30 user ratings to enhance the robustness of the analysis.

In this setup, we consider $N$ imaginary users interacting with the system, where each bandit is being learned by $\frac{N}{M}$ users, ensuring a balanced distribution of users across age groups.

The simulations were conducted using three algorithms: Cl-BAI, BAI-Cl, and a naive algorithm. The naive algorithm corresponds to successive elimination applied individually on agents without any communication.

 The performance of each algorithm was evaluated in terms of the number of arm pulls required. 
}

\remove{
\section{Complexity Comparison of Algorithms}
Let's compare between Algorithm1 and a naive algorithm which solves the problem of finding best arm independently, without communicating between the agents.

Let $\bar{\Delta}$ denotes the average complexity of the problem defined as:
\begin{equation*}
    M.K.\bar{\Delta}^{-2} = \sum_{j \in [M]}\sum_{i=1}^{K}\Delta_{j,i}^{-2}.log(\delta^{-1})
\end{equation*}

\begin{align}
    T_{Cl-BAI} &= O((\sum_{j \in [N]}\sum_{i=1}^{K} \max(\Delta_{j,i},\Delta)^{-2}  + M.K.\bar{\Delta}^{-2}).log(\delta^{-1}))\\
    T_{naive} &= O(N.K.\bar{\Delta}^{-2}.log(\delta^{-1}))
\end{align}

Complexity of Algorithm1 will always be less than a constant times the complexity of a naive algorithm, A sufficient condition for which Algorithm1 performs better than the naive algorithm is:
\begin{equation*}
\Delta \geq \bar{\Delta}  
\end{equation*}

Complexity of Algorithm2:
\begin{align*}
   T_{Alg2} = \mathcal{O}((M(log(M) + log(\delta^{-1})).K.\Delta^{-2} + M.K.\bar{\Delta}^{-2} \\
    + N.M.\Delta^{-2}).log(\delta^{-1}))
\end{align*}

Algorithm2 always performs better than Algorithm1, but Algorithm 1 pulls for each agent in parallel whereas Algorithm2 first solves the problem for some agents then on another agents, which in application might not always be possible.\\

Complexity of Algorithm2 under the class of instances $\mathcal{I}$ will always be less than:
\begin{equation}
    \leq \mathcal{O}((M.log(M).K + N)\Delta^{-2}.log(\delta^{-1})+ N.M.\Delta^{-2})
\end{equation}
}
\clearpage
\bibliographystyle{icml2025}
 \bibliography{bandits}

\clearpage
\onecolumn
\section{Appendix}\label{appendinx}
\subsection{Guarantees for \texttt{BAI-Cl++}}
\label{bai-cl++}
We have the following results.
\begin{theorem}\label{theorem7}
    Given any $\delta \in (0, 1)$, the BAI-Cl++ scheme is $\delta$-PC.
\end{theorem}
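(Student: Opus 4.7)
The plan is to bound the failure probability by decomposing it into three disjoint events, each arising from a distinct part of the algorithm. Since the first phase of \texttt{BAI-Cl++} is identical to that of \texttt{BAI-Cl} apart from the additional verification pulls on the identified best arms, I would reuse the argument from the proof of Theorem~\ref{theorem3} (with suitably adjusted internal confidence budgets) to conclude that with probability at least $1-\delta/3$, at the end of phase one the set $S$ contains exactly the $M$ best arms $\{k^\ast_m: m \in [M]\}$ and every agent processed during phase one is correctly assigned its best arm.

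Next, I would show that the additional $32\log(12M/\delta)/\eta_1^2$ pulls on each identified best arm $a \in S$ produce an estimate $\overline{\mu}_S^a$ within $\eta_1/4$ of the true mean $\mu_{m(a),a}$, where $m(a)$ denotes the bandit of the agent who first identified $a$. A direct $1$-subGaussian tail bound gives failure probability at most $\delta/(6M)$ per arm, and a union bound over the $M$ arms in $S$ contributes an additional $\delta/6$; call the joint good event $\mathcal{E}$, so that $\Pr(\mathcal{E}) \geq 1 - \delta/6$.

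The core of the argument lies in analyzing the $\widehat{SE}$ subroutine invoked in the second phase for each remaining agent, say one learning bandit $m$. Since $S$ consists of best arms, the candidate $\hat{a}$ returned by the inner $SE$ call in iteration $k$ is always of the form $k^\ast_{m'}$ for some $m' \in [M]$; we must show $\widehat{SE}$ returns $\hat{a} = k^\ast_m$. If $m' = m$, then $m(\hat{a}) = m$, so $\overline{\mu}_S^{\hat{a}} \approx \mu_{m,\hat{a}}$ on $\mathcal{E}$, and by a subGaussian tail bound the $32\log(4k^2/\gamma)/\eta_1^2$ verification pulls satisfy $|\hat{\mu}_{\hat{a}} - \mu_{m,\hat{a}}| < \eta_1/4$ with probability at least $1 - \gamma/(2k^2)$; hence the test $|\hat{\mu}_{\hat{a}} - \overline{\mu}_S^{\hat{a}}| < \eta_1/2$ passes and the correct arm is returned. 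If instead $m' \neq m$, Assumption~\ref{keyassumption} forces $|\mu_{m,\hat{a}} - \mu_{m',\hat{a}}| \geq \eta_1$, and combining this with the two $\eta_1/4$ error bounds gives $|\hat{\mu}_{\hat{a}} - \overline{\mu}_S^{\hat{a}}| \geq \eta_1/2$, so the test rejects $\hat{a}$ and $\widehat{SE}$ advances to iteration $k+1$. Consequently, $\widehat{SE}$ terminates with an incorrect arm in iteration $k$ only if the verification concentration fails, which happens with probability at most $\gamma/(2k^2)$; summing $\sum_{k \geq 1} \gamma/(2k^2) \leq \gamma \pi^2/12 < \gamma$, setting $\gamma = \delta/(3N)$, and union-bounding over the at most $N$ phase-two agents contributes at most $\delta/3$ to the overall error.

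Adding the three contributions yields total failure probability at most $\delta/3 + \delta/6 + \delta/3 < \delta$. The main obstacle I anticipate is the bookkeeping around conditioning: one needs to verify that the internal $SE$ confidence parameter $\delta_k = 10^{-k}$ does not itself require a union bound across iterations. The key point is that an $SE$ error alone cannot cause $\widehat{SE}$ to terminate incorrectly, because Assumption~\ref{keyassumption} together with event $\mathcal{E}$ guarantees that the verification check rejects any wrong arm whenever the Hoeffding event holds. Everything else reduces to a routine combination of subGaussian concentration with the structural argument already developed for \texttt{BAI-Cl}.
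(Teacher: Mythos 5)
Your proposal is correct and follows essentially the same route as the paper's proof: decompose the error into first-phase failure (reusing the \texttt{BAI-Cl} analysis), failure of the extra $\eta_1/4$-concentration of the stored best-arm means, and failure of the $\widehat{SE}$ verification test, with the last step using Assumption~\ref{keyassumption} exactly as the paper's Proposition~\ref{proof_of_th7} does (your case analysis on $m'=m$ versus $m'\neq m$, and your observation that an inner-$SE$ error with parameter $\delta_k$ cannot by itself cause incorrect termination, is just a more explicit rendering of the same argument). The only differences are in the allocation of the confidence budget across the three events, which is immaterial since your total is still at most $\delta$.
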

\begin{theorem}\label{theorem6.3}
Suppose each agent belongs to one of the $M$ clusters uniformly at random. With probability at least $1-\delta$, the sample complexity $T^{\mathcal{I}}_{\delta}(\mbox{BAI-Cl++})$ of BAI-Cl++ for an instance $\mathcal{I}$ satisfies
        \begin{align*}
    T&^{\mathcal{I}}_{\delta}(\text{BAI-Cl++})  \le T_1 + T_2, \text{ where  } \gamma = \delta.\frac{\log(\frac{M}{M-1})}{\log(\frac{3.M}{\delta})} \text{ and }\\
   T_{1} &\lesssim \sum\limits_{m=1}^M\sum\limits_{i=1}^{K} \Delta_{m,i}^{-2}(\log K + \log \gamma + \log\log \Delta_{m,i}^{-1})  \\
   +  &M.\log(\frac{3.M}{\delta}).\max_{m \in [M]}\big\{\sum\limits_{{i=1}}^{K} \max(\eta,\Delta_{m,i})^{-2}(\log K + \log \gamma  \\ &  +\log\log \max(\eta,\Delta_{m,i})^{-1})\big\} + M\left(\frac{\log(\delta^{-1}) + \log(M)}{\eta_1^2} \right) \\
T_2 &\lesssim N.M.\eta^{-2}(\log M + \log\log \eta^{-1}) \\& + N.\eta_1^{-2}( \log \delta^{-1} + \log N )
\end{align*}
\end{theorem}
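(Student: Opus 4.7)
The plan is to prove the sample complexity bound for \texttt{BAI-Cl++} by decomposing into phase 1 ($T_1$) and phase 2 ($T_2$) contributions, building directly on the analyses of Theorems~\ref{theorem3} and~\ref{theorem4} for \texttt{BAI-Cl}. For $T_1$, note that phase 1 of \texttt{BAI-Cl++} differs from \texttt{BAI-Cl} only by the additional $32\log(12M/\delta)/\eta_1^2$ pulls each time a new best arm is added to $S$. Since this happens exactly $M$ times (the coupon-collector bound from Theorem~\ref{theorem4} still governs the number of agents sampled), $T_1$ inherits the first two summands directly from \texttt{BAI-Cl} and picks up an extra $M \cdot 32\log(12M/\delta)/\eta_1^2 = O(M(\log M + \log \delta^{-1})/\eta_1^2)$. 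A Hoeffding inequality applied to these additional pulls simultaneously guarantees, with probability at least $1-\delta/6$, that every stored estimate $\overline{\mu}_S^a$ lies within $\eta_1/4$ of the true mean $\mu_{m,a}$ under the bandit $m$ for which $a = k_m^*$. Call this good event $\mathcal{E}_1$.

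For $T_2$, the core task is to analyze a single call to $\widehat{SE}(S, \overline{\mu}_S, \gamma, \eta, \eta_1)$ with $\gamma = \delta/(3N)$, then sum over the at most $N$ agents. Condition on $\mathcal{E}_1$ and fix an agent learning bandit $m$. In iteration $k$ the inner call $SE(S, \delta_k, R = \log(1/\eta)+1)$ with $\delta_k = 10^{-k}$ returns $\hat{a} = k_m^*$ with probability at least $1 - \delta_k$ at a cost of $O(M\eta^{-2}(\log M + k + \log\log \eta^{-1}))$ samples, since by Assumption~\ref{keyassumption1} the gaps between distinct best arms in $S$ exceed $\eta$ and $R$ rounds therefore suffice; the subsequent verification adds $32\log(4k^2/\gamma)/\eta_1^2$ pulls. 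The correctness and termination dichotomy is driven by Assumption~\ref{keyassumption}: on $\mathcal{E}_1$, if $\hat{a} = k_m^*$ then $|\mu_{m,\hat{a}} - \overline{\mu}_S^{\hat{a}}| \le \eta_1/4$ and Hoeffding implies the check $|\hat{\mu}_{\hat{a}} - \overline{\mu}_S^{\hat{a}}| < \eta_1/2$ passes with probability at least $1 - 2(\gamma/(4k^2))^4$; if instead $\hat{a} = k_{m'}^*$ for some $m' \neq m$, then $|\mu_{m,\hat{a}} - \overline{\mu}_S^{\hat{a}}| \ge 3\eta_1/4$ and the same Hoeffding bound shows the check fails with matching probability. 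Consequently iteration $k$ is reached with probability $O(10^{-(k-1)})$, and the probability that $\widehat{SE}$ ever returns a wrong arm is bounded by $\sum_k \delta_k \cdot 2(\gamma/(4k^2))^4 \le \gamma$.

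Summing per-iteration costs against these geometrically decaying reach-probabilities yields an expected cost per $\widehat{SE}$ call of $O(M\eta^{-2}(\log M + \log\log \eta^{-1}) + \log(\gamma^{-1})/\eta_1^2)$; multiplying by $N$ and substituting $\gamma = \delta/(3N)$ reproduces the two terms of $T_2$. To convert this expected bound into the claimed high-probability statement I would apply a Bernstein-type concentration to the sum of $N$ independent, geometric-tailed $\widehat{SE}$ costs, then take a final union bound over $\mathcal{E}_1$, the phase 1 best-arm-identification event from Theorem~\ref{theorem3}, and the $N$ individual $\widehat{SE}$ correctness events, each allotted an $O(\delta/N)$ slack so the total deficit remains at most $\delta$.

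The main obstacle will be book-keeping the overall failure budget $\gamma$ across the infinite sequence of $\widehat{SE}$ iterations: the $32\log(4k^2/\gamma)/\eta_1^2$ choice of verification pulls must be tight enough that $\sum_k \delta_k \cdot (\text{verification-failure}_k) \le \gamma$ yet loose enough that the expected verification cost remains $O(\log \gamma^{-1}/\eta_1^2)$, which in turn drives the $N\eta_1^{-2}(\log\delta^{-1} + \log N)$ term rather than a larger $N\eta^{-2}$ contribution. A minor but necessary observation is that the verification samples are drawn fresh after $\hat{a}$ is fixed by $SE$, so $\hat{\mu}_{\hat{a}}$ is independent of the event $\{\hat{a} = k_m^*\}$ and the Hoeffding argument applies cleanly.
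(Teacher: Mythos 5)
Your proposal follows essentially the same route as the paper's proof: for $T_1$ you reuse the \texttt{BAI-Cl} phase-one bound and add $M$ batches of $O\left((\log M + \log\delta^{-1})/\eta_1^2\right)$ verification pulls, and for $T_2$ you analyze each $\widehat{SE}$ call stage-by-stage, pairing geometrically decaying reach probabilities with a Hoeffding test of $\hat{\mu}_{\hat{a}}$ against $\overline{\mu}_S$ under Assumption~\ref{keyassumption}, exactly as in Propositions~\ref{SE_hat_pulls}--\ref{prop_baicl++_t2}. The only blemishes are cosmetic and do not affect the order-wise bound: with $32\log(4k^2/\gamma)/\eta_1^2$ pulls the Hoeffding failure probability at threshold $\eta_1/4$ is $2\gamma/(4k^2)$ (exponent $1$, not $4$), and the paper allots $\delta/(6N)$ rather than $\delta/(3N)$ to each second-phase call.
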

\begin{remark}
    Comparing the sample complexity of BAI-Cl++ to BAI-Cl. In the first phase we have an additional $M\left(\frac{\log(\delta^{-1}) + \log(M)}{\eta_1^2} \right)$ pulls since we want estimated mean of the best arm to be within $\eta_1$ of the true mean to get correct result from $\widehat{SE}$ procedure. In the second phase instead of, $$ N.M.\eta^{-2}(\log M + \log \delta^{-1} + \log N + \log\log \eta^{-1}) $$ we have, $$N.M.\eta^{-2}(\log M  + \log\log \eta^{-1}) + N.\eta_1^{-2}( \log \delta^{-1} + \log N )$$

Hence, we will gain in no. of pulls using BAI-Cl++ over BAI-Cl as long as $N.M.\eta^{-2}  \geq (N+M){\eta_1^{-2}} $ 
 
\end{remark}

\begin{remark}[Communication Cost]
Note that the modifications made to BAI-Cl to get BAI-Cl++ introduce  additional communication in the first phase of the scheme. Each sampled agent, in addition to the identity of its identified best arm, also communicates an estimate of its mean reward to the learner. This incurs a total additional cost of at most $O(M . c_r)$ units. Thus, the overall communication cost for BAI-Cl++ is at most $O\left(N.M.\log K. c_b + M. c_r\right)$ units.
\end{remark}

\subsection{Proof of Successive Elimination}\label{proof_of_SE}
Assume there are $n$ arms $\{1,2,\cdots,n\}$ in $\mathcal{A}$ with corresponding means $\mu_1 \geq \mu_2 \geq \cdots \geq \mu_n$, and are $1$ sub-Gaussian. Also, let $\Delta_i = \mu_1 - \mu_i$ for $i \in \{2,3,\cdots,n\}$ and $\Delta_1 = \Delta_2$. Consider the successive elimination procedure $SE$ applied to the set $\mathcal{A}$. The following result is well known \cite{lecture2019,even2002pac} and we include a proof here for completeness.
\begin{theorem}\label{theorem0}
   With probability at least $1-\gamma$, $SE(\mathcal{A},\gamma,R=\infty)$ satisfies the following properties:
   \begin{itemize}
       \item It returns the best arm in $\mathcal{A}$, i.e., arm $1$.
       \item The total number of arm pulls needed is at most 
       $$O\left(\sum_{i=1}^{n} \frac{\log \left(\frac{n \log \Delta_i^{-1}}{\gamma}\right)}{\Delta_i^{2}}\right)$$
   \end{itemize}
\begin{proof}
 After $r$ rounds of Successive Elimination, the total no. of pulls for any surviving arm in the active set $\mathcal{A}_r$ is {at least} $8.\log(4nr^2 / \gamma)/\epsilon_r^2$.  Then using {Hoeffding's inequality for sub-gaussian }, we have the following bound on the difference between the true mean $\mu_i$ of any arm $i$ and its empirical estimate $\hat{\mu}_i$ at the end of any round $r$.
    \begin{equation}\label{SE0}
        \mathcal{P}\left(|\mu_i - \hat{\mu}_i| \geq \frac{\epsilon_r}{2}\right) \leq \frac{\gamma}{2nr^2} .
    \end{equation}
    For each $r \ge 0$, define the event $E_r = \{1\in \mathcal{A}_r \text{ and } j \notin \mathcal{A}_r \ \forall \ j \in \mathcal{A} \mbox{ s.t. } \mu_j < \mu_1 - 2\epsilon_r\} $. Clearly event $E_0$ holds true since the active set $\mathcal{A}_0$ is initialized to include all arms in the set $\mathcal{A}$. Furthermore, note that the event $E = \cap_{r=1}^{\infty} E_r$ refers to the event that only the best arm, i.e. arm $1$, remains in the active set and is thus returned by the SE procedure.

    We have 
    \begin{align*}\mathcal{P}[E_r|\cap_{k=1}^{r-1}E_{k}] &\geq \mathcal{P}[|\mu_i - \hat{\mu}_i| \leq \frac{\epsilon_r}{2} \ \forall i \in \mathcal{A}_{r-1} | \cap_{k=1}^{r-1}E_{k}]\\ &{\geq} 1 - n.\frac{\gamma}{2nr^2} = 1 - \frac{\gamma}{2r^2} \end{align*}
    where the last inequality follows from \eqref{SE0}. Next,
    %Using union bound we can bound the probability $E = E_1 \cap E_2 \cap \cdots \cdot$ as, 
     \begin{align*}\
        \mathcal{P}(E) &= \prod_{r=1}^{\infty} \mathcal{P}[E_r|\cap_{k=1}^{r-1}E_{k}] \\
        &\geq \prod_{r=1}^{\infty} \left(1 - \frac{\gamma}{2r^2}\right)\\
        &\geq 1 - \sum\limits_{r=1}^\infty\frac{\gamma}{2r^2} \\
        &\geq 1- \gamma
    \end{align*}
Thus we have that with probability at least $1-\gamma$, $SE(\mathcal{A},\gamma,R=\infty)$ returns the best arm in $\mathcal{A}$, i.e., arm $1$. Next, we will now prove the upper bound on the number of pulls required by the SE procedure. Note that with probability at least $1-\gamma$, each arm $i \ge 2$ is removed from the active set {by at most round $\lceil 1 + \log_2 \frac{1}{\Delta_i} \rceil$}. Also, the number of pulls of arm $1$ is at most the number of pulls of any other arm. Thus, the total number of pulls for the SE procedure is at most 
    \begin{align}
    \nonumber
    &\sum_{i=1}^{n} \sum\limits_{r=1}^{\left\lceil \log_2\frac{1}{\Delta_i}\right\rceil} \frac{8.\log(\frac{4nr^2}{\gamma})}{2^{-2r}} \\
    &\le  O\left(\sum_{i=1}^{n} \frac{\log \left(\frac{n \log \Delta_i^{-1}}{\gamma}\right)}{\Delta_i^{2}}\right).
    \label{eqth0}
     \end{align}
    
%    From the above equation we can conclude second statement of the theorem.
    
\end{proof}
   
\end{theorem}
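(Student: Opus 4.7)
The plan is to establish the correctness and sample complexity via a standard ``good event'' analysis for the confidence radii used in $SE$. First I would define, for each round $r \geq 1$, the event
\[
G_r = \left\{ \,|\hat{\mu}_i - \mu_i| \leq \epsilon_r/2 \text{ for every arm } i \in \mathcal{A}_{r-1} \right\},
\]
where the empirical mean is computed from the $8\log(4n r^2/\gamma)/\epsilon_r^2$ samples pulled in round $r$. Because each reward is $1$-subGaussian, Hoeffding's inequality gives that for a single arm, the deviation exceeds $\epsilon_r/2$ with probability at most $2\exp(-\tfrac{\epsilon_r^2}{8}\cdot\tfrac{8\log(4nr^2/\gamma)}{\epsilon_r^2}) \leq \gamma/(2nr^2)$. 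A union bound over the at most $n$ surviving arms yields $\mathbb{P}(G_r^c) \leq \gamma/(2r^2)$, and then summing over $r$ and using $\sum_{r\geq 1} 1/r^2 = \pi^2/6$ gives $\mathbb{P}(G) \geq 1-\gamma$ where $G := \bigcap_{r\geq 1} G_r$.

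Next I would argue on the event $G$ that $SE$ returns arm $1$. By induction on $r$, suppose arm $1 \in \mathcal{A}_{r-1}$. For any $j \in \mathcal{A}_{r-1}$, under $G_r$ we have $\hat{\mu}_1 \geq \mu_1 - \epsilon_r/2 \geq \mu_j - \epsilon_r/2 \geq \hat{\mu}_j - \epsilon_r$, so arm $1$ passes the elimination criterion. Thus arm $1$ is never dropped. Conversely, for a suboptimal arm $i$ with gap $\Delta_i > 0$, once $\epsilon_r \leq \Delta_i/4$, i.e.\ by round $r_i := \lceil 1 + \log_2(1/\Delta_i)\rceil$ (using $\Delta_1 := \Delta_2$ for the trivial case), the confidence intervals of arms $1$ and $i$ become separated; concretely $\hat{\mu}_1 - \hat{\mu}_i \geq \Delta_i - \epsilon_r > \epsilon_r$, so arm $i$ is eliminated. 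Hence on $G$ the active set eventually shrinks to $\{1\}$, and $SE$ outputs the best arm.

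Finally I would bound the sample complexity. Since arm $i$ (or arm $1$, whose pulls are dominated by those of the longest-surviving opponent) is pulled in at most rounds $1,\dots,r_i$, its total pulls are
\[
\sum_{r=1}^{r_i} \frac{8\log(4nr^2/\gamma)}{\epsilon_r^2} \leq C \cdot \frac{\log\!\bigl(n\log(1/\Delta_i)/\gamma\bigr)}{\Delta_i^2},
\]
where the sum is dominated by its largest term via the geometric growth of $\epsilon_r^{-2} = 4^r$ and $r_i = O(\log(1/\Delta_i))$. Summing over $i \in \{1,\dots,n\}$ yields the stated bound.

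The main technical care-point, rather than any deep obstacle, is choosing the per-round confidence parameter so that the per-round failure probabilities are summable. The $\gamma/(2nr^2)$ schedule is what makes the $1/r^2$ trick work uniformly over all rounds and avoids an additional $\log\log$ blow-up; if one naively tried to union-bound a fixed-budget schedule, the argument would either require knowing $\Delta_i$ in advance or pay an extra factor. A secondary subtlety is ensuring the $r_i$ bound holds even in the worst-case behavior of the confidence radii at the boundary, which is why the $\lceil 1 + \log_2(1/\Delta_i)\rceil$ formulation (with the extra $+1$) is used.
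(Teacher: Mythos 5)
Your proposal is correct and follows essentially the same route as the paper's proof: Hoeffding with the $\gamma/(2nr^2)$ per-round failure probability, a union bound over rounds to get the global good event, survival of arm $1$ plus elimination of each suboptimal arm $i$ by round $\lceil 1+\log_2(1/\Delta_i)\rceil$, and a geometric-series sum for the pull count. The only cosmetic difference is that you intersect pure concentration events and derive the algorithmic consequences afterward, whereas the paper bakes the elimination statements into its events $E_r$ and multiplies conditional probabilities; the two bookkeeping styles are equivalent here.
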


\subsection{Proof of Theorem 1}\label{proof of CLBAI}
% If 2 agents learning the same bandit, have an arm which is in active set of one agent but not in other. with high probability it would have been pulled for at least t-4 phases. hence probability that they get clustered into different cluster will be low. \\

% For two agents i,j learning different bandits, one of the following will hold:\\

% \begin{center}
%      $ (\mu_{j,j^\ast}) - (\mu_{i,j^\ast}) \geq \eta \, or \, (\mu_{i,i^\ast}) - (\mu_{j,i^\ast}) \geq \eta $
     
% \end{center}
% hence they won't get assigned same cluster.\\

\begin{proposition}
\label{prop1}
    Let $\hat{\mu}^i_{k,r}$ denote the estimated mean of $k^{th}$ arm of $i^{th}$ agent at $r^{th}$ round. Consider the bad event $e_0$, which occurs if in the first phase for any round t of Successive Elimination  for any agent i and for any arm k,
$ |(\mu_{\mathcal{M}(i),k}) - (\hat{\mu}^i_{k,r})| \geq \epsilon_r/2 $ happens. Using union bound and theorem \ref{theorem0} we have
\begin{center}
$P(e_0) \leq N.\gamma$ 
\end{center}

\end{proposition}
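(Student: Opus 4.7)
The plan is to reduce the statement to a per-agent bound and then take a union bound over the $N$ agents. Fix any agent $i$ and consider the Successive Elimination run on the set $[K]$ for this agent. The subroutine pulls each surviving arm $\frac{8\log(4Kr^2/\gamma)}{\epsilon_r^2}$ times in round $r$, so by a Hoeffding/sub-Gaussian concentration inequality (this is exactly the bound \eqref{SE0} already derived in the proof of Theorem~\ref{theorem0}), for any single arm $k$ and any single round $r$,
\begin{equation*}
    \mathbb{P}\bigl(|\mu_{\mathcal{M}(i),k} - \hat{\mu}^i_{k,r}| \geq \epsilon_r/2\bigr) \leq \frac{\gamma}{2Kr^2}.
\end{equation*}

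Next I would apply a union bound over the (at most) $K$ arms active in round $r$, getting per-round failure probability at most $\gamma/(2r^2)$, and then sum over all rounds $r \geq 1$. Since $\sum_{r=1}^{\infty} \frac{1}{2r^2} = \pi^2/12 < 1$, the probability that the bad event occurs for agent $i$ in at least one round is at most $\gamma$.

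Finally, I would take a union bound over the $N$ agents (the runs of $SE$ in the first phase are carried out independently per agent, but union bound does not even require independence), which yields $\mathbb{P}(e_0) \leq N\gamma$, as claimed.

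The argument is essentially a direct bookkeeping consequence of the concentration bound already established, so there is no real obstacle; the only subtlety is to make sure the $1/r^2$ weighting inside the SE confidence widths absorbs the union bound across all (possibly infinitely many) rounds, which it does via the convergent series $\sum_r r^{-2}$.
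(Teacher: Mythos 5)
Your proposal is correct and follows essentially the same route as the paper: the per-arm, per-round concentration bound $\gamma/(2Kr^2)$ is exactly Equation~\eqref{SE0} from the proof of Theorem~\ref{theorem0}, the union bound over arms and the convergent sum $\sum_r \frac{1}{2r^2} < 1$ give a per-agent failure probability of at most $\gamma$, and a final union bound over the $N$ agents yields $P(e_0) \le N\gamma$. No gaps.
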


\textbf{We will assume that $e_0$ does not occur with probability $1 - N\gamma$ and proceed with our proof.}

\begin{proposition}
    
 Consider the bad event $e_1$ to be when there exist two agents learning the same bandit and are clustered in different clusters. Then\
\begin{equation}
    P(e_1 \cap \bar{e_0} ) \leq N.K(2\sqrt{2}.\frac{\gamma^{0.75}}{K^{0.75}}+\frac{\gamma}{K})
\end{equation}
\begin{proof}
    
 Two agents, $i$ and $j$, learning the same bandit will not be assigned to the same cluster if $D(\hat{\mu}^i, \hat{\mu}^j) \geq \frac{\eta}{2}$, i.e., there exists an arm in the union of the active sets of these two agents, whose estimated means for agents $i$ and $j$ differ by more than $\frac{\eta}{2}$.

\textbf{Proof sketch:} We want to prove that the probability that there exist agents $i$ and $j$, and an arm $k$ in $S_i \cup S_j$ such that $|\hat{\mu}^j_k - \hat{\mu}^i_k| \geq \eta/2$, is less than $N K \left( 2\sqrt{2} \cdot \frac{\gamma^{0.75}}{K^{0.75}} + \frac{\gamma}{K} \right)$. We first prove the following claims.

Claim \ref{claim1} states that, given $\bar{e_0}$, for any agent $i$, all the arms in $S_i$ will be "good" arms. We define the set of "good" arms for agent $i$, denoted as $G_i$, as the set of arms whose true mean is within $2\epsilon_R$ of the mean of the best arm for agent $i$.

From Claim \ref{claim2}, we conclude that $e_1 \cap \bar{e_0}$ implies there must exist an agent $i$ and an arm $k \in G_i$ such that the estimate of the $k^{th}$ arm for the $i^{th}$ agent is at least $\eta/2 - \epsilon_R/2$ less than its true mean.

Claim \ref{claim3} puts an upper bound on the probability that a ``good'' arm gets eliminated after $r$ rounds of successive elimination.

Finally, we combine Claims \ref{claim1}, \ref{claim2}, and \ref{claim3} to obtain the upper bound on the event $e_1$.

\begin{claim}
\label{claim1}
    For all $i \in [N]$ and $k \in S_i$, we have $\mu_{\mathcal{M}(i),k} \geq \mu_{\mathcal{M}(i),k^\ast_{\mathcal{M}(i)}} - 2\epsilon_R$.

    \begin{proof}
      
  All arms with a true mean at least $2\epsilon_R$ less than the mean of the best arm will be eliminated after round $R$, i.e., all the arms in the set
\begin{equation}\label{equation_E}
    E = \{ k \mid \mu_{\mathcal{M}(i),k} \leq \mu_{\mathcal{M}(i),k^\ast_{\mathcal{M}(i)}} - 2\epsilon_R \}
\end{equation}
 
   From Proposition \ref{prop1}, for $k \in E$, we have:
\begin{align*}
    |\mu_{\mathcal{M}(i),k} - \hat{\mu}^i_{k,R}| &\leq \epsilon_R / 2, \\
    | \hat{\mu}^i_{k^\ast_{\mathcal{M}(i)},R} - \mu_{\mathcal{M}(i),k^\ast_{\mathcal{M}(i)}}| &\leq \epsilon_R / 2, \\
    \text{Hence,} \quad | \hat{\mu}^i_{k^\ast_{\mathcal{M}(i)},R} - \hat{\mu}^i_{k,R}| &\geq 2\epsilon_R - \epsilon_R / 2 - \epsilon_R / 2 \geq \epsilon_R.
\end{align*}

    \end{proof}
\end{claim}

\begin{claim}\label{claim2}
    $e_1 \cap \bar{e_0} \longrightarrow$ There exists a pair of agents $i,j$ with $\mathcal{M}(i) = \mathcal{M}(j)$ and an arm $k \in S_i \cup S_j$, such that
    \begin{align*}
    \hat{\mu}^i_{k} &\leq \mu_{\mathcal{M}(i),k} - \left(\frac{\eta}{2} - \frac{\epsilon_R}{2}\right), \\
    & \text{ \quad \quad \quad\quad or} \\
    \hat{\mu}^j_{k} &\leq \mu_{\mathcal{M}(j),k} - \left(\frac{\eta}{2} - \frac{\epsilon_R}{2}\right).
    \end{align*}

\begin{proof}

$e_1$ implies that there exists a pair of agents $i,j$ such that $\mathcal{M}(i) = \mathcal{M}(j)$ and some arm $k \in S_i \cup S_j$ for which the difference in estimated means of that arm between the two agents $i,j$ is more than $\eta/2$, i.e., 
\[
| \hat{\mu}^i_{k} - \hat{\mu}^j_{k}| \geq \eta/2.
\]

Since $k$ must belong to either $S_i$ or $S_j$, let's assume $k \in S_j$. Then, from Claim \ref{claim1}, we have $k \in G_i = G_j$. Also, from Proposition \ref{prop1}, we have 
\[
|\mu_{\mathcal{M}(j),k} - \hat{\mu}^j_{k}| \leq \epsilon_R / 2,
\]
and hence 
\[
|\mu_{\mathcal{M}(i),k} - \hat{\mu}^i_{k}| \geq \frac{\eta}{2} - \frac{\epsilon_R}{2}.
\]
This implies:
\begin{align}
    \Rightarrow \hat{\mu}^i_{k} - \mu_{\mathcal{M}(i),k} &\leq -\left(\frac{\eta}{2} - \frac{\epsilon_R}{2}\right)\label{eq10},\\
    \text{or} \nonumber \\
    \Rightarrow \hat{\mu}^i_{k} - \mu_{\mathcal{M}(i),k} &\geq \left(\frac{\eta}{2} - \frac{\epsilon_R}{2}\right)\label{eq11}.
\end{align}

But Equation \eqref{eq11} isn't possible under the good event $\bar{e_0}$, because arm $k$ must get eliminated from agent $i$ in some round $r \in [1,R]$. If arm $k$ is in both $S_i$ and $S_j$, then 
\[
| \hat{\mu}^i_{k} - \hat{\mu}^j_{k}| \leq \epsilon_R,
\]
and hence, an error cannot occur due to this arm. Arm $k$ gets eliminated in round $r$ only if there exists some arm $k'$ whose estimated mean is greater than $\hat{\mu}^i_{k} + \epsilon_r$ (see line 7 of Algorithm \ref{SE}). This implies:
\begin{align*}
    \hat{\mu}^i_{k'} &\geq \hat{\mu}^i_{k} + \epsilon_r, \\
    \mu_{\mathcal{M}(i),k'} + \frac{\epsilon_r}{2} &\geq \mu_{\mathcal{M}(i),k} + \frac{\eta}{2} - \frac{\epsilon_R}{2} + \epsilon_r, \\
    &\geq \mu_{\mathcal{M}(i),k} + \frac{17\epsilon_R}{2} - \frac{\epsilon_R}{2} + \frac{\epsilon_r}{2}, \\
    &> \mu_{\mathcal{M}(i),k} + 2\epsilon_R \refstepcounter{equation}\tag{\theequation}\label{eqcl2}.
\end{align*}

Equation \eqref{eqcl2} violates Claim \ref{claim1}, hence \eqref{eq11} isn't possible. Therefore, Equation \eqref{eq10} proves our claim.

\end{proof}
\end{claim}

\begin{claim}\label{claim3}
For any arm $k$ of agent $i$ satisfying $\mu_{\mathcal{M}(i),k} > \mu_{\mathcal{M}(i),k^\ast_{\mathcal{M}(i)}} - 2\epsilon_R$, the probability that this arm gets eliminated after $r$ rounds, assuming $\bar{e}_0$ holds, is less than
\[
e^{-4 \frac{\log\left(\frac{3Kr^2}{\gamma}\right)\left(\frac{\epsilon_r}{2} - 2\epsilon_R\right)^2}{\epsilon_r^2}}.
\]

\begin{proof}
Arm $k$ can be removed from agent $i$ after $r$ rounds only if 
\[
\hat{\mu}^i_{k,r} < \hat{\mu}^i_{k',r} - \epsilon_r \quad \text{for some arm} \ k'.
\]
From Proposition \ref{prop1}, we know that
\[
|\hat{\mu}^i_{k',r} - \mu_{\mathcal{M}(i),k'}| \leq \frac{\epsilon_r}{2}.
\]
Therefore, we can deduce:
\begin{equation}\label{eq13}
\hat{\mu}^i_{k,r} \leq \mu_{\mathcal{M}(i),k} - \frac{\epsilon_r}{2} + 2\epsilon_R.
\end{equation}

After $r$ rounds, the total number of pulls for any arm is greater than $8 \cdot \log\left(\frac{4Kr^2}{\gamma}\right)/\epsilon_r^2$. Using Hoeffding's inequality, we can bound the probability of the event in equation \eqref{eq13} by:
\[
\leq e^{-4 \frac{\log\left(\frac{4Kr^2}{\gamma}\right)\left(\frac{\epsilon_r}{2} - 2\epsilon_R\right)^2}{\epsilon_r^2}}.
\]
This concludes the proof for Claim \ref{claim3}.
\end{proof}
\end{claim}

% From claim\ref{claim1} and \ref{claim2} we can say $e_1 \cap \bar{e}_0$ implies there exist an arm $k$ satisfying $\mu_{\mathcal{M}(i),k} \geq \mu_{\mathcal{M}(i),k^\ast_{\mathcal{M}(i)}} - 2\epsilon_R$ such that $\hat{\mu}^i_{k} \leq \mu_{\mathcal{M}(i),k} - (\eta/2 - \epsilon_R/2)$. We calculate the probability of this event to bound the probability of $e_1 \cap \bar{e}_0$.  We calculate the probability of this event by summing over all rounds $r \in [1,R]$ taking minimum over the probability of the  events, that an arm gets eliminated in round r and it's estimated mean is $(\eta/2 - \epsilon_R/2)$ less than it's true mean, and then putting union bound for all the arms in the set $E$ from eq.\ref{equation_E}, for all the agents.

From Claim \ref{claim3}, we have the probability of an arm getting eliminated in round \( r \). Also, after round \( r \), the total number of pulls of any arm is at least \( 8 \cdot \log\left(\frac{3Kr^2}{\gamma}\right) / \epsilon_r^2 \). Hence, from Hoeffding's inequality, we can derive the following:

\[
    \mathcal{P}\left(\hat{\mu}^i_{k,r} \leq \mu_{\mathcal{M}(i),k} - \left(\frac{\eta}{2} - \frac{\epsilon_R}{2}\right)\right) \leq \exp\left(-4 \log\left(\frac{4Kr^2}{\gamma}\right) \frac{\left(\frac{\eta}{2} - \frac{\epsilon_R}{2}\right)^2}{\epsilon_r^2}\right)
\]

To bound the probability of \( e_1 \cap \bar{e_0} \), using Claim \ref{claim2}, we can say that it is upper-bounded by the probability that \textbf{there exists an arm \( k \) of agent \( i \) and an agent \( j \) satisfying \( k \in S_i \cup S_j \) such that \( \hat{\mu}^i_{k,r} \leq \mu_{\mathcal{M}(i),k} - \left(\frac{\eta}{2} - \frac{\epsilon_R}{2}\right) \)}.
 We assume that arm \( k \) gets eliminated after some round \( r \in [1,R] \), since if it doesn't get eliminated after round \( R \), its estimated mean will be within \( \frac{\epsilon_R}{2} \), and thus \( \hat{\mu}^i_{k,r} \leq \mu_{\mathcal{M}(i),k} - \left(\frac{\eta}{2} - \frac{\epsilon_R}{2}\right) \) won't hold, as $\epsilon_R = 2^{-log(17/\eta)} = \eta/17$.

The number of pulls for an arm that gets eliminated after round \( r \) will be at least \( 8 \cdot \log\left(\frac{4Kr^2}{\gamma}\right) / \epsilon_r^2 \). Hence, the probability that after \( r \) rounds arm \( k \) gets eliminated and has an estimated mean \( \frac{\eta}{2} - \frac{\epsilon_R}{2} \) less than its true mean is bounded by

\[
    \min\left( e^{-4 \log\left(\frac{4Kr^2}{\gamma}\right) \frac{\left(\frac{\epsilon_r}{2} - 2\epsilon_R\right)^2}{\epsilon_r^2}}, e^{-4 \log\left(\frac{4Kr^2}{\gamma}\right) \frac{\left(\frac{\eta}{2} - \frac{\epsilon_R}{2}\right)^2}{\epsilon_r^2}} \right)
\]

Taking the union bound over all such arms \( k \) and corresponding agent \( i \), we obtain the following equation:

\[
    P(e_{1} \cap \bar{e}_{0}) \leq N \cdot K \sum_{r=1}^{R} \min\left( e^{-4 \log\left(\frac{4Kr^2}{\gamma}\right) \frac{\left(\frac{\epsilon_r}{2} - 2\epsilon_R\right)^2}{\epsilon_r^2}}, e^{-4 \log\left(\frac{4Kr^2}{\gamma}\right) \frac{\left(\frac{\eta}{2} - \frac{\epsilon_R}{2}\right)^2}{\epsilon_r^2}} \right)
\]

\[
    \leq N \cdot K \sum_{r=1}^{R} e^{\min\left( -4 \log\left(\frac{4Kr^2}{\gamma}\right) \frac{\left(\frac{\epsilon_r}{2} - 2\epsilon_R\right)^2}{\epsilon_r^2}, -4 \log\left(\frac{4Kr^2}{\gamma}\right) \frac{\left(\frac{\eta}{2} - \frac{\epsilon_R}{2}\right)^2}{\epsilon_r^2} \right)}
\]

\[
    \leq N \cdot K \sum_{r=1}^{R} \left( \frac{\gamma}{4Kr^2} \right)^{\max\left( 1 - 8 \frac{\epsilon_R}{\epsilon_r}, 4 \frac{\left(\frac{\eta}{2} - \frac{\epsilon_R}{2}\right)^2}{\epsilon_r^2} \right)}.
\]

$\epsilon_R = 2^{-log(17/\eta)} = \eta/17$,
\begin{align*}
    1- 8\frac{\epsilon_R}{\epsilon_r} \geq 0.75 \text{ for } r \leq R-5 \\
 4 \frac{\left(\frac{\eta}{2} - \frac{\epsilon_R}{2}\right)^2}{\epsilon_r^2} \geq 1 \text{ for } r \geq R-4 \\
\end{align*}
Hence,
\begin{align*}
   \leq N.K(\sum_{r=1}^{R-5}(\frac{\gamma}{4Kr^2})^{0.75} + \sum_{r=R-4}^{R}(\frac{\gamma}{4Kr^2})) \\
 \leq N.K(2\sqrt{2}.\frac{\gamma^{0.75}}{K^{0.75}}+\frac{\gamma}{K})
\end{align*}

 \vspace{5mm}
\end{proof}
\end{proposition}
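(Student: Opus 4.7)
The plan is to condition on the good event $\bar{e_0}$ of Proposition~\ref{prop1} and identify the precise structural failure that $e_1$ forces, then apply a Hoeffding-based union bound tuned to the $SE$ pull schedule.

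First I would unpack $e_1$. Two same-bandit agents $i, j$ are separated by the graph $\mathcal{G}$ iff there is an arm $k \in S_i \cup S_j$ with $|\hat\mu^i_k - \hat\mu^j_k| > \eta/2$. WLOG $k \in S_j$; since $k$ survived all $R$ rounds for $j$ and $\mu_{\mathcal{M}(i),k} = \mu_{\mathcal{M}(j),k}$, the event $\bar{e_0}$ forces
\[
|\hat\mu^i_k - \mu_{\mathcal{M}(i),k}| \geq \eta/2 - \epsilon_R/2.
\]
If $k$ were also in $S_i$ then $\bar{e_0}$ would put both estimates within $\epsilon_R/2$ of the truth, contradicting the gap (since $\epsilon_R = \eta/17 \ll \eta/2$), so $k$ must have been eliminated from agent $i$ at some round $r \le R$. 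I would first establish a small structural claim: every surviving arm in any $S_j$ has true mean within $2\epsilon_R$ of the best arm of its bandit, because otherwise the round-$R$ elimination test would fire under $\bar{e_0}$. This rules out the \emph{upward} direction of the deviation, because an upward excess of $\hat\mu^i_k$ would demand a competitor $k'$ with true mean exceeding the best by a gap that $\epsilon_R = \eta/17$ cannot accommodate. So only the downward deviation $\hat\mu^i_k \le \mu_{\mathcal{M}(i),k} - (\eta/2 - \epsilon_R/2)$ remains to bound.

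Next I would bound this residual event. For fixed agent $i$, arm $k$, and round $r$, two things must hold jointly: (a) $k$ is eliminated at round $r$, and (b) the empirical mean is $\ge \eta/2 - \epsilon_R/2$ below the truth. Hoeffding on the $8\log(4Kr^2/\gamma)/\epsilon_r^2$ pulls bounds (a) by $(\gamma/(4Kr^2))^{1 - 8\epsilon_R/\epsilon_r}$ (using the good-arm constraint from the previous step to lower-bound the deviation required of the competitor) and bounds (b) by $(\gamma/(4Kr^2))^{4(\eta/2-\epsilon_R/2)^2/\epsilon_r^2}$. Taking the smaller of the two and union-bounding over $N$ agents, $K$ arms, and $R$ rounds, I would split the sum at $r = R-5$: the choice $\epsilon_R = \eta/17$ is calibrated precisely so that for $r \le R-5$ the exponent on (a) is $\ge 3/4$ and for $r \ge R-4$ the exponent on (b) is $\ge 1$. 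The two pieces yield convergent series contributing $2\sqrt{2}\,\gamma^{3/4}/K^{3/4}$ and $\gamma/K$ respectively, multiplied by the $NK$ union factor, giving exactly the claimed bound.

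The main obstacle is this two-regime bookkeeping. The exponents on (a) and (b) are antagonistic (elimination is harder in late rounds when there are more pulls, but the required downward deviation becomes easier on the smaller $\epsilon_r$), so neither exponent uniformly dominates, and one must split the sum at a carefully chosen threshold round. The constant $17$ in $R = \log(17/\eta)$ is the precise value that makes the crossover round leave both halves of the split sum summable to the right orders; everything else --- Hoeffding on the $SE$ schedule, the good-arm structural observation, and the union bound --- is routine once the split is set up.
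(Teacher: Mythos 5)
Your proposal is correct and follows essentially the same route as the paper's proof: the same structural claim that surviving arms are within $2\epsilon_R$ of the best (Claim~\ref{claim1}), the same reduction of $e_1\cap\bar{e_0}$ to a one-sided downward deviation of an eliminated arm's estimate (Claim~\ref{claim2}), the same pair of Hoeffding bounds (elimination probability versus deviation probability) combined via a minimum, and the same split of the union-bounded sum at round $R-5$ exploiting $\epsilon_R = \eta/17$. No substantive differences to report.
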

\begin{proposition}    
Consider the event \( e_2 \) as the event where two agents learning different bandits get clustered into the same cluster, i.e., if \( D(\hat{\mu}_i,\hat{\mu}_j) \leq \frac{\eta}{2} \), then
\begin{equation}
    P(e_{2} \cap \bar{e}_0) \leq (M-1) N \left(\frac{\gamma}{K}\right).
\end{equation}
\end{proposition}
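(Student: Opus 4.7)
The plan is to combine Assumption~\ref{keyassumption1} with the two-sided concentration provided by $\bar{e}_0$ via a telescoping identity on four empirical means, and then control the resulting \emph{early-elimination} consequence by a per-(agent, foreign best arm) Hoeffding bound.

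Fix agents $i,j$ with $\mathcal{M}(i) = m_1 \neq m_2 = \mathcal{M}(j)$. Under $\bar{e}_0$, an argument parallel to Claim~\ref{claim1} shows that each bandit's best arm survives all $R$ rounds of its own agent's SE, so $k^\ast_{m_1}\in S_i$ and $k^\ast_{m_2}\in S_j$; in particular both arms lie in $S_i\cup S_j$. The direct-edge condition defining $e_2$ then forces
\[
|\hat{\mu}^i_{k^\ast_{m_1}} - \hat{\mu}^j_{k^\ast_{m_1}}| \leq \eta/2 \quad \text{and} \quad |\hat{\mu}^i_{k^\ast_{m_2}} - \hat{\mu}^j_{k^\ast_{m_2}}| \leq \eta/2,
\]
so the signed sum $S := (\hat{\mu}^i_{k^\ast_{m_1}} - \hat{\mu}^j_{k^\ast_{m_1}}) + (\hat{\mu}^j_{k^\ast_{m_2}} - \hat{\mu}^i_{k^\ast_{m_2}})$ satisfies $|S|\leq \eta$.

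Rewriting $S = (\hat{\mu}^i_{k^\ast_{m_1}} - \hat{\mu}^i_{k^\ast_{m_2}}) + (\hat{\mu}^j_{k^\ast_{m_2}} - \hat{\mu}^j_{k^\ast_{m_1}})$ converts it into two within-agent gaps between a local best arm and a foreign best arm. Using Assumption~\ref{keyassumption1} (each within-agent true gap is at least $\eta$) together with the $\epsilon_r/2$-concentration of $\bar{e}_0$ lower bounds each summand by $\eta - \epsilon_R/2 - \epsilon_{r^\star}/2$, where $r^\star\leq R$ is the round the foreign best arm was last estimated at the agent. Hence $S \geq 2\eta - \epsilon_R - (\epsilon_{r_i} + \epsilon_{r_j})/2$, with $r_i,r_j$ denoting the last-estimation rounds of $k^\ast_{m_2}$ at agent $i$ and $k^\ast_{m_1}$ at agent $j$, respectively. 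Since $R = \log_2(17/\eta)$ gives $\epsilon_R = \eta/17$, combining the upper and lower bounds on $S$ forces $\epsilon_{r_i}+\epsilon_{r_j}\geq 32\eta/17$, i.e.\ $\min(r_i,r_j)\leq R-4$.

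Therefore, under $\bar{e}_0$ the event $e_2$ entails that for some agent $a$ and some foreign bandit $m\neq\mathcal{M}(a)$, the arm $k^\ast_m$ is eliminated from $S_a$ by some round $r\leq R-4$; call this event $B_{a,m}$. The elimination condition $\hat{\mu}^a_{k^\ast_m,r} < \max_l \hat{\mu}^a_{l,r} - \epsilon_r$, combined with Assumption~\ref{keyassumption1} and the $\bar{e}_0$-control on $\max_l \hat{\mu}^a_{l,r}$, reduces early elimination to a one-sided tail event on the round-$r$ empirical mean of $k^\ast_m$. Applying Hoeffding with the SE round-$r$ sample count $8\log(4Kr^2/\gamma)/\epsilon_r^2$ gives the per-round budget $\gamma/(2Kr^2)$, exactly the quantity already used inside Theorem~\ref{theorem0} and Claim~\ref{claim3}. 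Summing over $r\leq R-4$ gives $P(B_{a,m})\leq \gamma/K$, and a union bound over the $N(M-1)$ pairs $(a,m)$ with $m\neq\mathcal{M}(a)$ produces the stated inequality $(M-1)N(\gamma/K)$. The main obstacle is this final Hoeffding step: one must carefully convert ``$k^\ast_m$ loses its elimination contest at an early round $r$'' into a one-sided tail event on a single empirical mean while handling the random $\max_l \hat{\mu}^a_{l,r}$ via the $\bar{e}_0$-control, in close analogy to Claim~\ref{claim3} for $e_1$.
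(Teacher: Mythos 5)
Your telescoping identity is actually a clean, symmetric route to the paper's Claim~\ref{claim4}: writing $S$ agent-wise and invoking Assumption~\ref{keyassumption1} shows that the two overshoots $D_i := \hat{\mu}^i_{k^\ast_{m_2}} - \mu_{m_1,k^\ast_{m_2}}$ and $D_j := \hat{\mu}^j_{k^\ast_{m_1}} - \mu_{m_2,k^\ast_{m_1}}$ satisfy $D_i + D_j \ge \eta - \epsilon_R$, so at least one of them is at least $(\eta-\epsilon_R)/2 = 8\eta/17 = \eta/2 - \epsilon_R/2$, which is exactly the paper's deviation event. The fatal move is what you do next: you replace each overshoot by its $\bar{e}_0$-bound $\epsilon_{r}/2$ and conclude only that $\min(r_i,r_j)\le R-4$, i.e.\ that some foreign best arm is eliminated by round $R-4$. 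That event is not rare. By Assumption~\ref{keyassumption1} the arm $k^\ast_m$ sits at least $\eta$ below agent $a$'s own best arm, so under $\bar{e}_0$ it is \emph{guaranteed} to be eliminated by round $R-3$; and if its true gap is merely $2\eta$ (which the assumption permits) it is eliminated by round $R-4$ or earlier with conditional probability one. Hence $P(B_{a,m})$ can be $\Theta(1)$, the claimed bound $P(B_{a,m})\le \gamma/K$ is false, and no Hoeffding computation can rescue it, because early elimination of a foreign best arm is the \emph{typical} behaviour rather than a tail event. The substitution $D_i\le\epsilon_{r_i}/2$ is a valid implication, but it is precisely the step that discards the large-deviation structure the proof needs.

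Even if you retain the overshoot $D\ge 8\eta/17$ as the tail event (as the paper does), a single per-round Hoeffding bound does not suffice: with $n_r = 8\log(4Kr^2/\gamma)/\epsilon_r^2$ samples the exponent is $256\log(4Kr^2/\gamma)\,(\epsilon_R/\epsilon_r)^2$, which falls below $\log(4Kr^2/\gamma)$ for every $r\le R-5$, yielding only a $\gamma^{c}$ bound with $c<1$. This is why the paper pairs Claim~\ref{claim4} with Claim~\ref{claim5}: for the overshooting arm to be frozen at round $r$ it must lose its elimination contest to some arm $k'$ whose empirical mean \emph{simultaneously overshoots upward} by $\epsilon_r - \eta/2 - \epsilon_R/2$ --- an independent deviation on disjoint samples --- and only the product of the two tail probabilities is small enough at every round to sum to the stated bound. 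Your sketch points this second ingredient in the wrong direction: you invoke the $\bar{e}_0$-control on $\max_l \hat{\mu}^a_{l,r}$ to \emph{bound} the maximum, whereas the argument needs the event that this maximum is anomalously high as a second source of smallness.
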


\begin{proof}
\textbf{Proof sketch:}  
We want to bound the probability that any two agents who are learning different bandits get clustered in the same cluster. Throughout this proof, we consider the possibilities only under the event \( \bar{e}_0 \), and the probabilities are bounded as intersections with \( \bar{e}_0 \).

We first claim (Claim \ref{claim4}) that if there exist two agents \( i, j \) who get clustered into the same cluster, then the estimated mean of arm \( k^\ast_{\mathcal{M}(j)} \) for the \( i \)th user must satisfy
\[
\hat{\mu}^i_{k^\ast_{\mathcal{M}(j)}} \geq \mu_{\mathcal{M}(j),k^\ast_{\mathcal{M}(j)}} - \frac{\eta}{2} - \frac{\epsilon_R}{2}.
\]

Next, in Claim \ref{claim5}, we state that if \( k^\ast_{\mathcal{M}(j)} \) from Claim \ref{claim4} gets eliminated for user \( i \) after \( r \) rounds of successive elimination, then there must exist an arm \( k' \in [K] \setminus \{ k^\ast_{\mathcal{M}(j)} \} \) such that
\[
\hat{\mu}^i_{k'} \geq \hat{\mu}^i_{k^\ast_{\mathcal{M}(j)}} + \epsilon_r.
\]
We bound the probabilities of the events from Claims \ref{claim4} and \ref{claim5} using Hoeffding's inequality for sub-Gaussian random variables. Since the events from Claims \ref{claim4} and \ref{claim5} are independent, we can take the product of their probabilities to obtain an upper bound on the probability of their intersection.

\end{proof}

\textbf{Define} \( e_2^{i,j} \) as the event that agents \( i \) and \( j \), with different best arms \( k^\ast_{\mathcal{M}(i)} \) and \( k^\ast_{\mathcal{M}(j)} \), where \( \mu_{\mathcal{M}(j),k^\ast_{\mathcal{M}(j)}} \geq \mu_{\mathcal{M}(i),k^\ast_{\mathcal{M}(i)}} \), are clustered into the same cluster.
\begin{claim}\label{claim4}
The event \( e_2 \) implies that there exists an agent \( i \) and an arm \( k^\ast_{\mathcal{M}(j)} \in \{ k^\ast_1, k^\ast_2, \dots, k^\ast_M \} \setminus k^\ast_{\mathcal{M}(i)} \) where \( \mu_{\mathcal{M}(j),k^\ast_{\mathcal{M}(j)}} \geq \mu_{\mathcal{M}(i),k^\ast_{\mathcal{M}(i)}} \), such that 
\[
\hat{\mu}^i_{k^\ast_{\mathcal{M}(j)}} \geq \mu_{\mathcal{M}(j),k^\ast_{\mathcal{M}(j)}} - \frac{\eta}{2} - \frac{\epsilon_R}{2}.
\]

\begin{proof}
The event \( e_2 \) implies that there exist two agents \( i, j \) such that \( e_2^{i,j} \) holds. From the definition of \( e_2^{i,j} \) and our condition for clustering we have,
\begin{equation}
    e_2^{i,j} \implies \left( |\hat{\mu}^i_{k^\ast_{\mathcal{M}(i)}} - \hat{\mu}^j_{k^\ast_{\mathcal{M}(i)}}| < \frac{\eta}{2} \right) \land \left( |\hat{\mu}^j_{k^\ast_{\mathcal{M}(j)}} - \hat{\mu}^i_{k^\ast_{\mathcal{M}(j)}}| < \frac{\eta}{2} \right) \land \left( \mu_{\mathcal{M}(j),k^\ast_{\mathcal{M}(j)}} \geq \mu_{\mathcal{M}(i),k^\ast_{\mathcal{M}(i)}}\right).\label{e2ij}
\end{equation}

From Assumption \ref{keyassumption1}, we have 
\[
(\mu_{i,k^\ast_{\mathcal{M}(i)}} - \mu_{i,k^\ast_{\mathcal{M}(j)}}) \geq \eta.
\]

Also, since \( \mu_{j,k^\ast_{\mathcal{M}(j)}} \geq \mu_{i,k^\ast_{\mathcal{M}(i)}} \), it follows that
\[
(\mu_{j,k^\ast_{\mathcal{M}(j)}} - \mu_{i,k^\ast_{\mathcal{M}(j)}}) \geq \eta.
\]
From the equation above, we have 
\[
|\hat{\mu}^j_{k^\ast_{\mathcal{M}(j)}} - \hat{\mu}^i_{k^\ast_{\mathcal{M}(j)}}| < \frac{\eta}{2}.
\]

Thus,
\begin{align*}
\hat{\mu}^i_{k^\ast_{\mathcal{M}(j)}} - \mu_{i,k^\ast_{\mathcal{M}(j)}} \geq (\mu_{j,k^\ast_{\mathcal{M}(j)}} - \mu_{i,k^\ast_{\mathcal{M}(j)}}) - \frac{\eta}{2} - \frac{\epsilon_R}{2} \quad \text{or} \quad |\hat{\mu}^j_{k^\ast_{\mathcal{M}(j)}} - \mu_{j,k^\ast_{\mathcal{M}(j)}}| \geq \frac{\epsilon_R}{2}.
\end{align*}

Using Proposition \ref{prop1}, we can exclude the second term in the equation above. Therefore,
\begin{align*}
   e_2^{i,j} \implies \left( \hat{\mu}^i_{k^\ast_{\mathcal{M}(j)}} - \mu_{i,k^\ast_{\mathcal{M}(j)}} \geq (\mu_{j,k^\ast_{\mathcal{M}(j)}} - \mu_{i,k^\ast_{\mathcal{M}(j)}}) - \frac{\eta}{2} - \frac{\epsilon_R}{2} \right).
\end{align*}

Hence, we conclude that
\[
e_2 \implies \hat{\mu}^i_{k^\ast_{\mathcal{M}(j)}} \geq \mu_{\mathcal{M}(j),k^\ast_{\mathcal{M}(j)}} - \frac{\eta}{2} - \frac{\epsilon_R}{2}.
\]

\end{proof}

\end{claim}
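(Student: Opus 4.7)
The plan is to derive Claim~\ref{claim4} in three short deterministic steps by combining the algorithm's clustering rule with the concentration guarantees already established in Proposition~\ref{prop1} for the good event $\bar{e}_0$.

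First, I would unpack the event $e_2$. By the algorithmic criterion for placing two agents in the same cluster (the edge condition in lines 11--13 of Algorithm~\ref{ClBAI}), the existence of agents $i,j$ with $\mathcal{M}(i)\neq\mathcal{M}(j)$ in a common cluster forces $|\hat{\mu}^i_k - \hat{\mu}^j_k| \le \eta/2$ for every arm $k \in S_i \cup S_j$. I would then relabel the pair, without loss of generality, so that $\mu_{\mathcal{M}(j),k^\ast_{\mathcal{M}(j)}} \ge \mu_{\mathcal{M}(i),k^\ast_{\mathcal{M}(i)}}$; this matches the quantifier in the claim and is the ordering that will let Assumption~\ref{keyassumption1} be invoked in the useful direction when Claim~\ref{claim4} is applied in the subsequent Claim~5 step.

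Second, I would verify that the arm $k^\ast_{\mathcal{M}(j)}$ actually lies in $S_i \cup S_j$ under $\bar{e}_0$, so that the clustering inequality can be instantiated at it. Proposition~\ref{prop1} guarantees $|\hat{\mu}^j_{k,r} - \mu_{\mathcal{M}(j),k}| \le \epsilon_r/2$ in every round $r$ for agent $j$, and from the elimination rule in Algorithm~\ref{SE} this immediately yields that the true best arm $k^\ast_{\mathcal{M}(j)}$ of bandit $\mathcal{M}(j)$ is never dropped from agent $j$'s active set throughout phase~1. Hence $k^\ast_{\mathcal{M}(j)} \in S_j \subseteq S_i \cup S_j$.

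Third, I would instantiate the clustering inequality at $k = k^\ast_{\mathcal{M}(j)}$ to obtain $\hat{\mu}^i_{k^\ast_{\mathcal{M}(j)}} \ge \hat{\mu}^j_{k^\ast_{\mathcal{M}(j)}} - \eta/2$, and chain it with the one-sided concentration $\hat{\mu}^j_{k^\ast_{\mathcal{M}(j)}} \ge \mu_{\mathcal{M}(j),k^\ast_{\mathcal{M}(j)}} - \epsilon_R/2$ from Proposition~\ref{prop1} (applied at the last round $R$ since $k^\ast_{\mathcal{M}(j)}$ survives, by the previous step). A single triangle-style addition gives exactly the stated bound $\hat{\mu}^i_{k^\ast_{\mathcal{M}(j)}} \ge \mu_{\mathcal{M}(j),k^\ast_{\mathcal{M}(j)}} - \eta/2 - \epsilon_R/2$.

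I do not anticipate any serious obstacle for Claim~\ref{claim4} itself: the argument is a purely deterministic implication once the good event $\bar{e}_0$ is assumed, and every ingredient is already in place. The only subtlety is the WLOG relabeling, which is what permits the asymmetric statement $\mu_{\mathcal{M}(j),k^\ast_{\mathcal{M}(j)}} \ge \mu_{\mathcal{M}(i),k^\ast_{\mathcal{M}(i)}}$; without it, the triangle inequality could be applied in the less useful direction. The real probabilistic work is absorbed downstream into Claim~5 and the Hoeffding bound, which together convert this deterministic implication into the target estimate $\mathcal{P}(e_2 \cap \bar{e}_0) \le (M-1)N(\gamma/K)$.
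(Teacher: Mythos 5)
Your proposal is correct and follows essentially the same route as the paper's proof: apply the clustering edge condition at the arm $k^\ast_{\mathcal{M}(j)}$, combine it with the round-$R$ concentration of $\hat{\mu}^j_{k^\ast_{\mathcal{M}(j)}}$ from Proposition~\ref{prop1}, and use the WLOG ordering $\mu_{\mathcal{M}(j),k^\ast_{\mathcal{M}(j)}} \geq \mu_{\mathcal{M}(i),k^\ast_{\mathcal{M}(i)}}$. Your version is marginally tidier in that it explicitly checks $k^\ast_{\mathcal{M}(j)} \in S_j$ (so the edge condition may be instantiated there) and avoids the paper's intermediate disjunction that is then excluded via Proposition~\ref{prop1}, but the substance is identical.
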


\begin{claim}\label{claim5}
Assume that arm \( k^\ast_{\mathcal{M}(j)} \) gets eliminated after round \( r \) for user \( i \). Then, there must exist an arm \( k' \in [K] \setminus k^\ast_{\mathcal{M}(j)} \) such that 
\[
\hat{\mu}^i_{k'} - \mu_{\mathcal{M}(i),k'} \geq \epsilon_r - \frac{\eta}{2} - \frac{\epsilon_R}{2}
\]

at the end of round \( r \).

\begin{proof}
The arm \( k^\ast_{\mathcal{M}(j)} \) is eliminated after round \( r \) only if there exists an arm \( k' \in [K] \setminus k^\ast_{\mathcal{M}(j)} \) such that 
\[
\hat{\mu}^i_{k'} \geq \hat{\mu}^i_{k^\ast_{\mathcal{M}(j)}} + \epsilon_r.
\]

From Claim \ref{claim4}, we have 
\[
\hat{\mu}^i_{k^\ast_{\mathcal{M}(j)}} \geq \mu_{\mathcal{M}(j),k^\ast_{\mathcal{M}(j)}} - \frac{\eta}{2} - \frac{\epsilon_R}{2}.
\]

Substituting this into the previous inequality:
\[
\hat{\mu}^i_{k'} \geq \mu_{\mathcal{M}(j),k^\ast_{\mathcal{M}(j)}} + \epsilon_r - \frac{\eta}{2} - \frac{\epsilon_R}{2}.
\]

Since \( \mu_{\mathcal{M}(j),k^\ast_{\mathcal{M}(j)}} \geq \mu_{\mathcal{M}(i),k^\ast_{\mathcal{M}(i)}} \), we obtain:
\[
\hat{\mu}^i_{k'} \geq \mu_{\mathcal{M}(i),k^\ast_{\mathcal{M}(i)}} + \epsilon_r - \frac{\eta}{2} - \frac{\epsilon_R}{2}.
\]

Thus, 
\[
\hat{\mu}^i_{k'} - \mu_{\mathcal{M}(i),k'} \geq \epsilon_r - \frac{\eta}{2} - \frac{\epsilon_R}{2}.
\]

This completes the proof of Claim \ref{claim5}.
\end{proof}
\end{claim}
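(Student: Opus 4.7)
The plan is to read off Claim~\ref{claim5} directly from the elimination rule of the $SE$ subroutine (Algorithm~\ref{SE}) and then chain in the lower bound on $\hat{\mu}^i_{k^\ast_{\mathcal{M}(j)}}$ already established in Claim~\ref{claim4}. Since Claim~\ref{claim4} has absorbed all of the probabilistic content needed from Proposition~\ref{prop1}, no new concentration argument is required here; the proof is purely algebraic.

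First, by definition of the $SE$ procedure, an arm is eliminated at the end of round $r$ only if its empirical mean is at least $\epsilon_r$ below the maximum empirical mean in the current active set. Hence the hypothesis that $k^\ast_{\mathcal{M}(j)}$ is removed at round $r$ produces some $k' \in [K]\setminus\{k^\ast_{\mathcal{M}(j)}\}$ with
\[
\hat{\mu}^i_{k'} \;\geq\; \hat{\mu}^i_{k^\ast_{\mathcal{M}(j)}} + \epsilon_r .
\]
Next, I would plug in the lower bound from Claim~\ref{claim4}, $\hat{\mu}^i_{k^\ast_{\mathcal{M}(j)}} \geq \mu_{\mathcal{M}(j),k^\ast_{\mathcal{M}(j)}} - \tfrac{\eta}{2} - \tfrac{\epsilon_R}{2}$, to get a bound on $\hat{\mu}^i_{k'}$ expressed in terms of $\mu_{\mathcal{M}(j),k^\ast_{\mathcal{M}(j)}}$.

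The last step is the one that requires a bit of care: I need to translate that bound, which is stated in terms of the mean of arm $k^\ast_{\mathcal{M}(j)}$ under bandit $\mathcal{M}(j)$, into a bound on $\hat{\mu}^i_{k'} - \mu_{\mathcal{M}(i),k'}$. For this I would invoke two monotonicity facts: (i) by the ordering convention used to define $e_2^{i,j}$ in Proposition~2 of the excerpt, we have $\mu_{\mathcal{M}(j),k^\ast_{\mathcal{M}(j)}} \geq \mu_{\mathcal{M}(i),k^\ast_{\mathcal{M}(i)}}$; and (ii) because $k^\ast_{\mathcal{M}(i)}$ is the best arm of bandit $\mathcal{M}(i)$, we have $\mu_{\mathcal{M}(i),k^\ast_{\mathcal{M}(i)}} \geq \mu_{\mathcal{M}(i),k'}$ for every $k'$. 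Composing these two and subtracting $\mu_{\mathcal{M}(i),k'}$ on both sides of the inequality from the previous paragraph yields the desired
\[
\hat{\mu}^i_{k'} - \mu_{\mathcal{M}(i),k'} \;\geq\; \epsilon_r - \tfrac{\eta}{2} - \tfrac{\epsilon_R}{2} .
\]

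The only real obstacle is the bookkeeping in this final transfer step: one must remember that the WLOG labelling adopted when defining $e_2^{i,j}$ forces $\mu_{\mathcal{M}(j),k^\ast_{\mathcal{M}(j)}}$ to dominate $\mu_{\mathcal{M}(i),k^\ast_{\mathcal{M}(i)}}$, and then use the optimality of $k^\ast_{\mathcal{M}(i)}$ to upper-bound $\mu_{\mathcal{M}(i),k'}$. Beyond that, no use of Assumption~\ref{keyassumption1} or any Hoeffding-type estimate is needed at this step, since the separation parameter $\eta$ entered only through Claim~\ref{claim4}. Once this chain of substitutions is in place, the claim follows in one line.
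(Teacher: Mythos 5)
Your proposal is correct and follows essentially the same route as the paper's own proof: the elimination rule of $SE$ yields $\hat{\mu}^i_{k'} \geq \hat{\mu}^i_{k^\ast_{\mathcal{M}(j)}} + \epsilon_r$, Claim~\ref{claim4} supplies the lower bound on $\hat{\mu}^i_{k^\ast_{\mathcal{M}(j)}}$, and the chain $\mu_{\mathcal{M}(j),k^\ast_{\mathcal{M}(j)}} \geq \mu_{\mathcal{M}(i),k^\ast_{\mathcal{M}(i)}} \geq \mu_{\mathcal{M}(i),k'}$ finishes the argument. You even make explicit the optimality step $\mu_{\mathcal{M}(i),k^\ast_{\mathcal{M}(i)}} \geq \mu_{\mathcal{M}(i),k'}$, which the paper leaves implicit in its final line.
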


From claim \ref{claim4} and \ref{claim5}, we conclude that $e_2$ can only occur if:
\begin{itemize}
    \item There exists an agent $i$ and an arm $k^\ast_{\mathcal{M}(j)} \in \{ k^\ast_1, k^\ast_2, \dots, k^\ast_M\} \backslash k^\ast_{\mathcal{M}(i)}$  where \( \mu_{\mathcal{M}(j),k^\ast_{\mathcal{M}(j)}} \geq \mu_{\mathcal{M}(i),k^\ast_{\mathcal{M}(i)}} \), such that,
    \begin{equation}\label{eqc4}
        \hat{\mu}^i_{k^\ast_{\mathcal{M}(j)}} \geq \mu_{\mathcal{M}(j),k^\ast_{\mathcal{M}(j)}} - \frac{\eta}{2} - \frac{\epsilon_R}{2}
    \end{equation} 
    \item If $k^\ast_{\mathcal{M}(j)}$ is eliminated after round $r$ from user $i$, then for some arm $k'$,
    \begin{equation}\label{eqc5}
        \hat{\mu}^i_{k'} - \mu_{\mathcal{M}(i),k^\ast_{\mathcal{M}(i)}} \geq \epsilon_r - \frac{\eta}{2} - \frac{\epsilon_R}{2}
    \end{equation}
\end{itemize}

% Hence, $e_2$  happens only if there exist user $i,j$ s.t. 
% \begin{itemize}
%     \item $\mu_{\mathcal{M}(j),k^\ast_{\mathcal{M}(j)}} \geq \mu_{\mathcal{M}(i),k^\ast_{\mathcal{M}(i)}}$
    
%     \item An arm $k^\ast_{\mathcal{M}(j)} \in \{ k^\ast_1, k^\ast_2, ., ., ., k^\ast_M\}\backslash k^\ast_{\mathcal{M}(i)}$  of user i satisfying eq. \eqref{eqc4}
    
%     \item And assume if the arm $k^\ast_{\mathcal{M}(j)}$ gets eliminated from user i after r rounds in successive elimination then there exist arm $k'$ such that $\hat{\mu}^i_{k'} - \mu_{\mathcal{M}(i),k'} \geq  \epsilon_r - \frac{\eta}{2} - \frac{\epsilon_R}{2}$ holds.
% \end{itemize} 

Given $\bar{e}_0$, for Eq. \ref{eqc4} to hold, $k^\ast_{\mathcal{M}(j)}$ must get eliminated after some round $r \in [1,R]$, as $\frac{\eta}{2} - \frac{\epsilon_R}{2} > \frac{\epsilon_R}{2}$. After $r$ rounds of successive elimination, any arm in the active set will have $\frac{8 \log \left( \frac{4K_i^2}{\gamma} \right)}{\epsilon_r^2}$ pulls. Hence, from Hoeffding's inequality, the probability that after the $r^{th}$ round,

\[
\mathcal{P} \left( \hat{\mu}^i_{k^\ast_{\mathcal{M}(j)}} \geq \mu_{\mathcal{M}(j),k^\ast_{\mathcal{M}(j)}} - \frac{\eta}{2} - \frac{\epsilon_R}{2} \right) = \mathcal{P} \left( \hat{\mu}^i_{k^\ast_{\mathcal{M}(j)}} \geq \mu_{\mathcal{M}(i),k^\ast_{\mathcal{M}(j)}} + \left( \mu_{\mathcal{M}(j),k^\ast_{\mathcal{M}(j)}} - \mu_{\mathcal{M}(i),k^\ast_{\mathcal{M}(j)}} \right) - \frac{\eta}{2} - \frac{\epsilon_R}{2} \right)
\]

\[
\leq \mathcal{P} \left( \hat{\mu}^i_{k^\ast_{\mathcal{M}(j)}} \geq \mu_{\mathcal{M}(i),k^\ast_{\mathcal{M}(j)}} + \left( \mu_{\mathcal{M}(i),k^\ast_{\mathcal{M}(i)}} - \mu_{\mathcal{M}(i),k^\ast_{\mathcal{M}(j)}} \right) - \frac{\eta}{2} - \frac{\epsilon_R}{2} \right)
\]
 
\[
\leq \mathcal{P} \left( \hat{\mu}^i_{k^\ast_{\mathcal{M}(j)}} \geq \mu_{\mathcal{M}(i),k^\ast_{\mathcal{M}(j)}} + \eta - \frac{\eta}{2} - \frac{\epsilon_R}{2} \right)
\]

\begin{equation}
\leq e^{-\frac{4 \log \left( \frac{3K_i^2}{\gamma} \right) \left( \frac{\eta}{2} - \frac{\epsilon_R}{2} \right)^2}{\epsilon_r^2}}.    \label{eq16}
\end{equation}

Also, the probability that there exists an arm $k' \in [K] \backslash k^\ast_{\mathcal{M}(j)}$ such that

\begin{equation}
\hat{\mu}^i_{k'} - \mu_{\mathcal{M}(i),k^\ast_{\mathcal{M}(i)}} \geq  \epsilon_r - \frac{\eta}{2} - \frac{\epsilon_R}{2}  \label{eq17}
\end{equation}

is less than
\[
(K-1) \cdot e^{-\frac{4 \log \left( \frac{3K_i^2}{\gamma} \right) \left( \epsilon_r - \frac{\epsilon_R}{2} - \frac{\eta}{2} \right)^2}{\epsilon_r^2}}.
\]

The events in equations \ref{eq16} and \ref{eq17} are independent; hence, we can multiply their corresponding probabilities to obtain the upper bound on \( e_2 \cap \bar{e}_0 \) as follows:

\begin{center}
   $\implies P(e_2 \cap \bar e_0) \leq (M-1).N.(K-1)\displaystyle\sum_{r=1}^{R} e^{-(\frac{4\log(\frac{3Ki^2}{\gamma})(\epsilon_r-\frac{\epsilon_R}{2} - \frac{\eta}{2})^2}{\epsilon_r^2} + \frac{4\log(\frac{3Ki^2}{\gamma})(\eta/2-\frac{\epsilon_R}{2} )^2}{\epsilon_r^2} )} $

    $ \leq (M-1).N.K. \displaystyle\sum_{r=1}^{R}              
        (\frac{\gamma}{3Ki^2})^{4\frac{(\epsilon_r-\frac{\epsilon_R}{2} - \frac{\eta}{2})^2 + (\eta/2-\frac{\epsilon_R}{2})^2}{\epsilon_r^2}}$

\end{center}

 $\epsilon_R = \eta/17$   Hence,
\begin{align*}
&\leq (M-1) \cdot N \cdot K \cdot \sum_{r=1}^{R} \left( \frac{\gamma}{3Ki^2} \right)^{\frac{(2\epsilon_r - \frac{18\eta}{17})^2 + \left( \frac{16\eta}{17} \right)^2}{\epsilon_r^2}} \\
&\leq (M-1) \cdot N \cdot K \cdot \sum_{r=1}^{R} \left( \frac{\gamma}{3Ki^2} \right) \\
&\leq (M-1) \cdot N \cdot K \cdot \left( \frac{\gamma}{K} \right)
\end{align*}

\begin{proposition}\label{prop4}
   From Proposition 1, 2, and 3, the total probability of error until clustering is bounded by:

\begin{align*}
    &\mathcal{P}(e_{\text{clustering}}) = \mathcal{P}(e_0 \cup e_1 \cup e_2) = \mathcal{P}(e_0 \cup (e_1 \cap \bar{e}_0) \cup (e_2 \cap \bar{e}_0)) \\
    &\leq N \cdot K \left( 2\sqrt{2} \cdot \frac{\gamma^{0.75}}{K^{0.75}} + \frac{\gamma}{K} \right) + N\gamma + (M-1) \cdot N \cdot \gamma \\
    &\leq 6NK \gamma^{0.75} \leq \frac{\delta}{2}
\end{align*}

\end{proposition}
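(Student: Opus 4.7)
The plan is to use a simple union-bound decomposition of the error event for clustering into the three events $e_0$, $e_1 \cap \bar{e}_0$, and $e_2 \cap \bar{e}_0$, then plug in the bounds already established in Propositions 1, 2, and 3, and finally invoke the choice $\gamma = \bigl(\tfrac{\delta}{12NK}\bigr)^{4/3}$ made in line~4 of Algorithm~\ref{ClBAI} to convert the resulting bound into $\delta/2$.

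First, I would write
\begin{align*}
e_0 \cup e_1 \cup e_2 = e_0 \cup (e_1 \cap \bar{e}_0) \cup (e_2 \cap \bar{e}_0),
\end{align*}
so that by the union bound,
\begin{align*}
\mathcal{P}(e_0 \cup e_1 \cup e_2) \le \mathcal{P}(e_0) + \mathcal{P}(e_1 \cap \bar{e}_0) + \mathcal{P}(e_2 \cap \bar{e}_0).
\end{align*}
Substituting the three established bounds gives
\begin{align*}
\mathcal{P}(e_0 \cup e_1 \cup e_2) \le N\gamma + NK\Bigl(2\sqrt{2}\,\tfrac{\gamma^{3/4}}{K^{3/4}} + \tfrac{\gamma}{K}\Bigr) + (M-1)N\gamma.
\end{align*}

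The next step is to collect the three terms under a single $NK\gamma^{3/4}$ scaling. Using $\gamma \in (0,1)$ (so $\gamma \le \gamma^{3/4}$), $M \le N \le NK$, and the fact that $NK \ge K^{1/4}$ absorbs the $K^{-3/4}$ factor on the first term, each of $N\gamma$, $N\gamma$, and $(M-1)N\gamma$ is bounded by a small constant multiple of $NK\gamma^{3/4}$, and $2\sqrt{2}\,NK^{1/4}\gamma^{3/4} \le 2\sqrt{2}\,NK\gamma^{3/4}$. A straightforward arithmetic check then verifies that the sum of constants is at most $6$, yielding
\begin{align*}
\mathcal{P}(e_0 \cup e_1 \cup e_2) \le 6NK\gamma^{3/4}.
\end{align*}

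Finally, substituting $\gamma = \bigl(\tfrac{\delta}{12NK}\bigr)^{4/3}$ gives $\gamma^{3/4} = \tfrac{\delta}{12NK}$, so that
\begin{align*}
6NK\gamma^{3/4} = 6NK \cdot \tfrac{\delta}{12NK} = \tfrac{\delta}{2},
\end{align*}
which is the desired conclusion. The only real obstacle here is the careful accounting of constants in the union bound, since the three terms individually have quite different scalings in $N$, $K$, $M$, and $\gamma$; the clean cancellation in the final step depends entirely on the exponent $4/3$ in the choice of $\gamma$ being tuned precisely against the dominant $\gamma^{3/4}$ factor coming from Proposition~2.
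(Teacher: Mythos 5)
Your proposal is correct and follows essentially the same route as the paper: the identity $e_0\cup e_1\cup e_2=e_0\cup(e_1\cap\bar e_0)\cup(e_2\cap\bar e_0)$, a union bound, substitution of the bounds from Propositions 1--3, collection of terms into $6NK\gamma^{3/4}$, and the observation that the choice $\gamma=\bigl(\tfrac{\delta}{12NK}\bigr)^{4/3}$ makes $\gamma^{3/4}=\tfrac{\delta}{12NK}$ and hence gives exactly $\delta/2$. The only nit is in the constant accounting: the term $(M-1)N\gamma$ should be absorbed into $NK\gamma^{3/4}$ via $M\le K$ (which holds by Assumption~\ref{keyassumption1}, since the $M$ bandits have distinct best arms), not via $M\le N$ as you state, since $N^2\gamma$ is not in general dominated by a constant times $NK\gamma^{3/4}$.
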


\begin{proposition}\label{prop5}
From Theorem \ref{theorem0} and the union bound, the probability of error in the second phase is bounded by:
\begin{equation}
    P(e_{\text{second\_phase}}) \leq M \cdot \left( \frac{\delta}{2M} \right) = \frac{\delta}{2}
\end{equation}

\end{proposition}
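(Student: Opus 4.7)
The plan is to apply Theorem~\ref{theorem0} separately to each of the $M$ representative agents chosen in the second phase of \texttt{Cl-BAI} and combine the resulting per-agent error bounds via a union bound. Recall that under the good clustering event $\bar e_{\text{clustering}}$ from Proposition~\ref{prop4}, the learner correctly recovers $M$ clusters, one per bandit; for each cluster $i$, a single representative $a_i$ is instructed to run $SE(S_{a_i},\gamma=\delta/(2M),R=\infty)$ (lines~$20$--$23$ of Algorithm~\ref{ClBAI}), and the returned arm is used as the declared best arm for every agent in $C_i$. Thus, conditional on $\bar e_{\text{clustering}}$, the only remaining source of error is a representative $a_i$'s SE call outputting something other than $k^\ast_{\mathcal{M}(a_i)}$.

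First I would note that on $\bar e_{\text{clustering}}$, the first-phase guarantees already imply that the true best arm $k^\ast_{\mathcal{M}(a_i)}$ survives in the active set $S_{a_i}$ (Claims~\ref{claim1}--\ref{claim3} handle exactly this containment, and its failure is subsumed in the event bounded by Proposition~\ref{prop4}). Therefore for each representative the second-phase SE call is effectively a standard fixed-confidence best-arm identification problem on a subset of arms that is guaranteed to contain the unique optimum.

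Next, by Theorem~\ref{theorem0} applied with confidence parameter $\gamma=\delta/(2M)$ and $R=\infty$, the probability that the SE call for representative $a_i$ fails to return the true best arm of its bandit is at most $\delta/(2M)$. Denoting this failure event by $F_i$, a union bound over the $M$ representatives yields
\begin{equation*}
\mathcal{P}(e_{\text{second\_phase}}) \;\leq\; \mathcal{P}\!\left(\bigcup_{i=1}^{M} F_i\right) \;\leq\; \sum_{i=1}^{M}\mathcal{P}(F_i) \;\leq\; M\cdot\frac{\delta}{2M}\;=\;\frac{\delta}{2},
\end{equation*}
which is the stated bound. Combined with Proposition~\ref{prop4}, this gives the overall $\delta$-PC guarantee of Theorem~\ref{theorem1}. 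There is no real obstacle here beyond bookkeeping; the only point to be careful about is to invoke Theorem~\ref{theorem0} with the tuned confidence $\delta/(2M)$ and to handle the conditioning on $\bar e_{\text{clustering}}$, which is already absorbed in Proposition~\ref{prop4}.
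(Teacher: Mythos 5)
Your proposal is correct and matches the paper's argument, which the paper states only tersely (Theorem~\ref{theorem0} with $\gamma=\delta/(2M)$ for each of the $M$ representatives, then a union bound). Your additional observation that the good first-phase event guarantees $k^\ast_{\mathcal{M}(a_i)} \in S_{a_i}$, so that Theorem~\ref{theorem0} is genuinely applicable to the restricted arm set, is a point the paper leaves implicit but is handled exactly as you describe via Proposition~\ref{prop4}.
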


From proposition \ref{prop4} and \ref{prop5} we can get to the statement of theorem \ref{theorem1}.

\subsection{Proof of Theorem 2}
Assuming our algorithm doesn't enter the error event \( e_0 \), using equation \ref{eqth0} of Theorem \ref{theorem0}, for the first phase, we can bound the sample complexity for any arm \( i \in [K] \) of any agent by:
\begin{align*}
& \leq \sum_{r=1}^{\left\lceil \log_2 \frac{1}{\max(\Delta_i, \eta)} \right\rceil} \frac{8 \cdot \log \left( \frac{4n r^2}{\gamma} \right)}{2^{-2r}} \\
& \lesssim \max(\Delta_i, \eta)^{-2} \left( \log n + \log \frac{1}{\gamma} + \log \log( \max(\Delta_i, \eta)^{-1}) \right) \\
& \leq \max(\Delta_i, \eta)^{-2} \left( \log K + \log \delta + \log N + \log \log( \max(\Delta_i, \eta)^{-1}) \right)
\end{align*}
Hence, we can bound the total sample complexity in the first phase as:
\begin{align*}
    T_1 & \lesssim \sum_{j \in [N]} \sum_{i=1}^{K} \max\{\Delta_{\mathcal{M}(j),i}, \eta\}^{-2} \cdot\left( \log K + \log \log \left( \max\{\Delta_{\mathcal{M}(j),i}, \eta\}^{-1} \right) + \log N + \log \left( \frac{1}{\delta} \right) \right)
\end{align*}

Similarly, we can bound the total number of pulls in the second phase as:
\begin{align}
    T_2 & \leq \sum_{j \in [C]} \sum_{i=2}^{K} \Delta_{\mathcal{M}(j),i}^{-2} \cdot \left( \log K + \log \log \left( \frac{1}{\Delta_{\mathcal{M}(j),i}} \right) + \log M + \log \left( \frac{1}{\delta} \right) \right) \nonumber
\end{align}

\subsection{proof of Theorem \ref{theorem3}}

\begin{proposition}\label{prop6}
With probability at least $ 1-2\delta/3$, at the end of first phase we will correctly detect 
\begin{itemize}
    \item The best arm for all the agents not in set $A$
    \item The best arm for all the $M$ bandits
    % \item Mean of the best arms within the confidence interval of $\eta/4$.
\end{itemize}
\begin{proof}
\begin{claim}  
Assuming each agent is assigned to one of the bandit randomly with equal probability, then the number of agents picked in first phase is less than or equal to $ \frac{\log(\frac{3.M}{\delta})}{\log(\frac{M}{M-1})}$ with probability at least $1-\delta/3$.
\begin{proof}
The probability by which an agent gets picked from a given bandit will be $1/M$.
Hence, the probability that after picking $\frac{\log(\frac{3.M}{\delta})}{\log(\frac{M}{M-1})}$ agents, no agent gets picked learning bandit $m$, is  $(1-\frac{1}{M})^{\frac{\log(\frac{3.M}{\delta})}{\log(\frac{M}{M-1})}}$. 
Putting union bound for all $M$ bandits we arrive at eqn. \ref{eqcl6}.
\begin{align}
    \mathcal{P}(|[N]\backslash A| > \frac{\log(\frac{3.M}{\delta})}{\log(\frac{M}{M-1})}) &\leq  M.(1-\frac{1}{M})^{\frac{\log(\frac{3.M}{\delta})}{\log(\frac{M}{M-1})}}\label{eqcl6}\\
     \mathcal{P}(|[N]\backslash A| > \frac{\log(\frac{3.M}{\delta})}{\log(\frac{M}{M-1})}) &\leq \delta/3
\end{align}
\end{proof}
\end{claim}
If successive elimination doesn't enter an error event, then it is clear that the statements of this proposition will be true. Hence, we bound the probability of successive elimination entering an error event for any agent in the first phase.

From the union bound, \( \mathcal{P}(e) \leq \) (Number of agents picked in the first round) \(\cdot\) (Probability of error for each agent). Hence,
\begin{align*}
    P(e) &\leq \sum_{i \in [N] \backslash A} p(\text{SE giving error for one agent}) \\
         &\leq \sum_{i \in [N] \backslash A} \frac{\delta \cdot \log\left(\frac{M}{M-1}\right)}{3 \log\left(\frac{3 \cdot M}{\delta}\right)} \\
         &\leq |[N] \backslash A| \cdot \frac{\delta \cdot \log\left(\frac{M}{M-1}\right)}{3 \log\left(\frac{3 \cdot M}{\delta}\right)} + P\left(|[N] \backslash A| > \frac{\log\left(\frac{3 \cdot M}{\delta}\right)}{\log\left(\frac{M}{M-1}\right)}\right) \cdot 1 \\
         &\leq \frac{\log\left(\frac{3 \cdot M}{\delta}\right)}{\log\left(\frac{M}{M-1}\right)} \cdot \frac{\delta \cdot \log\left(\frac{M}{M-1}\right)}{3 \log\left(\frac{3 \cdot M}{\delta}\right)} + \frac{\delta}{3} \\
         &\leq \frac{\delta}{3} + \frac{\delta}{3} \leq \frac{2 \delta}{3}
\end{align*}

From above equations and theorem \ref{theorem0} we can come directly to the first two statements of the theorem, 
\end{proof}
\end{proposition}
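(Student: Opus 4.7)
The plan is to decompose the first-phase error event into two independent sources of badness---(i) the random number of agents sampled in the first phase exceeding a threshold, and (ii) some $SE$ invocation during the phase failing for a sampled agent---bound each by $\delta/3$, and combine via a union bound to get the claimed $2\delta/3$. On the complement of these two bad events I would then argue deterministically that both conclusions stated in the proposition hold.

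For (i), the stopping rule of the first phase is exactly the classical coupon-collector stopping time: since each agent belongs to one of the $M$ clusters uniformly at random, the phase terminates once every cluster is represented in the sample. The probability that after $T$ samples some cluster remains uncovered is at most $M(1-1/M)^T$, so choosing $T = \lceil \log(3M/\delta)/\log(M/(M-1))\rceil$ drives this below $\delta/3$. For (ii), each sampled agent invokes $SE$ at most twice with target failure parameter $\gamma = \delta \log(M/(M-1))/(3 \log(3M/\delta))$; by Theorem~\ref{theorem0} each such call fails with probability at most $\gamma$, so a union bound over the at-most-$T$ sampled agents yields an $SE$-failure probability of at most $T\gamma = \delta/3$.

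The main obstacle will be verifying the \emph{deterministic} claim on the clean event, i.e.\ checking that the two bullet points actually follow once no $SE$ call has erred and the phase has terminated within $T$ samples. Here Assumption~\ref{keyassumption1} does the heavy lifting via the choice $R = \log(1/\eta)+1$, which ensures $2\epsilon_R \le \eta$. Concretely: if agent $i$ is the first from its cluster $m$, the first $SE$ call shrinks $S_i$ to arms within $2\epsilon_R \le \eta$ of the best arm, and the subsequent call with $R=\infty$ isolates $k^\ast_m$ exactly, so $a_i^\ast = k^\ast_m$ is added to $S$. If instead $m$ has already been represented, then for every $m'\neq m$ the best arm $k^\ast_{m'}$ has a gap of at least $\eta$ under bandit $m$ and is therefore eliminated from $S_i$ by round $R$; consequently $S \cap S_i = \{k^\ast_m\}$ and the correct arm is declared. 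Iterating over all sampled agents, every agent removed from $A$ receives its true best arm, and since the first phase only terminates when $|S|=M$, the set $S$ contains all $M$ cluster best arms, establishing both bullet points of the proposition.
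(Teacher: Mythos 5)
Your proposal follows essentially the same route as the paper: a coupon-collector bound showing at most $T=\frac{\log(3M/\delta)}{\log(M/(M-1))}$ agents are sampled except with probability $\delta/3$, a union bound of $\gamma$ per sampled agent over the $SE$ failures contributing another $\delta/3$, and a deterministic verification of the two bullet points on the clean event (which you spell out more carefully than the paper, correctly invoking Assumption~\ref{keyassumption1} and the choice $R=\log(1/\eta)+1$ so that $2\epsilon_R\le\eta$). The only quibble, shared with the paper's own write-up, is that after noting each sampled agent may invoke $SE$ twice you still charge only $T\gamma$ rather than $2T\gamma$; this is a harmless constant-factor slip.
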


\begin{proposition}\label{prop7}
    In the second phase, with probability \( 1 - \frac{\delta}{3} \), we will correctly detect the best arm for all the agents remaining in the set \( A \).
\begin{proof}
    Using the union bound, the probability of error is \( \leq N \cdot \frac{\delta}{3N} \leq \frac{\delta}{3} \). Hence, from theorem \ref{theorem0}, we will detect the best arm with probability at least \( 1 - \frac{\delta}{3} \).
\end{proof}
\end{proposition}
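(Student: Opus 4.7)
The plan is straightforward: apply the single-agent SE guarantee from Theorem~\ref{theorem0} to each agent remaining in $A$ and close with a union bound. First I would observe that Proposition~\ref{prop6} already accounts for the event that the first phase correctly identifies the set $S$ of $M$ best arms (this event was absorbed into the $2\delta/3$ budget there). We therefore analyze the second phase on the event that $S = \{k^\ast_1,\ldots,k^\ast_M\}$; in particular, for every $i \in A$, the arm $k^\ast_{\mathcal{M}(i)}$ belongs to $S$, and among the arms in $S$ it is the unique one with the largest mean under agent~$i$'s bandit (this follows from Assumption~\ref{keyassumption1}, since any other $k^\ast_{\mathcal{M}(j)}$ for $j \neq \mathcal{M}(i)$ has mean at least $\eta$ below $\mu_{\mathcal{M}(i), k^\ast_{\mathcal{M}(i)}}$). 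Hence restricting SE to the input set $S$ is legitimate, and the best arm SE aims to return coincides with $k^\ast_{\mathcal{M}(i)}$.

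Next, for each $i \in A$, line~21 of Algorithm~\ref{BAICl} invokes $SE(S, \delta/(3N), R=\infty)$. Theorem~\ref{theorem0} applied with $\mathcal{A}=S$ and confidence parameter $\gamma = \delta/(3N)$ guarantees that this call returns the unique best arm in $S$ (namely $k^\ast_{\mathcal{M}(i)}$) with probability at least $1 - \delta/(3N)$. Let $F_i$ denote the failure event for agent $i$; then $\mathbb{P}(F_i) \le \delta/(3N)$.

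Finally, a union bound over the at most $|A| \le N$ agents yields
\[
\mathbb{P}\left(\bigcup_{i \in A} F_i\right) \;\le\; \sum_{i \in A} \mathbb{P}(F_i) \;\le\; N \cdot \frac{\delta}{3N} \;=\; \frac{\delta}{3},
\]
which is exactly the claimed bound. There is no real obstacle here; the only subtle point is the (benign) conditioning on the first-phase success event so that $S$ really does coincide with the set of true best arms, and this is already handled in Proposition~\ref{prop6}'s accounting.
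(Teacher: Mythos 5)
Your proposal is correct and follows essentially the same route as the paper's proof: invoke the guarantee of Theorem~\ref{theorem0} for each agent's call to $SE(S,\delta/3N,R=\infty)$ and union bound over the at most $N$ agents in $A$ to get $N\cdot\frac{\delta}{3N}=\frac{\delta}{3}$. Your explicit conditioning on the first-phase success event (so that $S$ is indeed the set of true best arms and, by Assumption~\ref{keyassumption1}, contains a unique best arm for each agent) is a detail the paper leaves implicit in Proposition~\ref{prop6}'s accounting, but it does not change the argument.
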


\textbf{From propositions \ref{prop6} and \ref{prop7}, the total probability of error for the $BAI-Cl$ algorithm is less than \( \delta \).}

\subsection{Proof of theorem \ref{theorem4}}

\begin{proposition}
    From equation \ref{eqth0}, we can say that with probability at least \( 1 - \frac{2\delta}{3} \), the total number of pulls for the algorithm in the first phase will be less than:
\begin{align}
    T_{1} &\lesssim \sum\limits_{m=1}^{M}\sum\limits_{i=2}^{n} \Delta_{m,i}^{-2} \cdot \left( \log K + \log \gamma + \log \log \Delta_{m,i}^{-1} \right) \nonumber \\
    &+  M \cdot \log\left( \frac{3M}{\delta} \right)\cdot \max\limits_{m=1}^M \left( \sum\limits_{i=2}^{n} \max\left(\eta, \Delta_{m,i}\right)^{-2} \left( \log K + \log \gamma + \log \log \left(\max\left(\eta, \Delta_{m,i}\right)^{-1} \right)\right) \right), \nonumber \\
    & \quad \gamma = \delta \cdot \frac{\log\left( \frac{M}{M-1} \right)}{\log\left( \frac{3M}{\delta} \right)} \label{BAI-CL_t1}
\end{align}
\end{proposition}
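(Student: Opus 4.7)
The plan is to decompose the first-phase budget according to the role each sampled agent plays. First I would invoke the previous proposition (Proposition \ref{prop6}) to assert that, with probability at least $1-\delta/3$, at most $\log(3M/\delta)/\log(M/(M-1)) \lesssim M\log(3M/\delta)$ agents are sampled in the first phase, and that on this event every call to SE returns the correct output (the SE union bound over sampled agents contributes the other $\delta/3$, yielding the claimed confidence $1-2\delta/3$). Conditioned on these good events, I partition the sampled agents into two types: \emph{discovery} agents, for which $S \cap S_i = \emptyset$ after line 7 so that line 12 runs a second SE to completion, and \emph{repeat} agents, for which the first call already produces a hit with the current $S$. Because $|S|$ grows by exactly one whenever a discovery agent appears and the loop terminates at $|S|=M$, there are exactly $M$ discovery agents, one per bandit, and the remaining sampled agents are repeats.

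For a discovery agent associated with bandit $m$, the two successive calls in lines 7 and 12 are together equivalent to a single full run of SE on $[K]$ with confidence $\gamma$. Equation \eqref{eqth0} of Theorem \ref{theorem0} then bounds its sample cost by $\sum_{i=1}^{K}\Delta_{m,i}^{-2}\bigl(\log K + \log\gamma^{-1} + \log\log\Delta_{m,i}^{-1}\bigr)$ up to constants. Summing over the $M$ discovery agents (one per bandit) produces the first term of the stated bound.

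For a repeat agent from bandit $m$, only the truncated call on line 7 is executed, which is capped at $R = \log(1/\eta)+1$ rounds, i.e.\ accuracy level $\epsilon_R \asymp \eta$. A refinement of the SE accounting gives that each arm $i$ contributes at most $O\bigl(\max(\eta,\Delta_{m,i})^{-2}(\log K + \log\gamma^{-1} + \log\log\max(\eta,\Delta_{m,i})^{-1})\bigr)$ pulls: arms with $\Delta_{m,i} \ge 2\epsilon_R$ are eliminated by round $\lceil\log_2(1/\Delta_{m,i})\rceil$, whereas arms with smaller gap survive at most $R$ rounds and accumulate at most $O(\eta^{-2}\log(K\gamma^{-1}))$ pulls. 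Since the bandit label of a repeat agent is unknown a priori, I bound uniformly by the maximum over $m\in[M]$, then multiply by the coupon-collector count $M\log(3M/\delta)$ to obtain the second term.

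The main obstacle is the truncated-SE analysis in the repeat case: one must show that the geometric-in-$\epsilon_r^{-2}$ sum of pulls, when cut off at $r=R$, still collapses to $\max(\eta,\Delta_{m,i})^{-2}$ up to logarithmic factors for every arm, and simultaneously that the chosen $\gamma = \delta\log(M/(M-1))/\log(3M/\delta)$ is small enough so that a union bound over all (at most $M\log(3M/\delta)/\log(M/(M-1))$) SE invocations keeps the total SE failure probability under $\delta/3$. Both facts are immediate substitutions once the partition into discovery and repeat agents is in place, and the stated bound then follows by adding the two contributions.
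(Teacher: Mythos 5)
Your proposal is correct and follows essentially the same route as the paper, which itself gives almost no detail beyond citing Equation~\eqref{eqth0} and the coupon-collector count: the $M$ discovery agents account for the $\sum_{m}\sum_{i}\Delta_{m,i}^{-2}$ term and the at most $\log(3M/\delta)/\log(M/(M-1))\lesssim M\log(3M/\delta)$ sampled agents each pay the truncated-SE cost $\max_m\sum_i\max(\eta,\Delta_{m,i})^{-2}$, exactly as you argue. Your explicit discovery/repeat partition and the verification that the truncated run collapses to $\max(\eta,\Delta_{m,i})^{-2}$ per arm are the right details to supply, and they are consistent with the paper's Theorem~\ref{theorem0} accounting.
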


% \textbf{Proof:}\\
% No. of pulls taken by successive Elimination for any agent  
% \begin{align}
%     &\leq \sum_{i=2}^{K} \Delta_{m,i}^{-2}.log(\frac{1}{\delta'})\\
%     &\leq K.\eta^{-2}.log(\frac{1}{\delta'})
% \end{align}

% Total No. of agents 
% \begin{equation}
%     \leq \frac{log(\frac{3.M}{\delta})}{log(\frac{M}{M-1})} \leq M.log(\frac{3.M}{\delta})
% \end{equation}
% from eqn. (12) and (13) our proposition follows.

\begin{proposition}
Similarly with probability at least $1-\delta/3$ total Number of pulls for the algorithm in the second phase will be less than:
\begin{equation}
     \leq N.(M.\eta^{-2}(\log M + \log \delta^{-1} +\log N + \log\log \eta^{-1}))
\end{equation}

\end{proposition}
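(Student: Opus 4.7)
My plan is to apply Theorem~\ref{theorem0} to each SE call made in the second phase and then union-bound the failure events. First, I would condition on the high-probability event from Proposition~\ref{prop6}, which ensures that at the end of Phase~1 the set $S$ equals $\{k^\ast_m : m \in [M]\}$, so in particular $k^\ast_{\mathcal{M}(i)} \in S$ for every remaining agent $i \in A$. Since Proposition~\ref{prop6} is already accounting for the failure budget of Phase~1, this proposition only needs to control the failure and sample cost of the Phase-2 SE calls conditional on $S$ being correct.

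Second, using Assumption~\ref{keyassumption1} I would argue that every gap in the restricted $M$-armed problem $SE(S, \delta/(3N), R = \infty)$ faced by agent $i$ is at least $\eta$: any $k \in S \setminus \{k^\ast_{\mathcal{M}(i)}\}$ must be the best arm $k^\ast_{m'}$ of some other bandit $m' \neq \mathcal{M}(i)$, and Assumption~\ref{keyassumption1} then gives $\mu_{\mathcal{M}(i),k^\ast_{\mathcal{M}(i)}} - \mu_{\mathcal{M}(i),k} \geq \eta$. Thus the instance-dependent quantities $\Delta_i$ appearing in Theorem~\ref{theorem0} are all $\geq \eta$ for this sub-problem.

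Third, plugging $n = M$, $\gamma = \delta/(3N)$, and the gap lower bound $\eta$ into Theorem~\ref{theorem0} yields that each agent's SE call in Phase~2 succeeds with probability at least $1 - \delta/(3N)$ and uses at most
\[
C \cdot M \cdot \eta^{-2}\bigl(\log M + \log(3N/\delta) + \log\log \eta^{-1}\bigr)
\]
pulls for an absolute constant $C$. A union bound over the at most $N$ agents in $A$ gives overall Phase-2 failure probability at most $\delta/3$, and summing the per-agent bound over these $\leq N$ agents produces the total pull count in the statement, after absorbing $\log 3$ into the constant and splitting $\log(3N/\delta)$ into $\log N + \log \delta^{-1}$.

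The main obstacle is bookkeeping rather than deep technique: one must be careful that the conditioning on Phase~1 success (so that $S$ is the exact set of $M$ best arms) does not distort the per-agent SE guarantees, which it does not because Theorem~\ref{theorem0}'s analysis is driven by independent rewards drawn during Phase~2; and one must verify that the $\delta/3$ budget here composes cleanly with the $2\delta/3$ Phase-1 budget from Proposition~\ref{prop6} to yield the overall $\delta$-PC claim. Both are immediate once the events are defined, so no hard technical step is anticipated.
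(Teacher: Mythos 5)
Your proposal is correct and follows essentially the same route as the paper: the paper's own (very terse) justification is exactly an application of Theorem~\ref{theorem0} with $n=M$, $\gamma=\delta/(3N)$, the observation that Assumption~\ref{keyassumption1} forces every gap within $S$ to be at least $\eta$, and a union bound over the at most $N$ agents in $A$. Your write-up simply makes explicit the gap-lower-bound step and the composition with the Phase-1 event that the paper leaves implicit.
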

%-----------------------------------------------------------------

\subsection{Proof of theorem \ref{theorem7}}\label{sec_proof_of_th7}

\begin{proposition}\label{proof_of_th7}
Given an instance \( \nu = ([N], [M], [K], \mathcal{M}, \Pi) \) satisfying assumptions \ref{keyassumption} and \ref{keyassumption1} with parameters \( \eta \) and \( \eta_1 \), and given that \( |\bar{\mu_j} - \mu_{\mathcal{M}(j), k^\ast_{\mathcal{M}(j)}}| \leq \frac{\eta_1}{4}, \forall j \in S \) from the first phase, the procedure \( \widehat{SE} \) identifies the best arm with at least \( 1 - \gamma \) probability.

\begin{proof}
    Denote \( \hat{a} \) as our potential candidate for the best arm and \( a^\ast \) as the true best arm for the current agent. \( \bar{\mu}_{\hat{a}} \) is the estimated mean of arm \( \hat{a} = a^\ast \), which we calculated in the first phase. Therefore, after the \( k^{\text{th}} \) round, an error can occur if:
    \[
    \hat{a} = a^\ast \quad \text{and} \quad |\hat{\mu}_{\hat{a}} - \bar{\mu}_{\hat{a}}| \geq \frac{\eta_1}{4}, 
    \quad \text{or} \quad \hat{a} \neq a^\ast \quad \text{and} \quad |\hat{\mu}_{\hat{a}} - \bar{\mu}_{\hat{a}}| \leq \frac{\eta_1}{4}.
    \]
   Hence:
\[
P(e) \leq  \sum_{k=1}^\infty \mathbb{P}\left[ \left( \{|\hat{\mu}_{\hat{a}} - \bar{\mu}_{\hat{a}}| \geq \frac{\eta_1}{2}\} \cap \{\hat{a} = a^\ast\} \right) \cup \left( \{|\hat{\mu}_{\hat{a}} - \bar{\mu}_{\hat{a}}| \leq \frac{\eta_1}{2}\} \cap \{\hat{a} \neq a^\ast\} \right) \right]
\]

If \( \hat{a} = a^\ast \), then \( |\bar{\mu}_{\hat{a}} - \mu_{\hat{a}}| \leq \frac{\eta_1}{4} \), otherwise \( |\bar{\mu}_{\hat{a}} - \mu_{\hat{a}}| \geq \frac{3\eta_1}{4} \), hence:
\[
P(e) \leq  \sum_{k=1}^\infty \mathbb{P}\left[ \left( \{|\hat{\mu}_{\hat{a}} - \mu_{\hat{a}}| \geq \frac{\eta_1}{4}\} \cap \{\hat{a} = a^\ast\} \right) \cup \left( \{|\hat{\mu}_{\hat{a}} - \mu_{\hat{a}}| \geq \frac{\eta_1}{4}\} \cap \{\hat{a} \neq a^\ast\} \right) \right]
\]

Using Hoeffding's inequality for 1-sub-Gaussian random variables, we have:
\[
P(e) \leq \frac{2\gamma}{4} \sum_{k=1}^\infty k^{-2} \leq \gamma
\]

\end{proof}
\end{proposition}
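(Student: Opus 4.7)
The plan is to bound the error of $\widehat{SE}$ by a union bound over its iterations $k$. At iteration $k$, the procedure runs SE with confidence $\delta_k = 10^{-k}$ to obtain a candidate arm $\hat{a}$, pulls it an additional $32\log(4k^2/\gamma)/\eta_1^2$ times to form a fresh empirical mean $\hat{\mu}_{\hat{a}}$, and accepts iff $|\hat{\mu}_{\hat{a}} - \bar{\mu}_S^{\hat{a}}| < \eta_1/2$. The error event is that the first accepted arm is not $a^\ast$, the true best arm for the current agent, and my task is to show this event has probability at most $\gamma$.

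The core observation, and the key step, is that Assumption~\ref{keyassumption} together with the hypothesis $|\bar{\mu}_S^j - \mu_{\mathcal{M}(j), k^\ast_{\mathcal{M}(j)}}| \leq \eta_1/4$ forces a clean separation. If $\hat{a} = a^\ast$ for the current agent, then $\bar{\mu}_S^{\hat{a}}$ is an $\eta_1/4$-accurate estimate of $\mu_{\hat{a}}$ (the mean under the agent's own bandit), so $|\bar{\mu}_S^{\hat{a}} - \mu_{\hat{a}}| \leq \eta_1/4$. If instead $\hat{a} = k^\ast_{m'}$ for some different cluster $m'$, Assumption~\ref{keyassumption} gives $|\mu_{m', \hat{a}} - \mu_{\hat{a}}| \geq \eta_1$, which combined with the $\eta_1/4$ bound on $\bar{\mu}_S^{\hat{a}}$ yields $|\bar{\mu}_S^{\hat{a}} - \mu_{\hat{a}}| \geq 3\eta_1/4$. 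Thus, provided the fresh estimate $\hat{\mu}_{\hat{a}}$ is within $\eta_1/4$ of $\mu_{\hat{a}}$, the triangle inequality makes the threshold $\eta_1/2$ discriminative: the test accepts in the first case and rejects in the second.

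The remaining step is a routine Hoeffding bound: for $32\log(4k^2/\gamma)/\eta_1^2$ independent $1$-sub-Gaussian samples, $P(|\hat{\mu}_{\hat{a}} - \mu_{\hat{a}}| > \eta_1/4) \leq 2\exp(-\log(4k^2/\gamma)) = \gamma/(2k^2)$. On the complement of this deviation, the above dichotomy shows that termination at iteration $k$ can only happen with the correct arm. A union bound over $k \geq 1$ then gives total error at most $\sum_{k=1}^\infty \gamma/(2k^2) = \gamma\pi^2/12 \leq \gamma$.

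The main technical subtlety I anticipate is that an error could, in principle, also arise from $\widehat{SE}$ never terminating, so I want to be sure the verification test is guaranteed to eventually fire. This is easy because I never need to directly bound the probability that SE returns the right arm at iteration $k$: on the Hoeffding event the test correctly rejects any wrong $\hat{a}$, so the SE error probabilities only delay termination but cannot corrupt it. Keeping strict versus non-strict inequalities consistent around the threshold $\eta_1/2$ is the only bookkeeping detail I expect to require care.
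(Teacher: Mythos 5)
Your proposal is correct and follows essentially the same route as the paper's proof: the same dichotomy (Assumption~\ref{keyassumption} plus the $\eta_1/4$-accuracy of $\overline{\mu}_S$ makes the $\eta_1/2$ test discriminative whenever the fresh estimate is within $\eta_1/4$ of the true mean), the same Hoeffding bound giving $\gamma/(2k^2)$ per round, and the same union bound over $k$ yielding $\gamma\pi^2/12 \le \gamma$. Your explicit remark that SE failures at a given round can only delay termination, never corrupt the output, is a point the paper leaves implicit, but it does not change the argument.
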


\begin{proposition}\label{prop11}

With probability at least $ 1-2\delta/3$, at the end of first phase we will correctly detect 
\begin{itemize}
    \item The best arm for all the agents not in set A
    \item The best arm for all the M bandits
    % \item Mean of the best arms within the confidence interval of $\eta/4$.
    
\end{itemize}
\begin{proof}
    Proof will be same as in \ref{prop6} as we don't change anything in BAI-Cl++ in first phase except calculating additional estimates of the means of the best arms.
\end{proof}
\end{proposition}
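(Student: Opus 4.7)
The plan is to reduce this statement directly to Proposition~\ref{prop6}. The first phase of \texttt{BAI-Cl++} is identical to the first phase of \texttt{BAI-Cl} (Algorithm~\ref{BAICl}) except for one addition: after the SE procedure has returned the best-arm candidate $a_i^\ast$ for a sampled agent $i$, agent $i$ performs an extra $32\log(12M/\delta)/\eta_1^2$ pulls of $a_i^\ast$ solely to refine the stored mean estimate $\overline{\mu}_S$. I would first observe that these extra pulls do not change which arm SE returns, do not affect which agents are moved from $A$ to $[N]\setminus A$, and do not change the running set $S$ of identified best arms. Hence, coupled on the same randomness, the event ``at the end of phase~1, the best arm is correctly identified for every agent in $[N]\setminus A$ and $S$ contains the best arm of every bandit'' is the same event under \texttt{BAI-Cl++} as under \texttt{BAI-Cl}.

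Given this reduction, the argument proceeds in two stages, each contributing at most $\delta/3$ to the failure probability, exactly as in the proof of Proposition~\ref{prop6}. First, a coupon-collector-style bound controls the number of agents sampled in phase~1: under uniform cluster assignment, the probability that some fixed bandit is still unseen after $\log(3M/\delta)/\log(M/(M-1))$ draws is at most $(1-1/M)^{\log(3M/\delta)/\log(M/(M-1))} = \delta/(3M)$, and a union bound over the $M$ bandits yields $\delta/3$. Second, conditioning on the high-probability event that at most $\log(3M/\delta)/\log(M/(M-1))$ agents are drawn, I would union-bound over these agents using the SE guarantee of Theorem~\ref{theorem0}; each SE call is run with error parameter $\gamma = \delta\log(M/(M-1))/\log(3M/\delta)$, so the (number of agents)$\times\gamma$ is at most $\delta/3$. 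Combining the two contributions gives the claimed $1-2\delta/3$.

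The only subtle point, and the step I would articulate most carefully, is to separate the event being analyzed here from the accuracy of $\overline{\mu}_S$. The present statement concerns only arm identities at the end of phase~1, not the quality of the stored mean estimates; the latter is a distinct event that is used only later to justify the $\widehat{SE}$ procedure via Proposition~\ref{proof_of_th7}, and it is paid for by the extra $32\log(12M/\delta)/\eta_1^2$ pulls, not by the failure budget in the current statement. Once this separation is made explicit, no new obstacle arises and the proof reduces to a verbatim repetition of Proposition~\ref{prop6}.
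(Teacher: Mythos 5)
Your proposal is correct and takes exactly the paper's route: the paper's own proof of this proposition is a one-line reduction to Proposition~\ref{prop6}, justified by the observation that the only change in the first phase of \texttt{BAI-Cl++} is the extra $32\log(12M/\delta)/\eta_1^2$ pulls used to estimate $\overline{\mu}_S$, which do not affect the arm-identification events; your coupling remark and the explicit separation of the arm-identity event from the accuracy of $\overline{\mu}_S$ (handled separately in Proposition~\ref{proof_of_th71}) simply make that reduction precise. The two $\delta/3$ contributions you cite are the same coupon-collector and SE union-bound steps as in the paper's proof of Proposition~\ref{prop6}.
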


\begin{proposition}\label{proof_of_th71}
After the first phase, following will hold true.
$$\mathbb{P}(|\bar{\mu_j}-\mu_j| \leq \eta_1/4, \forall j \in S) \geq 1-\delta/6 $$ 

\begin{proof}

We will pull each arm is the set $S$ at-least $\frac{32\log(12M/\delta)}{\eta_1^2}$ times. Hence using union bound over all the arms in the set $S(|S| = M)$ and Hoeffding's inequality for 1-SubGaussian random variable we have,
$$\mathbb{P}(|\bar{\mu_j}-\mu_j| \leq \eta_1/4, \forall j \in S) \geq 1-\delta/6 $$
\end{proof}
\end{proposition}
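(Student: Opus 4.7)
The plan is to establish this concentration via a direct subGaussian tail bound combined with a union bound over the $M$ arms in $S$. Recall that after running $SE$ in the first phase of \texttt{BAI-Cl++}, each sampled agent performs an additional $n := \tfrac{32\log(12M/\delta)}{\eta_1^2}$ pulls of its identified candidate arm $a_i^\ast$, and these fresh samples are averaged to produce $\bar{\mu}_j$ for each $j \in S$. Fix any $j \in S$; since the rewards are i.i.d.\ $1$-subGaussian, Hoeffding's inequality gives
\[
\mathbb{P}\bigl(|\bar{\mu}_j - \mu_j| \geq \eta_1/4\bigr) \leq 2\exp\bigl(-n\eta_1^2/32\bigr) \;=\; 2 \cdot \frac{\delta}{12M} \;=\; \frac{\delta}{6M}.
\]
A union bound over the $|S| = M$ arms then yields $\mathbb{P}(\exists\, j \in S: |\bar{\mu}_j - \mu_j| > \eta_1/4) \leq \delta/6$, and taking complements delivers the claimed bound.

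The only subtlety is that $a_i^\ast$ itself is random, being chosen by the $SE$ procedure run earlier in the first phase. To handle this cleanly, I would condition on the realization of $SE$'s output: conditional on that realization, the $n$ additional pulls are independent draws from a fixed $1$-subGaussian reward distribution, so the per-arm bound $\delta/(6M)$ holds pointwise in the conditioning and hence marginally. Given how direct this argument is, I do not anticipate a meaningful obstacle; the only thing to verify is simply that the sample budget $32\log(12M/\delta)/\eta_1^2$ was chosen so as to produce a per-arm failure probability of exactly $\delta/(6M)$, which it does by construction. Combining this proposition with Proposition~\ref{proof_of_th7} (whose hypothesis requires precisely $|\bar{\mu}_j - \mu_j| \leq \eta_1/4$ for all $j \in S$) closes the loop needed for the overall $\delta$-PC guarantee of \texttt{BAI-Cl++}.
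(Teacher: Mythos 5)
Your proof is correct and follows essentially the same route as the paper's: each arm in $S$ receives $32\log(12M/\delta)/\eta_1^2$ fresh pulls, Hoeffding's inequality for $1$-subGaussian rewards gives a per-arm failure probability of $\delta/(6M)$, and a union bound over the $M$ arms yields $\delta/6$. Your explicit conditioning on the (random) output of $SE$ to justify applying Hoeffding to $a_i^\ast$ is a detail the paper leaves implicit, but it does not change the argument.
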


\begin{proposition}\label{prop10}
    In second phase, with probability $1-\delta/6$ we will correctly detect the best arm for all the agents remaining in the set $A$
\begin{proof}
    Using union bound probability of error is $\leq N.\frac{\delta}{6N} \leq \frac{\delta}{6}$, Hence from theorem \ref{theorem0} we will detect the best arm with probability at-least $1-\delta/6$.
\end{proof}
\end{proposition}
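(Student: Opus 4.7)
The plan is to prove Proposition~\ref{prop10} by combining the per-agent correctness guarantee for $\widehat{SE}$ from Proposition~\ref{proof_of_th7} with a union bound across the agents that remain to be clustered in the second phase of \texttt{BAI-Cl++}. Recall that in this phase the learner invokes $\widehat{SE}(S, \overline{\mu}_S, \gamma, \eta, \eta_1)$ independently for every agent $i \in A$, where $S$ and $\overline{\mu}_S$ are the $M$ identified best arms and their first-phase mean estimates.

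First I would invoke Proposition~\ref{proof_of_th7}, which guarantees that, conditional on the first-phase estimates satisfying $|\bar{\mu}_j - \mu_{\mathcal{M}(j), k^\ast_{\mathcal{M}(j)}}| \leq \eta_1/4$ for every $j \in S$, a single call to $\widehat{SE}$ with target parameter $\gamma$ returns the correct best arm with probability at least $1 - \gamma$. Next I would set $\gamma = \delta/(6N)$ in each such call — the natural analog of the $\delta/(3N)$ choice used in vanilla \texttt{BAI-Cl}. Because $|A| \leq N$ and the reward samples used by $\widehat{SE}$ at different agents are mutually independent, a straightforward union bound over the at most $N$ calls would yield a joint second-phase error probability of at most $N \cdot \delta/(6N) = \delta/6$, matching the statement.

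The main subtlety, rather than a genuine mathematical obstacle, is the conditioning inherited from Proposition~\ref{proof_of_th7} on the first-phase accuracy event. This event is itself supplied with high probability by Proposition~\ref{proof_of_th71}, so for the present proposition one only needs to bound the second-phase error given (or intersected with) that first-phase success event, which is precisely what the union bound above delivers. When these pieces are later stitched together to yield the overall $\delta$-PC guarantee of Theorem~\ref{theorem7}, the first-phase error budget (at most $2\delta/3$ from Proposition~\ref{prop11}, plus $\delta/6$ from Proposition~\ref{proof_of_th71}) and the $\delta/6$ established here add up to at most $\delta$. The only point requiring care is to ensure the parameter passed to $\widehat{SE}$ is exactly $\delta/(6N)$ so that the union bound closes to the claimed $\delta/6$.
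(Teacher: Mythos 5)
Your proposal is correct and follows essentially the same route as the paper: a union bound over the at most $N$ second-phase calls, each run with per-agent error parameter $\delta/(6N)$, giving $N\cdot\delta/(6N)\le\delta/6$. If anything you are slightly more careful than the paper's own one-line proof, since you invoke the $\widehat{SE}$ guarantee of Proposition~\ref{proof_of_th7} (conditioned on the first-phase accuracy event from Proposition~\ref{proof_of_th71}) rather than the plain-$SE$ guarantee of Theorem~\ref{theorem0}, which is the appropriate per-call result for the second phase of \texttt{BAI-Cl++}.
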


\textbf{From proposition \ref{proof_of_th7} ,\ref{prop11}, \ref{proof_of_th71},\ref{prop10} total probability of error for the $BAI-Cl++$ algorithm is less than $\delta$.}

\subsection{Proof of Theorem \ref{theorem6.3}}\label{sec_proof_of_th6.3}

\begin{proposition}
    In phase \( k \), the probability that the \( SE(S, \delta_k, R = \log(1/\eta) + 1) \) subroutine returns the true best arm is at least \( 1 - \delta_k \).
\end{proposition}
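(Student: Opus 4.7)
\begin{proof*}[Proof plan]
The call $SE(S,\delta_k,R=\log(1/\eta)+1)$ is just the successive elimination subroutine of Algorithm~\ref{SE} applied to the $M$-element arm set $S = \{k_1^\ast,\ldots,k_M^\ast\}$, viewed from the perspective of the current agent $i$, whose true means are $\mu_{\mathcal{M}(i),\cdot}$. Crucially, Assumption~\ref{keyassumption1} tells us that within $S$ the best arm is $k^\ast_{\mathcal{M}(i)}$ and every other arm $j\in S$ satisfies $\mu_{\mathcal{M}(i),k^\ast_{\mathcal{M}(i)}} - \mu_{\mathcal{M}(i),j} \ge \eta$. So the plan is to run through the proof of Theorem~\ref{theorem0} almost verbatim, with the benefit that all sub-optimality gaps are already $\ge \eta$.

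Concretely, I would first establish the concentration event. After round $r$, each surviving arm has been pulled at least $8\log(4Mr^2/\delta_k)/\epsilon_r^2$ times, so by Hoeffding's inequality for $1$-subGaussian rewards, $\mathbb{P}(|\hat\mu_i-\mu_{\mathcal{M}(i),i}|\ge \epsilon_r/2) \le \delta_k/(2Mr^2)$ for any fixed $i\in\mathcal A_{r-1}$. A union bound over the at most $M$ surviving arms and all rounds $r\ge 1$ gives
\begin{equation*}
\mathbb{P}\Bigl(\bigcap_{r\ge 1}\bigl\{|\hat\mu_i-\mu_{\mathcal{M}(i),i}|\le \epsilon_r/2,\ \forall i\in \mathcal A_{r-1}\bigr\}\Bigr)\ \ge\ 1-\delta_k\sum_{r\ge 1}\frac{1}{2r^2}\ \ge\ 1-\delta_k.
\end{equation*}

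Next I would verify, on this good event, two deterministic facts, exactly mirroring the inductive argument in the proof of Theorem~\ref{theorem0}: (i) the best arm $k^\ast_{\mathcal{M}(i)}$ is never eliminated, since for any competing arm $j$, $\hat\mu_{k^\ast_{\mathcal{M}(i)}}\ge \mu_{\mathcal{M}(i),k^\ast_{\mathcal{M}(i)}}-\epsilon_r/2 \ge \mu_{\mathcal{M}(i),j}-\epsilon_r/2 \ge \hat\mu_j-\epsilon_r$; and (ii) every $j\in S\setminus\{k^\ast_{\mathcal{M}(i)}\}$ is eliminated by round $R$, because at round $R$ we have $\epsilon_R\le \eta/2$, so $\hat\mu_{k^\ast_{\mathcal{M}(i)}}-\hat\mu_j\ge \eta-\epsilon_R\ge \eta/2\ge \epsilon_R$, triggering removal. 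Combining (i) and (ii), the active set at termination contains precisely $k^\ast_{\mathcal{M}(i)}$, which is therefore the arm returned.

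The only mildly delicate point is the boundary case $\epsilon_R=\eta/2$ at the last round, where the elimination inequality $\hat\mu_j<\max-\epsilon_r$ is only achieved with weak margin. This is easily handled by either interpreting $R=\lceil\log(1/\eta)\rceil+1$ (so that $\epsilon_R<\eta/2$ strictly whenever $\eta$ is not a dyadic power of two), or by using the strict Hoeffding bound $|\hat\mu-\mu|<\epsilon_r/2$, which gives the strict separation $\hat\mu_{k^\ast_{\mathcal{M}(i)}}-\hat\mu_j>\eta-\epsilon_R\ge\epsilon_R$ needed by line~7 of Algorithm~\ref{SE}. I expect this bookkeeping to be the only nontrivial step; the rest is a direct specialization of Theorem~\ref{theorem0} to the sub-instance $(S,\mu_{\mathcal{M}(i),\cdot})$ whose minimum gap is $\ge\eta$, and the stated conclusion follows.
\end{proof*}
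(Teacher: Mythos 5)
Your proposal is correct and follows essentially the same route as the paper: the paper's (one-sentence) proof likewise observes that within $S$ the minimum gap is at least $\eta$ by Assumption~\ref{keyassumption1}, so under the standard SE good event (probability at least $1-\delta_k$, as in Theorem~\ref{theorem0}) the best arm survives and all others are eliminated within $\log(1/\eta)+1$ rounds. Your additional care about the boundary case $\epsilon_R=\eta/2$ is a reasonable piece of bookkeeping that the paper glosses over, but it does not change the argument.
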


\begin{proof}
    Since the minimum arm gap is at least \( \eta \), under the good event—which occurs with probability at least \( 1 - \delta_k \)—the algorithm returns the best arm while taking at most \( \log(1/\eta) + 1 \) rounds.
\end{proof}

\begin{proposition}\label{SE_hat_pulls}
    Given an instance \( \nu = ([N], [M], [K], \mathcal{M}, \Pi) \) satisfying Assumptions \ref{keyassumption} and \ref{keyassumption1} with parameters \( \eta \) and \( \eta_1 \), and given that  
    \[
    |\bar{\mu_j} - \mu_{\mathcal{M}(j), k^\ast_{\mathcal{M}(j)}}| \leq \frac{\eta_1}{4}, \quad \forall j \in S,
    \]
    from the first phase, the procedure \( \widehat{SE} \) will take at most  
    \[
    \mathcal{O}\left( M \cdot \eta^{-2} (\log M + \log\log \eta^{-1}) + \eta_1^{-2} \log \gamma \right)
    \]  
    pulls.
\end{proposition}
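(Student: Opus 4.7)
The plan is to decompose the total number of pulls made by $\widehat{SE}$ into (i) the cost of the inner call $SE(S,\delta_k,R=\log(1/\eta)+1)$ at each iteration $k$ and (ii) the cost of the subsequent validation pulls of the candidate $\hat a$, and then to argue that the outer while-loop terminates after only a constant number of iterations with high probability.

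For the per-iteration cost, observe that under Assumption~\ref{keyassumption1} the arms in $S$ are pairwise $\eta$-separated in the current agent's bandit and $|S|=M$, so Theorem~\ref{theorem0} bounds the $k$th inner call by $c\,M\eta^{-2}(\log(M/\delta_k)+\log\log(1/\eta))$ pulls while returning the true best arm $a^\ast$ with probability at least $1-\delta_k=1-10^{-k}$. With $\delta_k=10^{-k}$ the bracketed quantity is $O(\log M+k+\log\log\eta^{-1})$. The validation step then pulls $\hat a$ exactly $32\log(4k^2/\gamma)/\eta_1^2=O(\eta_1^{-2}(\log(1/\gamma)+\log k))$ times.

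Next, conditional on the hypothesis $|\bar\mu_j-\mu_{\mathcal{M}(j),k^\ast_{\mathcal{M}(j)}}|\le\eta_1/4$ for every $j\in S$, whenever the $k$th SE call returns $\hat a=a^\ast$, Hoeffding's inequality applied to the $32\log(4k^2/\gamma)/\eta_1^2$ validation pulls yields $\Pr[|\hat\mu_{\hat a}-\mu_{\hat a}|>\eta_1/4]\le\gamma/(2k^2)$; on the complementary event, the triangle inequality forces the check $|\hat\mu_{\hat a}-\bar\mu^{\hat a}_S|<\eta_1/2$ to succeed and the loop exits. Each iteration therefore exits with probability at least $(1-\delta_k)(1-\gamma/(2k^2))$, already at least $4/5$ at $k=1$. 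Iterating these bounds shows that the terminating index $k^\ast$ is $O(1)$ with probability at least $1-\gamma$, and $\mathbb{E}[k^\ast]=O(1)$. Summing the per-iteration bound over $k=1,\dots,k^\ast$ then collapses to $\mathcal{O}(M\eta^{-2}(\log M+\log\log\eta^{-1}))+\mathcal{O}(\eta_1^{-2}\log(1/\gamma))$, matching the claim (interpreting the paper's $\log\gamma$ as $\log(1/\gamma)$).

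The main obstacle will be making the telescoping argument rigorous: the inner SE cost grows linearly in $k$ through $\log(1/\delta_k)=k\log 10$ and the validation cost grows as $\log k$, so a naive union bound (say, $k^\ast\le\log(1/\gamma)$ with probability $1-\gamma$) would pollute the first term with an unwanted $\log(1/\gamma)$ factor. The correct argument treats the contributions of later iterations as a geometric series in $k$ weighted by the probability of surviving past iteration $k-1$, and shows that this sum is dominated by the $k=1$ contribution, which is precisely what yields the clean separation between the $M\eta^{-2}(\log M+\log\log\eta^{-1})$ and $\eta_1^{-2}\log(1/\gamma)$ terms in the final bound.
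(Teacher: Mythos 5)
Your proposal is correct and follows essentially the same route as the paper: bound the stage-$k$ cost by $O\bigl(M\eta^{-2}(\log M + k + \log\log\eta^{-1}) + \eta_1^{-2}(\log(1/\gamma)+\log k)\bigr)$, show each stage terminates with constant conditional probability so that the survival probability of stage $k$ decays geometrically, and sum the resulting series so that the $k=1$ term dominates. The only blemish is your intermediate claim that the terminating index $k^\ast$ is $O(1)$ with probability $1-\gamma$ (at that confidence level it is only $O(\log(1/\gamma))$), but your final paragraph explicitly abandons that route in favor of the expectation/geometric-series argument, which is precisely the paper's proof.
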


\begin{proof}
 Using eqn. \ref{eqth0} we bound the number of pulls in phase \( k \) by:  
\[
O\left(|S| \cdot \frac{\log \left(\frac{|S| \log \eta^{-1}}{2^{-k}}\right)}{\eta^2} \right) + O\left(\frac{\log(4k^2 / \gamma)}{\eta_1^2} \right).
\]
Denote \( E_k \) as the event that the algorithm has not terminated in stage \( k \).  
If the algorithm has not terminated in stage \( k \), then it is not the case that \( \hat{a}_k = a^* \) and ${|\hat{\mu}_a  - \overline{\mu}_S^a|  <  \eta_1/2}$.  

By a union bound, the probability that these two events \textbf{do not} occur is at most  
\[
1 - \delta_k - \frac{2\gamma}{4 k^2} \leq 1 - \left(\delta_k + \frac{2\gamma}{4} \right) \leq \frac{1}{2}.
\]

Finally, we obtain:
    \begin{align}
        \mathbb{E}_{\nu, \widehat{SE}} [T] &\leq \sum_{k=1}^{\infty} \mathbb{P}(E_{k-1}) \cdot \left\{ \frac{32}{\eta^2} \log \left(\frac{4 k^2}{\gamma} \right) + \mathbb{E}_{\nu, \widehat{SE} \, \delta_{k}} [T] \right\}, \\
        \mathbb{E}_{\nu, \text{Alg}} [T] &\lesssim \sum_{k=1}^{\infty} 2^{1-k} \cdot \left\{ \frac{32}{\eta^2} \log \left(\frac{4 k^2}{\gamma} \right) + M \cdot \frac{\log \left(\frac{M \log \eta^{-1}}{2^{-k}}\right)}{\eta^2} \right\}, \\
        \mathbb{E}_{\nu, \text{Alg}} [T] &\lesssim M \cdot \eta^{-2} (\log M + \log\log \eta^{-1}) + \eta_1^{-2} \log \gamma^{-1}.
    \end{align}
\end{proof}

\begin{proposition}\label{prop_baicl++_t1}
    No. of pulls in the first phase of the \texttt{BAI-Cl++} Algorithm will be less than

    \begin{align*}
        T_{1} &\lesssim \sum\limits_{m=1}^M\sum\limits_{i=1}^{K} \Delta_{m,i}^{-2}(\log K + \log \gamma + \log\log \Delta_{m,i}^{-1}) 
   +  M.\log(\frac{3.M}{\delta}).\max_{m \in [M]}\big\{\sum\limits_{{i=1}}^{K} \max(\eta,\Delta_{m,i})^{-2}(\log K + \log \gamma  \\ &  +\log\log \max(\eta,\Delta_{m,i})^{-1})\big\} + M\left(\frac{\log(\delta^{-1}) + \log(M)}{\eta_1^2} \right),
   \text{ where  } \gamma = \delta.\frac{\log(\frac{M}{M-1})}{\log(\frac{3.M}{\delta})}
    \end{align*}
\end{proposition}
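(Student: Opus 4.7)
\begin{proof*}[Proof Plan for Proposition~\ref{prop_baicl++_t1}]

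The plan is to decompose the first-phase budget into three disjoint contributions corresponding to (i) the bounded-round initial $SE$ on $[K]$ that every sampled agent runs, (ii) the unbounded $SE$ on $S_i$ that is invoked only when the sampled agent's bandit has not yet been seen, and (iii) the extra $32\log(12M/\delta)/\eta_1^2$ pulls of each declared best arm that are specific to the ``++'' modification. I would then sum these, conditioning on a high-probability ``good'' event on which (a) all the invocations of $SE$ behave as in Theorem~\ref{theorem0}, and (b) the number of agents sampled in the first phase is controlled by the coupon-collector argument of Claim in Proposition~\ref{prop6}, i.e.\ $|[N]\setminus A|\le \log(3M/\delta)/\log(M/(M-1))=O(M\log(3M/\delta))$.

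For the third term (extra pulls), the accounting is immediate: each of the $M$ representatives is pulled an extra $32\log(12M/\delta)/\eta_1^2$ times, giving $O\bigl(M(\log M+\log\delta^{-1})/\eta_1^2\bigr)$. For the second term (initial bounded-round $SE$), I would apply the per-arm bound of Theorem~\ref{theorem0} with $R=\log(1/\eta)+1$: an arm $i$ of bandit $m$ is guaranteed to leave the active set after at most $\lceil\log_2(1/\max(\eta,\Delta_{m,i}))\rceil$ rounds, because either its gap is at least $\eta$ so Hoeffding kills it within that many rounds, or it is already within $\eta$ of the best and survives only up to round $R$. Plugging this into the SE pull count (equation~\eqref{eqth0}) gives, per agent whose bandit index is $m$,
\begin{equation*}
\sum_{i=1}^{K}\max(\eta,\Delta_{m,i})^{-2}\bigl(\log K+\log\gamma^{-1}+\log\log\max(\eta,\Delta_{m,i})^{-1}\bigr).
\end{equation*}
Since there are at most $O(M\log(3M/\delta))$ sampled agents in the first phase, and the worst case per agent is controlled by $\max_{m\in[M]}$ of the above expression, summing over all sampled agents yields the second term in the bound.

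For the first term (extended $SE$ on $S_i$ when $S\cap S_i=\emptyset$), the key observation is that this branch is executed exactly once per distinct bandit that gets discovered, i.e.\ exactly $M$ times in total across the whole first phase. On these $M$ invocations, $SE$ is run with $R=\infty$ on $S_i\subseteq[K]$, and by Theorem~\ref{theorem0} the pull count for a representative of bandit $m$ is at most $\sum_{i=1}^{K}\Delta_{m,i}^{-2}(\log K+\log\gamma^{-1}+\log\log\Delta_{m,i}^{-1})$; summing over $m\in[M]$ delivers the first term exactly as stated. Finally I would union-bound the failure probabilities of (a), (b), and of all $O(M\log(3M/\delta))$ $SE$ calls (each of failure probability $\gamma=\delta\log(M/(M-1))/\log(3M/\delta)$), which together sum to at most $2\delta/3$ as established in Proposition~\ref{prop6}, so the pull bound holds on a $(1-2\delta/3)$-probability event.

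The only mildly delicate step is the bookkeeping that the extended branch is entered only $M$ times: this needs the high-probability guarantee from the first-phase $SE$ that $S\cap S_i$ is nonempty whenever agent $i$'s bandit has already been represented in $S$, which itself uses Assumption~\ref{keyassumption1} together with the accuracy of the bounded-round $SE$. I would handle this by re-invoking the argument already used in the proof of Theorem~\ref{theorem3}/Proposition~\ref{prop6} rather than re-deriving it, so the remaining work is routine arithmetic of adding the three terms.
\end{proof*}
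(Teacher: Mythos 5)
Your proposal is correct and follows essentially the same route as the paper: the paper's own proof is a one-liner that cites the already-established \texttt{BAI-Cl} first-phase bound (the displayed bound in the proof of Theorem~\ref{theorem4}) and adds the extra $M\left(\log(\delta^{-1})+\log M\right)/\eta_1^2$ pulls, and your three-way decomposition (bounded-round $SE$ over all $O(M\log(3M/\delta))$ sampled agents, the $R=\infty$ extension entered once per newly discovered bandit, and the ``++'' re-pulls) is exactly how that cited bound is derived there. The only cosmetic difference is that you write $\log\gamma^{-1}$ where the statement writes $\log\gamma$, which is if anything a correction of a sign typo in the paper.
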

\begin{proof}
    In the first phase of \texttt{BAI-Cl++} we only require an additional $M\cdot\left(\frac{\log(\delta^{-1}) + \log(M)}{\eta_1^2} \right)$ pulls compared to \texttt{BAI-Cl}, Hence from  eqn. \ref{BAI-CL_t1} we can conclude the proposition.
\end{proof}

\begin{proposition}\label{prop_baicl++_t2}
    The number of pulls in the second phase of the \texttt{BAI-Cl++} algorithm will be at most  
    \begin{align}
       T_2 &\lesssim N \cdot M \cdot \eta^{-2} (\log M + \log\log \eta^{-1}) + N \cdot \eta_1^{-2} (\log \delta^{-1} + \log N).
    \end{align}
\end{proposition}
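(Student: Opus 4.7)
The plan is to reduce Proposition \ref{prop_baicl++_t2} directly to the single-agent bound in Proposition \ref{SE_hat_pulls} and sum across the at most $N$ agents processed in the second phase. Recall that in the second phase of \texttt{BAI-Cl++}, the learner sends the set $S$ (of the $M$ identified best arms) and the estimated mean vector $\overline{\mu}_S$ to each remaining agent, who then runs $\widehat{SE}(S, \overline{\mu}_S, \gamma, \eta, \eta_1)$ with target error probability $\gamma = \delta/(3N)$. So the bookkeeping is essentially a union-bound-and-sum argument.

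First I would condition on the good events established in Propositions \ref{prop11} and \ref{proof_of_th71}, namely that the first phase correctly identifies all $M$ best arms and that $|\bar{\mu}_j - \mu_{\mathcal{M}(j), k^\ast_{\mathcal{M}(j)}}| \leq \eta_1/4$ for all $j \in S$. These events hold with probability at least $1 - 5\delta/6$ (combining the $2\delta/3$ bound from Proposition \ref{prop11} with the $\delta/6$ bound from Proposition \ref{proof_of_th71}), so the hypotheses of Proposition \ref{SE_hat_pulls} are satisfied with high probability for every agent that enters the second phase.

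Next I would invoke Proposition \ref{SE_hat_pulls} with the parameter $\gamma = \delta/(3N)$. This gives, for each agent $i \in A$, an upper bound on the number of pulls of order
\begin{equation*}
M \cdot \eta^{-2}\bigl(\log M + \log\log \eta^{-1}\bigr) + \eta_1^{-2}\log\bigl(3N/\delta\bigr).
\end{equation*}
Since $|A| \le N$, summing over all second-phase agents yields
\begin{equation*}
T_2 \lesssim N \cdot M \cdot \eta^{-2}\bigl(\log M + \log\log \eta^{-1}\bigr) + N \cdot \eta_1^{-2}\bigl(\log N + \log \delta^{-1}\bigr),
\end{equation*}
as required. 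A final union bound over the $N$ invocations of $\widehat{SE}$ keeps the overall failure probability in the second phase at most $\delta/3$, consistent with the $\delta$-PC guarantee of Theorem \ref{theorem7}.

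The only subtlety is that Proposition \ref{SE_hat_pulls} is stated in expectation, whereas we want a high-probability statement. I would handle this either by a standard Markov-type argument coupled to the geometric-phase structure of $\widehat{SE}$ (the probability that stage $k$ is reached decays geometrically, so the tail is well-controlled), or more directly by observing that on the good event (which holds with probability at least $1 - \gamma$ per agent) the algorithm terminates in the first stage and attains the stated bound deterministically up to constants. Taking a union bound across agents gives the desired high-probability guarantee, completing the proof.
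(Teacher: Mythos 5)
Your proposal is correct and follows essentially the same route as the paper, which likewise obtains the bound by invoking Proposition \ref{SE_hat_pulls} with a per-agent confidence parameter of order $\delta/N$ and summing over the at most $N$ second-phase agents (the paper uses $\gamma = \delta/6N$ rather than your $\delta/3N$, which is immaterial order-wise). Your added care about conditioning on the first-phase good events and about converting the expectation bound of Proposition \ref{SE_hat_pulls} into a high-probability statement addresses details the paper's one-line proof leaves implicit.
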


\begin{proof}
    From Proposition \ref{SE_hat_pulls} and setting \( \gamma = \delta / 6N \), we directly obtain the given claim.
\end{proof}

Adding the number of pulls from Propositions \ref{prop_baicl++_t1} and \ref{prop_baicl++_t2} completes our proof.

\subsection{Proof of Theorem~\ref{theorem8}}\label{sec_proof_of_th8}
We start with the first bound. Consider an alternate instance $\nu' \in \mathcal{I}$ which is identical to $\nu$, except that for some $m \in [M]$ and $k > M$, the mean reward for arm $k$ in bandit $m$ is changed to $\mu + 2\eta$. Note that the set of best arms under $\nu$ and $\nu'$ are distinct; in particular, they are $[M]$ and $([M] \setminus \{m\}) \cup \{k\}$ respectively. Hence, any feasible algorithm should be able to reliably infer if the underlying instance is $\nu$ or $\nu'$. Then from \citep[Lemma 1]{kaufmann2016complexity} based on a `change of measure' technique, we have the following lower bound on the expected number of total pulls of arm $k$ by agents learning bandit $m$ under instance $\nu$:
\begin{equation*}
        \mathbb{E}[T_{m,k}^{\nu}(\mathcal{A})] \geq \frac{\log(1/2.4\delta)}{D(\mu,\mu + 2\eta)} = \frac{\log(1/2.4\delta)}{4\eta^2}
\end{equation*}
where $D(a,b)$ denotes the Kullback-Leibler divergence between two Gaussian distributions with means $a$ and $b$ respectively, and is equal to $(a-b)^2$. Summing over all possible alternate best arms $k$ and bandits $m$, we get the first lower bound
\begin{equation}
\label{LB:Eq1}
\mathbb{E}[T_\delta^\nu(\mathcal{A})] \ge M\cdot(K-M) \cdot \frac{\log(1/4\delta)}{4\eta^2} .
\end{equation}
\remove{
create an alternate instance $\nu'$ where mean of the $k^{th}$ arm of $i^{th}$ bandit has been changed to $\mu + 2.\eta$. This is a valid alternate instance as the best arm for this case is different, and this instance will be in our class of valid instances if $k>M$.
we can impose the following constraint on expected no. of pulls of the $k^{th}$ arm for any $\delta$ correct algorithm from change of measure technique as follows:
\begin{equation*}
        E(T_{i,k})_{\nu,\pi} \geq \frac{log(1/2.4\delta)}{D(\nu,\nu')} = \frac{log(1/4\delta)}{4\eta^2}
\end{equation*}

This puts a bound on no. of pulls of $k^{th}$ arm of $i^{th}$ bandit, we can say the same for all the arms of a bandit which aren't a best arm of any other bandit, that is K-M arms and we can say for the same for all M bandits. Hence,
\begin{equation*}
    E( T_{\nu,\pi}) \geq (K-M).M. \frac{log(1/4\delta)}{4\eta^2}
\end{equation*}
}

We now demonstrate the second lower bound in the expression of the theorem. Again, consider the instance $\nu \in \mathcal{I}$ as defined before. Here, for each agent $i$, we lower bound the total number of samples required from that agent to reliably infer which of the $M$ bandits it is learning. Assume that agent $i$ is learning bandit $m$ (with best arm $m$) under the instance $\nu$; and consider an alternate instance $\nu'$ where it is mapped to a different bandit $m'$ (which by definition has a different best arm $m'$). Note that under either mapping, the mean rewards of all the arms remains the same except arms $m$ and $m'$ for which the mean reward is switched from $\mu + \eta$ to $\mu$ and vice-versa.  Clearly, any feasible algorithm should be able to reliably distinguish between the original and alternate problem instances. 

Again, using \citep[Lemma 1]{kaufmann2016complexity}, we have the following lower bound on the expected number of pulls of arms $m$ and $m'$ by agent $i$ under instance $\nu$:
$$
\mathbb{E}[T^{\nu}_{i,m}(\mathcal{A}) + T^{\nu}_{i,m'}(\mathcal{A})] \ge \frac{\log(1/2.4\delta)}{D(\mu, \mu+ \eta)} = \frac{\log(1/2.4\delta)}{\eta^2}
$$
which also serves as a lower bound on the expected number of pulls by agent $i$. Since each agent is independently mapped to a bandit, the total number of pulls across all agents has to satisfy the following lower bound:
\begin{equation}
\label{LB:Eq2}
\mathbb{E}[T_\delta^\nu(\mathcal{A})] \ge N \cdot \frac{\log(1/2.4\delta)}{\eta^2} .
\end{equation}
Combining \eqref{LB:Eq1} and \eqref{LB:Eq2} completes the proof.

\subsection{Instance Dependent Lower Bound}
\label{Sec:IDLB}
We can write the following bound on the total expected no. of pulls for any $\delta-PAC$ algorithm for any instance $\nu$ satisfying assumption 1.
\begin{align}
\begin{split}
    \mathbb{E}[T^\nu] \geq \max\left( \sum_{m \in M} \sum_{k \in S_{m,\eta}} \frac{\log(1/2.4\delta)}{D(\mu_{\mathcal{M}(i),k}, \mu_{\mathcal{M}(j),k})}, \right. \\
    \left. \sum_{i \in [N]} \min_{k \in [K]} \max_{j \in \{j \mid \mathcal{M}(j) \neq \mathcal{M}(i)\}} \frac{\log(1/2.4\delta)}{D(\mu_{\mathcal{M}(i),k}, \mu_{\mathcal{M}(j),k})} \right)
\end{split}
\end{align}

Similar to the proof in theorem \ref{theorem8}, we will show two lower bounds one for each sub-task, first we show lower bound on identifying for each agent the index of bandit problem that it is learning. 
Consider an instance $\nu = ([N], [M], [K], \mathcal{M},\Pi)$ in the set of feasible instances(satisfying assumption 1). Now, assume agent $i$ is learning bandit $m$ in the instance $\nu$; consider an alternate instance where it gets mapped to bandit $m' \neq m$, any correct algorithm should be able to distinguish that for all $m' \neq m$.
using \cite[Lemma 1]{kaufmann2016complexity}, we can write eq. \ref{eqlb2}
\begin{align}
 \sum_{k \in [K]} \mathbb{E}[T^{\nu}_{i,k}].D(\mu_{\mathcal{M}(i),k},\mu_{\mathcal{M}(j),k}) &\geq  \log(1/2.4\delta)\nonumber\\
 \forall j \in \{j|\mathcal{M}(j) \neq \mathcal{M}(i)\}\label{eqlb2}
\end{align}

We can further modify eq.\ref{eqlb2} to bound total no. of pulls for an agent $i$ as in the following eqn.
$$
\mathbb{E}[T^{\nu}_{i}] \geq  \min_{k \in [K]}\max_{j \in \{j|\mathcal{M}(j) \neq \mathcal{M}(i)\}}\frac{\log(1/2.4\delta)}{D(\mu_{\mathcal{M}(i),k},\mu_{\mathcal{M}(j),k})}
$$
hence, total no. of pulls for a problem instance can be bounded as follows.
\begin{equation}  
\mathbb{E}[T^\nu] \geq \sum_{i \in [N]} \mathbb{E}[T^{\nu}_{i}]
\end{equation}

Next, we show a lower bound on the no. of pulls for identifying the best arm of each bandit.
Consider the instance $\nu = \left([N], [M], [K], \mathcal{M},\Pi\right)$ with clustering parameter equal to $\eta$. now for a bandit $m$ consider the set of arms $S_{m,\eta}$ which are at-least $\eta$ worse than the best arm in all the other bandits i.e
$$ S_{m,\eta} = \{k| \mu_{m',k} \leq \mu_{m',k^\ast_{m'}} - \eta,  \forall m' \in [M]\backslash m\} $$
We can bound the no. of pulls for a bandit $m$ by change of measure technique, as we for all $m$ in $[M]$ for all arm $k$ in $S_{m,\eta}$ we can consider an alternate instance $\nu'$ where we alter the mean of $k^{th}$ arm to $\mu_{m,k^\ast_{m}} + \epsilon$ where $\epsilon$ can be arbitrarily small. The instance $\nu'$ will be an alternate instance as it has different arm for bandit $m$ and it will satisfy the assumption \ref{keyassumption}. Hence, the expected no. of pulls for an arm $k$ of a bandit $m$ is bounded by,
$$ \mathbb{E}[T^\nu_{m,k}].D(\mu_{\mathcal{M}(i),k},\mu_{\mathcal{M}(j),k}) \geq \log(1/2.4\delta), \forall m \in [M] \forall k \in S_{m,\eta}$$
Further, total no. of pulls can be bounded by,
\begin{equation}  
\mathbb{E}[T^\nu] \geq \sum_{m \in M}\sum\limits_{k \in S_{m,\eta}}\frac{\log(1/2.4\delta)}{D(\mu_{\mathcal{M}(i),k},\mu_{\mathcal{M}(j),k})}
\end{equation} 
 
\subsection{Additional numerical results}
\label{Sec:Additional}
\subsubsection{Yelp dataset}
\begin{figure*}[h]
    \centering
    \subfloat[]{\includegraphics[width=0.4\textwidth]{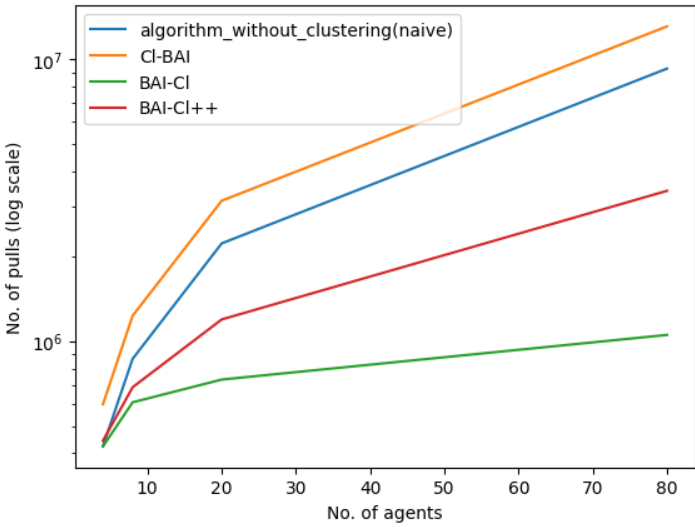}\label{subfig:yelp}}
    \hfill
    \subfloat[]{\includegraphics[width=0.45\textwidth]{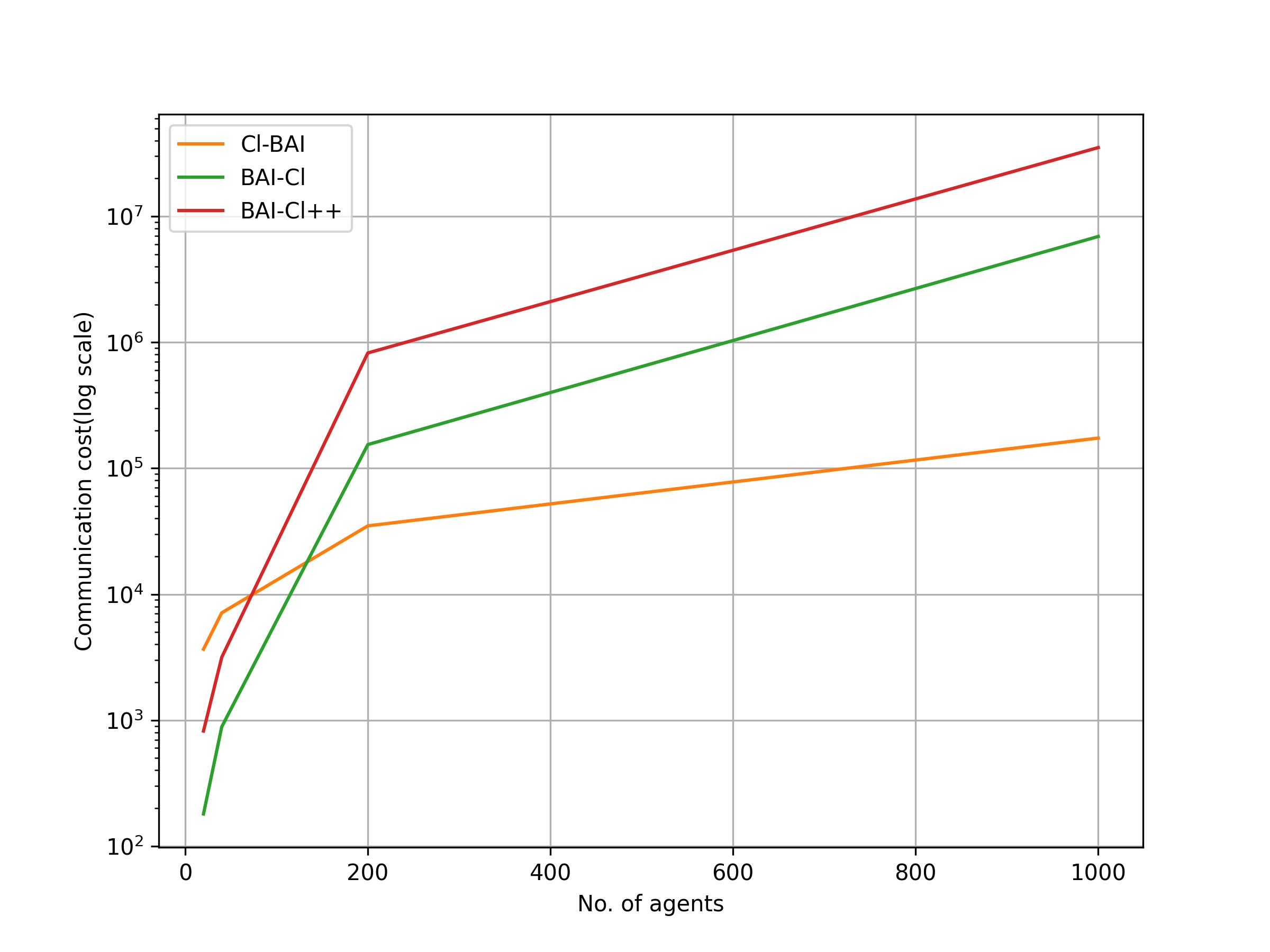}\label{subfig:random}}
    \hfill
    \subfloat[]{\includegraphics[width=0.4\textwidth]{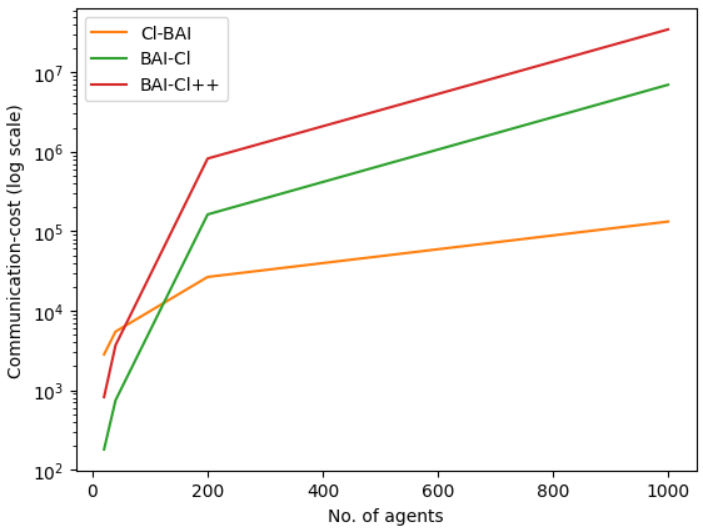}\label{subfig:fixed}}
    \caption{(a) Performance with varying number of agents $N$ for Yelp dataset. (b)(c) Communication cost with varying number of agents $N$ for datasets 2, 3.}
    \label{Fig:Additional}
\end{figure*}

We perform experiments using the \textit{Yelp}\footnote{ https://www.yelp.com/dataset } dataset, which contains  ratings for various businesses given by users across different states of the US. We consider $M = 4$ states with the highest number of ratings as our bandits, namely Louisiana, Tennessee, Missouri, and Indiana. We identify $K = 211$ businesses which are present in all the selected states, and these constitute the bandit arms. For each bandit and arm pair (state-business pair), we assign its expected reward to be the average review score the corresponding business got from users in that state. We assume all reward distributions to be $1$-Gaussian with the appropriate means. 

We find that each of the $4$ bandits (states) has a distinct best arm (business with highest average rating). For example, the highest rated business in Louisiana is `Painting with a Twist', while it is `Nothing Bundt Cakes'
  in Indiana. In fact, the dataset satisfies Assumptions~\ref{keyassumption1} and \ref{keyassumption} with clustering parameters $\eta = 0.375$ and $\eta_1 = 0.166$ respectively.

As before, we assume that there are $N$ agents divided into $M$ clusters, each of size $N/M$, and mapped to one of the $M$ bandits. The goal of the learner is to identify the best arm (highest rated business) for each agent. 

Figure~\ref{Fig:Additional}(a) plots the average sample complexity for the various schemes as we vary $N$. Our results demonstrate that clustering-based methods, especially \texttt{BAI-Cl++}, significantly reduce the sample complexity compared to the naive scheme. \texttt{BAI-Cl} also achieves competitive performance but is less efficient than \texttt{BAI-Cl++}. Also, we see that both the naive scheme and
Cl-BAI have poor performance. Again, this is consistent
with Remark~\ref{Comp:NaiveClBaI} since the clustering parameter $\eta$ and the
individual bandit arm reward gaps are close in this case.

\subsubsection{Communication cost}
We will assume a cost of $c_b = 1$ unit for communicating each bit, and $c_r = 32c_b = 32$ units for communicating a real number. 

Figure~\ref{Fig:Additional}(b)(c) plots the the overall communication cost of the various algorithms for datasets $2$ and $3$ (synthetically generated, as described in Section~\ref{Sec:Numerics}), while varying the number of agents $N$. We can see that the communication cost of \texttt{Cl-BAI} is lower than that of \texttt{BAI-Cl}, which is itself a little smaller than for \texttt{BAI-Cl++}. Note that this order is opposite of that observed for sample complexity, thus indicating a trade-off between the two quantities. This behaviour is also consistent with our observations in Remarks~\ref{rem:Cl-comm} and \ref{Rem:CommBAI-Cl}. 
\remove{
Here is the plot for the yelp dataset with M = 4, States = [ 'LA', 'TN', 'MO', 'IN'], K = 211, p = 10**-10, Delta = 0.375, Delta1 = 0.1666666666. 

Description:

We first load the Yelp business dataset and filter businesses from the top 4 states with the highest total ratings. Next, we identify businesses present in all selected states to ensure consistency across bandits. For each common business, we compute its average rating in each state, forming a 2D array (average_ratings_array) where rows represent businesses (arms) and columns represent states (bandits). We then determine the best business (highest mean rating) for each state and construct a 4x4 best arm mean array, where each column represents a best business from a state, and rows show its rating in all states. To refine the setup, we identify and remove businesses with the same mean rating as any best arm, ensuring only unique best arms remain. The final filtered average_ratings_array provides a clean multi-armed bandit (MAB) setup where each state acts as a bandit. We run our algorithms 50 times, record the number of pulls in each run, and take the average number of pulls as the complexity of the algorithm.
}
\end{document}